\documentclass[11pt]{article}




\usepackage[utf8]{inputenc} 
\usepackage[T1]{fontenc}    
\usepackage[colorlinks, linkcolor=blue, citecolor=black]{hyperref}       
\usepackage{url}            
\usepackage{booktabs}       
\usepackage{amsfonts}       
\usepackage{nicefrac}       
\usepackage{microtype}      
\usepackage{xcolor}         
\usepackage{mathtools, nccmath}
\usepackage{minitoc}
\usepackage[round]{natbib}
\usepackage[inline, nomargin, draft]{fixme}
\fxsetup{theme=color}
\definecolor{fxnote}{rgb}{0.858, 0.188, 0.478}

\usepackage{microtype}
\usepackage{graphicx}
\usepackage{booktabs} 

\usepackage{comment}
\usepackage{pgfplots}
\usepackage{url}

\usepackage{amsfonts}       
\usepackage{amsmath}
\usepackage{amsfonts}
\usepackage{amssymb}
\usepackage{comment}
\usepackage{pythonhighlight}

\usepackage{amsthm}
\usepackage{amscd}
\usepackage{t1enc}
\usepackage{enumerate}
\usepackage{indentfirst}
\usepackage{listings}
\usepackage{graphicx}
\usepackage{graphics}
\usepackage{pict2e}
\usepackage{epic}
\usepackage{epstopdf} 
\usepackage{listings}
\usepackage{minitoc}

\newcommand{\cA}{\mathcal{A}}

\newcommand{\cD}{\mathcal{D}}

\newcommand{\cG}{\mathcal{G}}
\newcommand{\cH}{\mathcal{H}}
\newcommand{\cI}{\mathcal{I}}
\newcommand{\cJ}{\mathcal{J}}

\newcommand{\cL}{\mathcal{L}}
\newcommand{\cM}{\mathcal{M}}

\newcommand{\cR}{\mathcal{R}}

\newcommand{\cY}{\mathcal{Y}}

\newcommand{\sysL}{L_{\mathrm{def}}^{0{-}1}}
\newcommand{\sysLhat}{\hat{L}_{\mathrm{def}}^{0{-}1}}
\newcommand{\bE}{\mathbb{E}}
\newcommand{\bP}{\mathbb{P}}
\newcommand{\bI}{\mathbb{I}}

\newcommand{\err}{\textrm{err}}

\renewcommand{\P}{\mathbf{P}}

\usepackage{amsmath}

\DeclareMathOperator*{\arginf}{arg\,inf}

\DeclareMathOperator*{\altargmin}{arg\,min}

\usepackage{empheq}

\renewcommand{\tilde}{\widetilde}
\renewcommand{\nu}{\vartheta}

\usepackage{float}
\usepackage{bm}
\usepackage{subcaption}
\usepackage{wrapfig}
\newtheorem{theorem}{Theorem}
\newtheorem{corollary }{Corollary }
\newtheorem*{theorem*}{Theorem}
\newtheorem{lemma}{Lemma}
\newtheorem{proposition}{Proposition}
\newtheorem{definition}{Definition}

\newtheorem{corollary}{Corollary}
\usepackage{apptools}
\AtAppendix{\counterwithin{lemma}{section}}
\AtAppendix{\counterwithin{theorem}{section}}
\AtAppendix{\counterwithin{corollary}{section}}
\theoremstyle{definition}
\usepackage{algorithm}
\usepackage{algorithmic}

\newtheorem{assumption}{Assumption}

\newenvironment{sproof}{%
  \proof}{\endproof}
\usepackage{environ}
\usepackage{tikz}
\usepackage{float}


\newcounter{todos}

\DeclareMathOperator*{\minimize}{\text{minimize}}

\renewcommand{\P}{\mathbb{P}}
\newcommand{\E}{\mathbb{E}}

\newcommand{\realizablesurrogate}{\texttt{RealizableSurrogate} }
\newcommand{\realizablesurrogatenosp}{\texttt{RealizableSurrogate}}

\usepackage{epstopdf} 
\usepackage{wrapfig}
\usepackage{algorithm}
\usepackage{algorithmic}
\usepackage{amsthm}
\usepackage{tabularx}
\usepackage{float}
\usepackage{setspace}
\usepackage{natbib}
\bibliographystyle{abbrvnat}
\usepackage{fancyhdr}

\onehalfspacing

\allowdisplaybreaks
\usepackage{authblk}

\usepackage[left=1in, right =1in, top=1in, bottom=1in]{geometry}
\pgfplotsset{compat=1.14}
\usepackage{pythonhighlight}
\pagenumbering{arabic}
\usepackage[most]{tcolorbox}

\newtcbox{\mymath}[1][]{%
    nobeforeafter, math upper, tcbox raise base,
    enhanced, colframe=black!30!black,
    colback=white!30, boxrule=1pt,
    #1}

%
%



\title{\textbf{Who Should Predict? Exact Algorithms For Learning to Defer to Humans}}
\pagestyle{fancy}

\author[1,2]{Hussein Mozannar}
\author[1,2]{Hunter Lang}
\author[1,3]{Dennis  Wei}
\author[1,3]{Prasanna  Sattigeri}
\author[1,3]{Subhro Das}
\author[1,2]{David Sontag}

\affil[1]{MIT-IBM Watson AI Lab}
\affil[2]{Massachusetts Institute of Technology}
\affil[3]{IBM Research}

\date{}
\begin{document}

%

%

\maketitle

\begin{abstract}
Automated AI classifiers should be able to defer the prediction to a human decision maker to ensure more accurate predictions. In this work, we jointly train a classifier with a \emph{rejector}, which decides on each data point whether the classifier or the human should predict. We show that prior approaches can fail to find a human-AI system with low misclassification error even when there exists a linear classifier and rejector that have zero error (the \emph{realizable} setting). 
 We prove that obtaining a linear pair with low error is NP-hard even when the problem is realizable. To complement this negative result, we give a mixed-integer-linear-programming (MILP) formulation that can optimally solve the problem in the linear setting. However, the MILP only scales to moderately-sized problems. 
 Therefore, we provide a novel surrogate loss function that is realizable-consistent and performs well empirically. We test our approaches on a comprehensive set of datasets and compare them to a wide range of baselines.
\end{abstract}

\section{Introduction}
\label{sec:intro}

AI systems are frequently used in combination with human decision-makers, including in high-stakes settings like healthcare \citep{beede2020human}. 
In these scenarios, machine learning predictors should be able to defer to a human expert instead of predicting difficult or unfamiliar examples.
However, when AI systems are used to provide a \emph{second opinion} to the human, prior work shows that humans over-rely on the AI when it is incorrect \citep{jacobs2021machine,mozannar2021teaching}, and these systems rarely achieve performance higher than either the human or AI alone \citep[Proposition 1]{liu2021understanding}.
This motivates deferral-style systems, where \emph{either} the classifier or the human predicts, to avoid over-reliance.

\begin{figure}[h]
    \centering
    \includegraphics[width=0.8\textwidth]{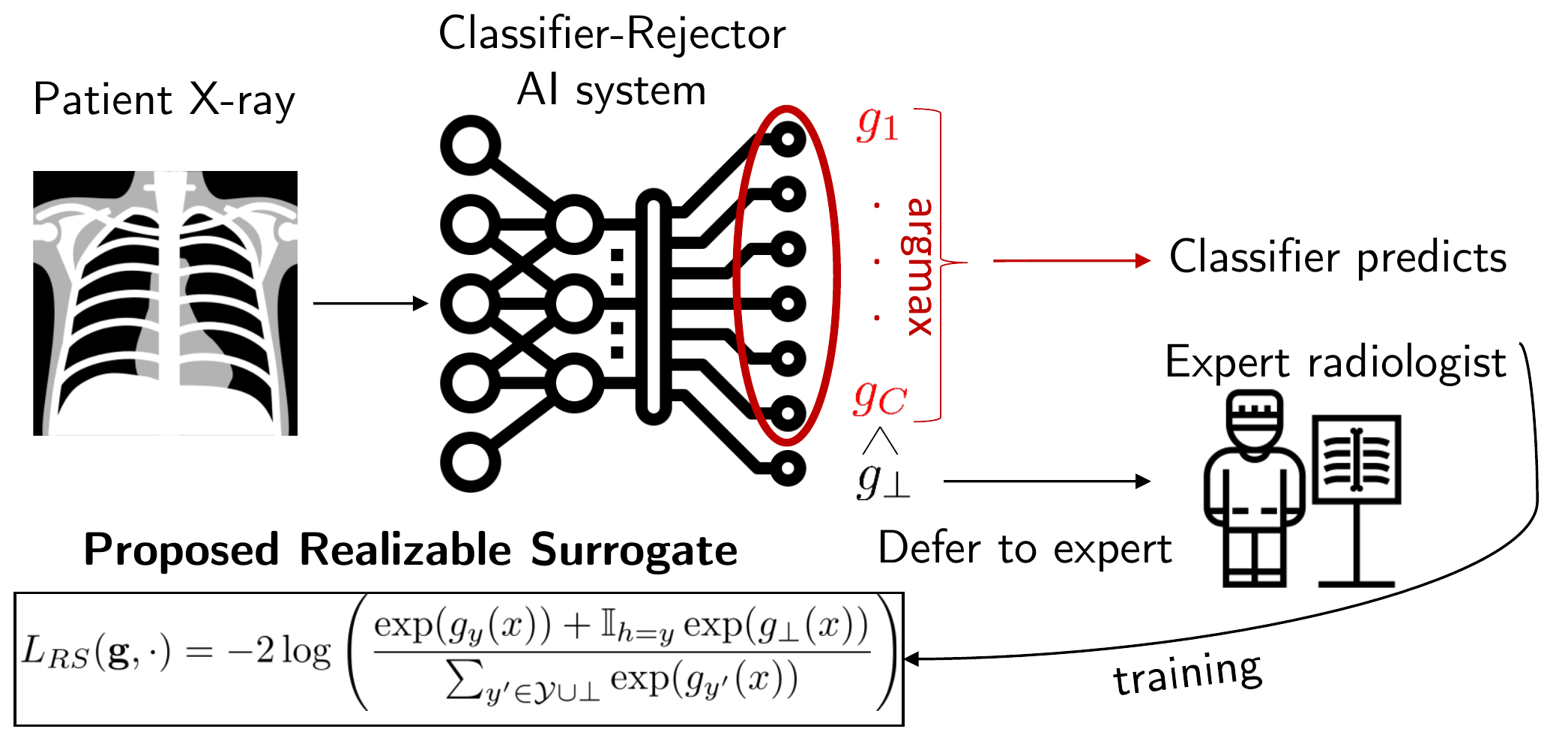}
    \caption{The learning to defer setting with the \realizablesurrogate illustrated in the application of making predictions for chest X-rays. }
    \label{fig:setting_realizable_surr}
\end{figure}

As a motivating example, suppose we want to build an AI system 
to predict the presence of pneumonia from a patient's chest X-ray, jointly with a  human radiologist. 
The goal of this work is to jointly learn a classifier that can predict pneumonia and a \textit{rejector}, which decides
on each data point whether the classifier or the
human should predict illustrated in Figure \ref{fig:setting_realizable_surr}. 
By learning the classifier jointly with the rejector, the aim is for the classifier to \emph{complement} the radiologist so that the Human-AI team performance is higher. 
We refer to the error rate of the Human-AI team as the \emph{system error}.

\textbf{Failure of Prior Approaches.} 
Existing literature has focused on \emph{surrogate loss functions} for deferral \citep{madras2018predict,mozannar2020consistent, verma2022calibrated} and confidence-based approaches \citep{raghu2019algorithmic,okati2021differentiable}. 
We give a simple synthetic setting where all of these approaches fail to find a classifier/rejector pair with a low system error.
In this setting, there exists a halfspace classifier and halfspace rejector that have zero system error (illustrated in Figure \ref{fig:expert_gauss}), but our experiments in Section \ref{sec:exp_synthetic} demonstrate that all prior approaches fail to find a good classifier/rejector pair in this setting.

To understand possible reasons for this failure, we first study the \textbf{computational complexity} of learning with deferral using halfspaces for the rejector and the classifier, which we call LWD-H. 
The computational complexity of learning with deferral has received little attention in the literature.
We prove that even in our simple setting where the data is realizable (i.e., there exists a halfspace classifier and halfspace rejector achieving zero system error), there is no polynomial-time algorithm that finds an approximately optimal pair of halfspaces unless $NP=RP$. 
We also extend our hardness result to approximation algorithms and when the data is not realizable by halfspaces.
In contrast, training a linear classifier in the realizable linear setting can be solved in polynomial time with linear programming \citep{boyd2004convex}.

Learning with deferral using halfspaces is also of significant \emph{practical} importance.
Sample efficiency is critical in learning with deferral since the training data is expensive to collect---it requires both human outputs  \emph{and} ground-truth labels. This motivates restricting to smaller model classes, and in particular to linear classifiers and rejectors. Linear models have the benefit of being interpretable with respect to the underlying features, which can be crucial for a human-AI deferral system. 
Additionally, the \emph{head tuning} or \emph{linear probing} paradigm, where only the final (linear) layer of a pretrained deep neural network is fine-tuned on different tasks, has become increasingly common as pretrained representations improve in quality, and it can be more robust than full fine-tuning \citep{kumar2021fine}.
However, as previously mentioned,  existing surrogate approaches fail to find a good  linear classifier and rejector even when one is guaranteed to exist. This motivates the need for an algorithm for {\bf exact minimization}
of the system training error.

We show that exact minimization of the system error can be formulated as a \textbf{mixed integer linear program} (MILP). 
This derivation overcomes a naive quadratic formulation of the problem. 
In addition to exactly minimizing the training loss, the MILP formulation allows us to easily incorporate constraints to govern the human-AI system. 
We show that modern commercial solvers such as Gurobi \citep{gurobi} are capable of solving fairly large instances of this MILP, making it a practical algorithm for the LWD-H problem.
To obtain similar gains over prior approaches, but with a more scalable algorithm, we develop a new differentiable surrogate loss function $L_{RS}$, dubbed \textbf{\realizablesurrogate}, that can  solve the LWD-H problem in the realizable setting by virtue of being \textit{realizable-consistent} \citep{long2013consistency} for a large class of hypothesis sets that includes halfspace classifier/rejector pairs.
We also show empirically that $L_{RS}$ is competitive with prior work in the non-linear setting.

In section \ref{sec:problem}, we formalize the learning with deferral problem.
We then study the computational complexity of LWD-H in section \ref{sec:linear_case}. We introduce our MILP approach in section \ref{sec:milp} and our new surrogate \textbf{\realizablesurrogatenosp} in section \ref{sec:new_method}. 
In section \ref{sec:experiments}, we evaluate our algorithms and baselines on a wide range of benchmarks in different domains, providing the most expansive evaluation of expert deferral algorithms to date.
To summarize, the contributions of this paper are the following:

\begin{itemize}
    \setlength\itemsep{0em}
    \item \textbf{Computational Complexity of Deferral:} We prove the computational hardness of PAC-learning with deferral in the linear setting.
    \item \textbf{Mixed Integer Linear Program Formulation and \realizablesurrogate:} We show how to formulate learning to defer with halfspaces as a MILP and provide a novel  surrogate loss.
    \item \textbf{Experimental Evaluation:} We showcase the performance of our algorithms on a wide array of datasets and compare them to several existing baselines. We contribute a publicly available repository with implementations of existing baselines and datasets: 
    \url{https://github.com/clinicalml/human_ai_deferral}
\end{itemize}

\section{Related Work}\label{sec:related_work}

A natural baseline for the learning to defer problem is to first learn a classifier that minimizes average misclassification error, then learn a model that predicts the probability that the human makes an error on a given example, and finally defer if the probability that the classifier makes an error is higher than that of the human. 
This is the approach adapted by \citet{raghu2019algorithmic}. 
However, this does not allow the classifier to adapt to the human.
Another natural approach is to model this problem as a mixture of experts: the human and the AI. 
This is the approach introduced by \citet{madras2018predict} and adapted by \citet{wilder2020learning,pradier2021preferential} by introducing a mixture of experts surrogates. However, this approach has been to shown to fail empirically as the loss is not easily amenable to optimization.
Subsequent work \citep{mozannar2020consistent} introduced consistent surrogate loss functions for the learning with deferral objective, with follow-up approaches addressing limitations including better calibration \citep{raman2021improving,liu2021incorporating}. 
Another consistent convex surrogate was proposed by \citet{verma2022calibrated} via a one-vs-all approach. 
\citet{charusaie2022sample} provides a family of convex surrogates for learning with deferral that generalizes prior approaches, however, our proposed surrogate does not belong to that family. 
\citet{keswani2021towards} proposes a surrogate loss  which is the sum of the loss of learning the classifier and rejector separately but that is not a consistent surrogate. 
\citet{de2020regression} proved the hardness of linear regression (not classification) where some training points are allocated to the human (not deferral but subset selection of the training data).
Finally, \citet{okati2021differentiable} proposes a method that iteratively optimizes the classifier on points where it outperforms the human on the training sample, and then learns a post-hoc rejector to predict who between the human and the AI has higher error on each point.  
\looseness=-1 
The setting when the cost of deferral is constant has a long history in machine learning and goes by the name of rejection learning  \citep{cortes2016learning, chow1970optimum,bartlett2008classification,charoenphakdee2021classification} or selective classification (only predict on $x$\% of data) \citep{el2010foundations,geifman2017selective,gangrade2021selective,acar2020budget}.  Our MILP formulation is inspired by work in binary linear classification that optimizes the 0-1 loss exactly \citep{ustun2016supersparse,nguyen2013algorithms}. 

\section{Learning with Deferral: Problem Setup}\label{sec:problem}

We frame the learning with deferral setting as the task of predicting a target $Y \in \mathcal{Y}=\{1,\cdots,C\}$. 
The classifier has access to features $X \in \mathcal{X} = \mathbb{R}^d$, while the human (also referred to as the expert) has access to a potentially different set of features $Z \in \mathcal{Z}$ which may include side-information beyond $X$.
The human 
is modeled as a fixed predictor $h: \mathcal{Z} \to \mathcal{Y}$. 
The AI system consists of a classifier $m: \mathcal{X} \to \mathcal{Y}$ and a rejector $r: \mathcal{X} \to \{0,1\}$. 
When $r(x)=1$, the prediction is deferred to the human and we incur a cost if the human makes an error, plus an additional, optional penalty term: $\ell_{\textrm{HUM}}(x,y,h) = \bI_{h \neq y} + c_{\textrm{HUM}}(x,y,h)$. When $r(x)=0$, then the classifier makes the final decision and incurs a cost with a different 
optional penalty term: $\ell_{\textrm{AI}}(x,y,m) = \bI_{m \neq y} + c_{\textrm{AI}}(x,y,m)$.
We can put this together into a loss function for the classifier and rejector:
\begin{align}
   \nonumber &L_{\mathrm{def}}(m,r)=  \bE_{X, Y, Z} \ [ \ \ell_{\textrm{AI}}\big(X,Y,m(X)\big) \cdot \bI_{r(X) = 0}  +  \ell_{\textrm{HUM}}(X,Y,h(Z)) \cdot \bI_{r(X)=1} \  ].  
\end{align}
In this paper we focus mostly on the cost of misclassification with no additional penalties, the deferral loss becomes a misclassification loss $L_{\mathrm{def}}^{0{-}1}(m,r)$ for the human-AI system, and the optimization problem is:
\begin{align}
    &\minimize_{m, r} L_{\mathrm{def}}^{0{-}1}(m,r) := \P \left[ \left( \left(1-r(X) \right)m(X) + r(X) h(Z) \right) \ne Y \right].
    \label{eq:01_reject_loss}    
\end{align}

\textbf{Constraints.}
We may wish to constrain the behavior of the human-AI team when learning the classifier and rejector pair. 
For example, we may have a limit on the percentage of times the AI can defer to the human, due to the limited time the human may have. 
We express this as a \textbf{coverage} constraint:
\begin{equation}\label{eq:coverage}
    \bP(r(X) = 1 ) \leq \beta.
\end{equation}

Finally, it is desirable 
that our system does not perform differently 
across different demographic groups. 
Let $A \in \{1,\cdots,|A|\}$ denote the demographic identity of an individual. 
Then if we wish to equalize the error rate across demographic groups, we impose the \textbf{fairness} constraint $\forall a \in A$:
\begin{align}\label{eq:fairness}
      \nonumber &\bP( (1-r(X))m(X) + r(X) h(Z) \ne Y |A\neq a) \\&= \bP( (1-r(X))m(X) + r(X) h(Z) \ne Y |A=a)
\end{align}

\textbf{Data.} We assume access to samples $S=\{(x_i,h(z_i), y_i)\}_{i=1}^n$ where $h(z_i)$ is the human's prediction on the example, but note that we do not observe $z_i$, the information used by the human. 
We emphasize that the label $y_i$ and human prediction $h(z_i)$ are different, even though $y_i$ could also come from humans. For example in our chest X-ray classification example, $y_i$ could come from a consensus of 3 or more radiologists, while $h(z_i)$ is the prediction of a single radiologist not involved with the label.
Given the dataset $S$ the 
 system \emph{training loss} is given by:
\begin{equation}\label{eq:training_loss}
\sysLhat(m,r) := \frac{1}{n}\sum_{i=1}^n\bI_{m(x_i) \neq y_i} \bI_{r(x_i) = 0} +  \bI_{h(z_i) \neq y_i}\bI_{r(x_i)=1}
\end{equation}

In the following section, we study the computational complexity of learning with deferral using halfspaces, which reduces to studying the optimization problem \eqref{eq:training_loss} when $m$ and $r$ are constrained to be halfspaces.

\section{Computational Complexity of Learning with Deferral }\label{sec:linear_case} 

\begin{figure*}[h]
\centering

  \includegraphics[clip,width=\textwidth,trim={0.0cm 9.3cm 9.0cm 0.0cm}]{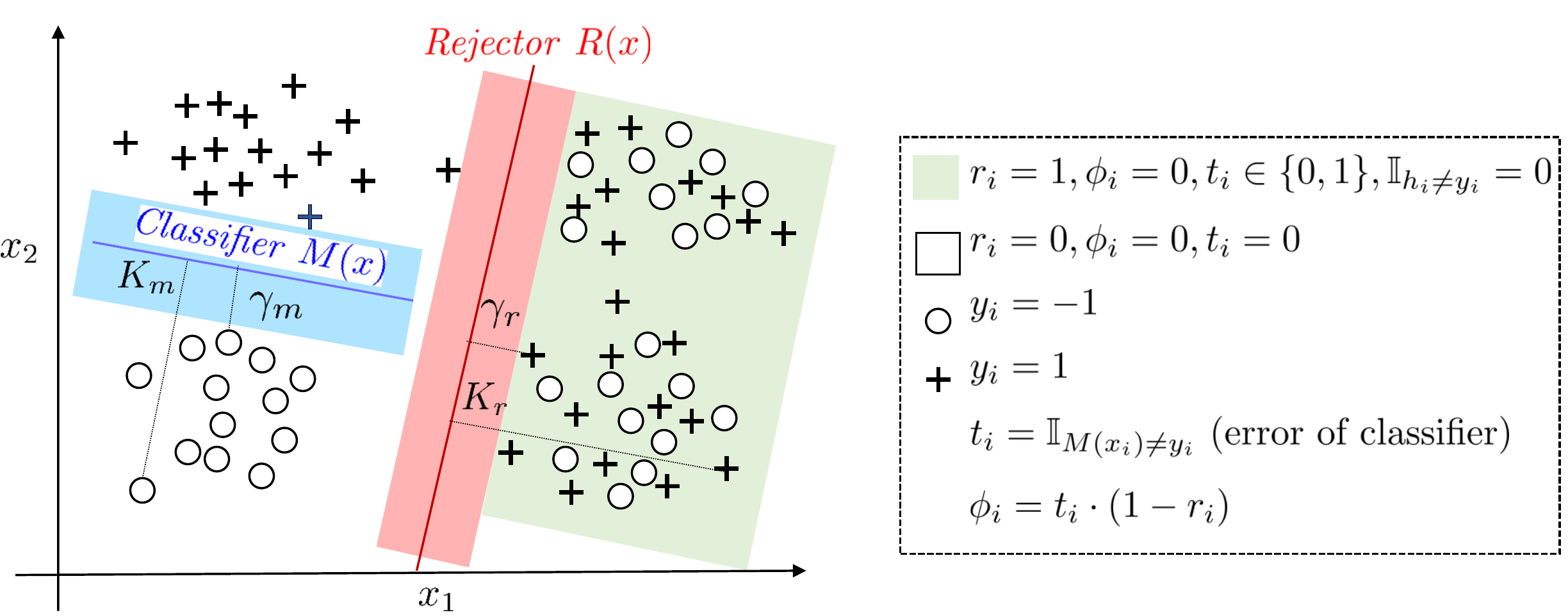}
  
    \caption{ The realizable LWD-H setting illustrated. The task is binary classification with labels $\{o,+\}$; the human is perfect on the green-shaded region, and the data outside the green region is linearly separable. As a result, the optimal classifier and rejector obtain zero error. Assumption \ref{ass: margin} is illustrated graphically as well as the MILP variables of equations \eqref{eq:milp_obj}-\eqref{eq:mil_last_cst}.}
    \label{fig:expert_gauss}
\end{figure*}

The misclassification error of the human-AI team in equation \eqref{eq:01_reject_loss} is challenging to optimize as it requires searching over a joint set of functions for the classifier and rejector, in addition to dealing with the nonconvex $0$-$1$ aspect.
To study the computational complexity of minimizing the loss, we restrict our attention to a foundational setting: linear classifiers and linear rejectors in the binary label scenario. 

We begin with the realizable case when there  exists a halfspace classifier and rejector that can achieve zero loss:
\begin{assumption}[Realizable Linear Setting]\label{ass:realizability}
Let $\mathcal{X} = \mathbb{R}^d$ and $\mathcal{Y} = \{0,1\}$. We assume that for the given expert $h$ there exists a linear classifier $m^*(x) = \bI_{M^\top x > 0}$ and a linear rejector $r^*(x) = \bI_{ R^\top x >0}$ that achieve 0 error: 
\begin{equation*}
    \bE_{(x,y,z)\sim \mathbf{P}} \ [ \ \bI_{m^*(x) \neq y} \bI_{r^*(x) = 0} +  \bI_{h(z) \neq y}\bI_{r^*(x)=1} \  ] = 0.
\end{equation*}
\end{assumption}
This setting is illustrated in Figure  \ref{fig:expert_gauss}. Since the decision regions of $m$ and $r$ are halfspaces, we also use the term ``halfspace'' interchangeably.
Note that while the classifier is assumed to be linear, the human can have a non-linear decision boundary. The analog of this assumption in the binary classification without deferral setting is to assume that there exists a halfspace that can correctly classify all the data points.
In that case, we can formulate the optimization problem as a linear program to efficiently find the optimal solution \citep{boyd2004convex}.

\textbf{Hardness.} \looseness=-1 In contrast to learning without deferral, 
we will prove that in general, it is computationally hard to learn a linear $m$ and $r$ under Assumption \ref{ass:realizability}. 
Define the \emph{learning with deferral using halfspaces} (LWD-H) problem as that of finding halfspace $m$ and halfspace $r$ such that the system error in \eqref{eq:01_reject_loss} is approximately minimized. 
\begin{theorem} \label{prop:hardness}
Let $\epsilon > 0$ be an arbitrarily small constant.
Under a guarantee that there exist halfspaces $m^*$, $r^*$ with zero system loss (Assumption \ref{ass:realizability}), there is no polynomial-time algorithm to find a pair of classifier-rejector halfspaces with error $1/2 -\epsilon$ unless $NP=RP$.
\end{theorem}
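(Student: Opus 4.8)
The plan is to prove the theorem by exhibiting learning with deferral using halfspaces as being at least as hard as \emph{properly learning an intersection of two halfspaces}. The key structural observation is that once the human is fixed to a constant predictor, the human-AI system collapses to an intersection-of-two-halfspaces classifier. Concretely, set the human to predict the constant label $0$ on every point, i.e. $h(z_i)=0$ for all $i$; a constant map is a legal choice of $h$, and the reduction never needs the latent $z_i$. With $r(x)=\bI_{R^\top x>0}$ and $m(x)=\bI_{M^\top x>0}$, the system outputs label $1$ exactly when it does not defer and the classifier says $1$, that is, on the region $\{R^\top x\le 0\}\cap\{M^\top x>0\}$, and outputs $0$ elsewhere. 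Hence the system's $0$-$1$ error on any sample equals the error of the intersection-of-two-halfspaces hypothesis $g(x)=\bI_{R^\top x\le 0}\cdot\bI_{M^\top x>0}$, and this correspondence between pairs $(m,r)$ and intersections of two halfspaces is error-preserving in both directions (strict versus non-strict boundaries are reconciled by an arbitrarily small perturbation, and a constant feature can be appended to $x$ if affine thresholds are needed).

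This identity transfers both the promise and the inapproximability. On the completeness side, if the positive points of an instance are captured exactly by an intersection $\{P^\top x>0\}\cap\{Q^\top x>0\}$, then taking $R=-P$ and $M=Q$ gives a rejector-classifier pair with zero system loss, so Assumption \ref{ass:realizability} holds. On the soundness side, any $(m,r)$ with system error $\le 1/2-\epsilon$ yields, by the construction above, an intersection of two halfspaces with error $\le 1/2-\epsilon$. It therefore suffices to show that, given a sample realizable by an intersection of two halfspaces, it is hard under randomized reductions to output \emph{any} such intersection with error below $1/2-\epsilon$.

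For this last step I would invoke a hardness result for learning intersections of two halfspaces in the spirit of Khot--Saket, whose soundness approaches $1/2$. The $NP=RP$ conclusion then follows from the standard bridge between the consistency/approximation problem and proper PAC-learning (Pitt--Valiant): an efficient algorithm that, on realizable inputs, returned a pair with error $\le 1/2-\epsilon$ would, after drawing a polynomial sample and estimating empirical error by sampling, decide the underlying NP-hard predicate with high probability. Since the only randomness is in this sampling/estimation, the conclusion is $NP=RP$ rather than $P=NP$. Because the constant-human correspondence is an exact bijection between $(m,r)$ pairs and intersections of two halfspaces, soundness against the source class transfers verbatim to soundness against \emph{every} halfspace pair, which is what the theorem requires.

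The main obstacle is obtaining the source hardness with \emph{perfect} completeness, i.e. exact realizability as demanded by Assumption \ref{ass:realizability}, simultaneously with soundness $1/2-\epsilon$. Off-the-shelf intersection-of-halfspaces hardness typically guarantees only completeness $1-\epsilon$, whereas the theorem promises a pair with \emph{zero} loss, and naively padding the instance to push completeness to $1$ also inflates the achievable error and destroys the gap. Closing this gap is where the real work lies: one would either adapt the PCP/label-cover gadgetry so that yes-instances admit a perfectly consistent intersection of two halfspaces while on no-instances every such intersection errs on a $(1/2-\epsilon)$-fraction, or give a direct reduction from a combinatorial problem (e.g. a graph-coloring or hypergraph-covering instance) engineered to have perfect completeness together with soundness tending to $1/2$.
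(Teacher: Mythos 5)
Your reduction is exactly the one the paper uses: fix the human to the constant prediction $0$, identify the classifier $m$ with $g_1$ and the rejector $r$ with the complement of $g_2$, and observe that the system error of $(m,r)$ on the constructed instance equals the error of the intersection $g_1 \wedge g_2$, so that both the realizability promise and the $1/2-\epsilon$ soundness transfer in both directions. This is precisely Lemma \ref{apdx:prop:error-relation} in the appendix, followed by an appeal to hardness of weak learning of realizable intersections of two halfspaces.

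The only place you diverge is your final paragraph, and there the obstacle you describe does not exist: the ``real work'' you say remains has already been done in the literature you cite. The hardness result of \citet{khot2011hardness} has \emph{perfect} completeness, not completeness $1-\epsilon$: in the YES case of their reduction from Smooth Label Cover there is an intersection of two halfspaces correctly classifying \emph{all} the points, while in the NO case, with probability at least $9/10$, no function of any constant number of halfspaces classifies more than a $1/2+\gamma$ fraction correctly (these are Theorems \ref{thm:smooth-label-cover-hardness} and \ref{thm:smooth-label-cover-reduction}, restated in the appendix). So exact realizability, as demanded by Assumption \ref{ass:realizability}, comes off the shelf, and no adaptation of the label-cover gadgetry or new combinatorial reduction is needed; the paper's proof simply composes your bijection with this result. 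A second, smaller correction: the reason the conclusion is $NP=RP$ rather than $P=NP$ is not randomness in sampling or in estimating empirical error (the formal statement concerns a finite dataset, so no sampling is involved); it is that the Khot--Saket reduction itself is randomized, with its soundness guarantee holding only with probability $9/10$ over the reduction's coins. With those two repairs, your argument is the paper's proof.
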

This shows that even in the realizable setting (i.e., there exists a pair of halfspaces with zero system loss), it is hard to find a pair of halfspaces that even gets system error $1/2-\epsilon$.
\begin{corollary }
    There is no efficient proper PAC-learner for LWD-H unless $NP=RP$.
\end{corollary }
\begin{sproof}
First, because the true distribution of points could be supported on a finite set, the LWD-H problem boils down to approximately minimizing the training loss \eqref{eq:training_loss}.
Our proof relies on a reduction from the problem of learning an intersection of two halfspaces in the realizable setting.
Let $D=\{x_i,y_i\}_{i=1}^n$ and suppose there exists an intersection of two half-spaces $g_1,g_2$ that achieve 0 error for $D$.
This is an instance of learning an intersection of two halfspaces in the realizable setting, which is hard to even \emph{weakly} learn \citep{khot2011hardness}.
We show that this is an instance of the realizable LWD-H problem by setting $m = g_1$ and $r=\bar{g_2}$ and the human $H$ to always predict 0. 
Hence, an algorithm for efficiently finding a classifier/rejector pair with error $\frac{1}{2}-\epsilon$ would also find an intersection of halfspaces with error $\frac{1}{2} - \epsilon$, which is hard unless $NP=RP$.
\end{sproof}

All proofs can be found in the Appendix. 
This hardness result holds in the realizable setting, with proper learning, and with no further assumptions on the data distribution. 

\textbf{Extensions.} Even if the problem is not realizable and the goal is to find an approximation algorithm, this is still computationally hard as presented in the following corollary. 

\begin{corollary }\label{th:hardness_extension}
When the data is not realizable (i.e., Assumption \ref{ass:realizability} is violated), there is no polytime algorithm for finding a pair of halfspaces with error $\frac{1}{2} - \epsilon$ unless $NP=RP$.
\end{corollary }

\textbf{Exact Solution}. These hardness results motivate the need for new approaches to solving the LWD-H problem. 
In the next section, we derive a scheme to exactly minimize the misclassification error of the human/AI system using mixed-integer linear programming (MILP).

\section{Mixed Integer Linear Program Formulation}\label{sec:milp}
In the previous section, we saw that in the linear setting it is computationally hard to learn an optimal classifier and rejector pair.
As discussed in Section \ref{sec:intro}, we are interested in the linear setting due to the cost of labeling large datasets for learning with deferral. 
Linear predictors can perform similar to non-linear predictors in applications involving high-dimensional medical data \citep{razavian2015population}. Moreover, we can rely on pre-trained representations, which can allow 
linear predictors on top of embedded representations to attain performance comparable to non-linear predictors \citep{bengio2013representation}. 

\textbf{A First Formulation.} As a first step, we write down a mixed integer \emph{nonlinear} program for the optimization of the training loss $\sysLhat$ in \eqref{eq:training_loss} over linear classifiers and linear rejectors with binary labels. 
For simplicity, let $\mathcal{Y} = \{-1,+1\}$. A direct translation of \eqref{eq:training_loss} with halfspace classifiers and rejectors yields the following:
\begin{align}
&M^*, R^*, \cdot = \altargmin_{M,R,m_i,r_i} \sum_{i=1}^n (1-r_i) \bI_{m_i \neq y_i}  + r_i \bI_{h_i \neq y_i}   \\
\label{eq:constraint_trivial_mip}
& \text{s.t.} \quad m_i = \textrm{sign}(M^\top x_i), \  \ r_i  =  \bI_{R^\top x_i \geq 0}  \ 
 \forall i \in [n], \\  & \quad M \in \mathbb{R}^d, R \in \mathbb{R}^d. \nonumber
\end{align}
The variables $m_i$ and $r_i$ are simply the binary outputs of the classifier and rejector. We observe that the objective involves a quadratic interaction between the classifier and rejector. Furthermore, the constraints \eqref{eq:constraint_trivial_mip} are indicator constraints that are difficult to optimize.

\textbf{Making Objective Linear.} We observe that since the $r_i$'s are binary, the term $(1-r_i) \bI_{m_i \neq y_i}$ can be equivalently rewritten as $\max(0,\bI_{m_i \neq y_i} - r_i)$. 
This is a crucial simplification that avoids having a mixed integer quadratic program. Below we use this to create a binary variable $t_i = \bI_{m_i \neq y_i}$ representing the error of the classifier and a second \emph{continuous} variable $\phi_i$ that upper bounds $\max(0,t_i - r_i)$ and represents the classifier error after accounting for deferral. 

\textbf{Making Constraints Linear.} Constraints \eqref{eq:constraint_trivial_mip} make sure that the binary variables $r_i$ and $m_i$ are the predictions of half-spaces $R$ and $M$ respectively. As mentioned above, we will formulate the problem using the classifier error variables $t_i$ instead of the classifier predictions $m_i$. To reformulate constraints \eqref{eq:constraint_trivial_mip} in a linear fashion, we have to make assumptions on the optimal $M$ and $R$:
\begin{assumption}[Margin] \label{ass: margin}
The optimal solution $(M,R)$ that minimizes the training loss \eqref{eq:training_loss} has margin and is bounded, meaning that $(M, R)$ satisfy the following for all $i \in [n]$ in the training set and some $\gamma_m, \gamma_r, K_m,$ $K_r > 0$:
\begin{equation}\label{eq:margin_norm_bounds}
\resizebox{0.43\textwidth}{!}{
    $\gamma_m \leq |M^\top x_i| \leq K_m - \gamma_m, \  \gamma_r \leq |R^\top x_i| \leq K_r - \gamma_r$}
\end{equation}
\end{assumption}
A similar assumption is made in \citep{ustun2016supersparse}. The upper bounds in \eqref{eq:margin_norm_bounds} are often naturally satisfied as we usually deal with bounded feature sets $\mathcal{X}$ such that we can normalize $x_i$ to have unit norm, and the 
norms of $M$ and $R$ are constrained for regularization. 

\textbf{Mixed Integer Linear Program.} With the above ingredients and taking inspiration from the big-M approach of \citet{ustun2016supersparse}, we can write down the resulting mixed integer linear program (MILP):
\begingroup
\allowdisplaybreaks

\begin{empheq}[box=\mymath]{align} 
\nonumber &M^*, R^*, \dots =\\& \altargmin_{M,R,\{r_i\},\{t_i\},\{\phi_i\}} \sum_{i} \phi_i + r_i \bI_{h_i \neq y_i}, \ s.t. \\
& \phi_i \geq t_i - r_i, \qquad \phi_i \geq 0 \quad  \forall i \in [n] \label{eq:milp_obj}\\
& K_m t_i \geq \gamma_m - y_i M^\top x_i \quad \forall i \in [n] \label{eq:H_constraint_milp_bin}\\
& R^{\top} x_i \leq K_r r_i + \gamma_r (r_i - 1) , \\&  R^{\top} x_i \geq K_r (r_i - 1) + \gamma_r r_i \label{eqn:MILP_RLB} \quad \forall i \in [n]\\
& r_i \in \{0,1\}, t_i \in \{0,1\}, \\& \phi_i \in \mathbb{R}^+ \quad \forall i \in [n],
\ M,R \in \mathbb{R}^d \label{eq:mil_last_cst}
\end{empheq}
\endgroup

Please see Figure \ref{fig:expert_gauss} for a graphical illustration of the variables.
We show that constraints \eqref{eqn:MILP_RLB} function as intended; the rest of the constraints are verified in the Appendix.  When $r_i=0$, then we have the constraints $ R^{\top} x_i  \leq - \gamma_r$ and $R^{\top} x_i \geq -K_r$: this correctly forces the rejector to be negative. When $r_i=1$, we have $R^{\top} x_i  \geq  \gamma_r$ and $R^{\top} x_i \leq K_r$: which means the rejector is positive. 
Note that we do not need to know the margin $\gamma_r$ exactly, only a lower bound $\gamma$, $0<\gamma \leq \gamma_r$; the formulation is still correct with $\gamma$ in place of $\gamma_r$. However, we cannot set $\gamma = 0$ as then the trivial solution $R=0$ is feasible and the constraint is void. The same statements apply to $\gamma_m$. This MILP has $2n$ binary variables, $n + 2d$ continuous variables and $4n$ constraints. Finally, note that the MILP minimizes the 0-1 error even when Assumption \ref{ass:realizability} is violated.

\textbf{Regularization and Extension to Multiclass.} We can add $l_1$ regularization to our model by adding the $l_1$ norm of both $M$ and $R$ to the objective. This is done by defining two sets of variables constrained to be the $l_1$ norm of the classifier and rejector and adding their values to the objective in \eqref{eq:milp_obj}. Adding regularization can help prevent the MILP solution from overfitting to the training data. The above MILP only applies to binary labels but can be generalized to the multi-class setting where $\mathcal{Y} = \{1,\cdots, C\}$ (see Appendix).


\textbf{Generalization Bound.} Under Assumption \ref{ass: margin} and non-realizability, assume $\lVert x_i \rVert_1 \leq 1$ and constrain the search of the MILP to $M$ and $R$ with infinity norms of at most $K_m$ and $K_r$ respectively. We can relate the performance of MILP solution on the training set to the population 0-1 error.
\begin{proposition}
For any expert $h$ and data distribution $\mathbf{P}$ over $\mathcal{X} \times \mathcal{Y}$ that satisfies Assumption \ref{ass: margin}, let $0<\delta<\frac{1}{2}$. Then  with probability at least $1-\delta$, the following holds for the empirical minimizers $(\hat{m}^*,\hat{r}^*)$ obtained by the MILP:
\begin{align*}
    &L_{0{-}1}(\hat{m}^*,\hat{r}^*) \leq \sysLhat(\hat{m}^*,\hat{r}^*) \\&+  \frac{(K_m + K_r) d \sqrt{2 \log d} + 10 \sqrt{\log(2/\delta)}}{\sqrt{n \bP(h(Z) \neq Y)}}.    
\end{align*}
\end{proposition}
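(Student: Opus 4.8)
The plan is to establish a one-sided uniform-convergence guarantee over the class of norm-bounded halfspace pairs and then specialize it to the MILP minimizer, which by construction lives in that class. Concretely, let $\mathcal{H} = \{(m_M, r_R) : \lVert M \rVert_\infty \le K_m,\ \lVert R \rVert_\infty \le K_r\}$ and let $\mathcal{L}$ be the induced class of deferral-loss functions $(x,y,h) \mapsto \bI_{m_M(x) \neq y}\bI_{r_R(x)=0} + \bI_{h \neq y}\bI_{r_R(x)=1}$, which takes values in $[0,1]$. First I would invoke the standard Rademacher bound: with probability at least $1-\delta$, every $(m,r) \in \mathcal{H}$ satisfies $L_{0{-}1}(m,r) \le \sysLhat(m,r) + 2\mathfrak{R}_n(\mathcal{L}) + \sqrt{\log(1/\delta)/(2n)}$, where $\mathfrak{R}_n(\mathcal{L})$ is the empirical Rademacher complexity of $\mathcal{L}$. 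Since $(\hat m^*, \hat r^*) \in \mathcal{H}$, the bound transfers verbatim to the MILP solution, so the task reduces to controlling $\mathfrak{R}_n(\mathcal{L})$.

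The difficulty in bounding $\mathfrak{R}_n(\mathcal{L})$ is that each loss is a bilinear interaction of two thresholded linear maps, so Talagrand's contraction lemma — which handles Lipschitz functions of a single scalar argument — does not apply directly. I would first use the identity $L = \bI_{m \neq y} + \bI_{r=1}\,(\bI_{h \neq y} - \bI_{m \neq y})$ to split $\mathcal{L}$ into a pure-classifier part and a correction term governed by the rejector and the fixed human-error mask $\bI_{h \neq y}$, and bound each part separately using sub-additivity of $\mathfrak{R}_n$. The indicators are not Lipschitz, so I would replace them by margin/ramp surrogates with slopes $1/\gamma_m$ and $1/\gamma_r$; Assumption \ref{ass: margin} guarantees that on the training points the ramp and the $0$-$1$ indicator coincide, so the substitution is lossless on the sample while making the maps Lipschitz. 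Contraction then reduces each piece to the Rademacher complexity of the linear classes $\{x \mapsto M^\top x\}$ and $\{x \mapsto R^\top x\}$.

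It remains to bound these linear classes under $\lVert M \rVert_\infty \le K_m$, $\lVert R \rVert_\infty \le K_r$ and $\lVert x_i \rVert_1 \le 1$. By H\"older duality, $\mathbb{E}_\sigma \sup_{\lVert R \rVert_\infty \le K_r} R^\top(\textstyle\sum_i \sigma_i x_i) = K_r\,\mathbb{E}_\sigma \lVert \sum_i \sigma_i x_i \rVert_1$; bounding $\lVert \cdot \rVert_1 \le d\,\lVert \cdot \rVert_\infty$ and applying a sub-Gaussian maximal inequality to $\mathbb{E}_\sigma \lVert \sum_i \sigma_i x_i \rVert_\infty$ (using $\sum_i x_{ij}^2 \le n$) introduces the $\sqrt{2\log d}$ factor. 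Summing the classifier and rejector contributions and normalizing by $1/n$ produces $(K_m + K_r)\,d\sqrt{2\log d}/\sqrt{n}$, and a McDiarmid step for the confidence term supplies the $10\sqrt{\log(2/\delta)}$ piece of the stated numerator.

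I expect the main obstacle to be obtaining the precise $1/\sqrt{n\,\bP(h(Z)\neq Y)}$ scaling in the denominator rather than the naive $1/\sqrt{n}$. This factor reflects that the deferral-cost term $\bI_{r=1}\bI_{h\neq y}$ is active only on the $\approx n\,\bP(h(Z)\neq Y)$ examples where the human errs, so its fluctuations should be measured against that effective sample size; capturing this cleanly requires either a variance-based (Bernstein/self-bounding) refinement of the concentration step or a localization of the rejector's complexity to the human-error region. Reconciling the exact direction and magnitude of this dependence with the contraction-based bounds on $\mathfrak{R}_n(\mathcal{L})$ — and checking that the margin upper bounds in Assumption \ref{ass: margin} keep the ramp slopes $1/\gamma_m,1/\gamma_r$ from degrading the constants — is the delicate part of the argument.
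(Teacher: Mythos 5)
Your proposal does not reach the stated bound: you correctly set up the uniform-convergence framework and correctly compute the linear-class complexities (H\"older duality plus Massart's lemma giving $(K_m+K_r)\,d\sqrt{2\log d}/\sqrt{n}$, which is exactly the computation in the paper), but you leave the $1/\sqrt{n\,\bP(h(Z)\neq Y)}$ dependence as an acknowledged open issue, proposing Bernstein-type or localization refinements without carrying either out. The paper does not derive this dependence from scratch either: it invokes the generalization bound of \citet[Theorem 2]{mozannar2020consistent} for the deferral loss, which already contains a term $\mathfrak{R}_{n\bP(h(Z)\neq Y)/2}(\mathcal{R})$ --- the rejector complexity at the effective sample size of human-error points --- alongside ordinary $1/\sqrt{n}$ terms; the only new work is plugging in the linear-class complexity and consolidating. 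More importantly, you have the direction of the dependence backwards. Since $\bP(h(Z)\neq Y)\le 1$, we have $1/\sqrt{n\,\bP(h(Z)\neq Y)}\ge 1/\sqrt{n}$, so the stated bound is \emph{weaker} than a pure $1/\sqrt{n}$ bound; the paper's final expression is obtained precisely by absorbing all of its $1/\sqrt{n}$ terms (and the fast-decaying exponential term) into the looser $1/\sqrt{n\,\bP(h(Z)\neq Y)}$ denominator. Had you observed this, the ``delicate part'' of your plan would vanish: any clean $C/\sqrt{n}$ uniform bound with the right numerator implies the proposition immediately, with no variance-based or localized argument needed.

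There is a second gap in your contraction step. Assumption \ref{ass: margin} asserts a margin only for the \emph{optimal} pair $(M,R)$, not uniformly over your class $\mathcal{H}$ of all infinity-norm-bounded halfspace pairs, so the claim that the ramp and the $0$-$1$ indicator ``coincide on the sample'' can hold at best for that single hypothesis --- it cannot be used inside a uniform-convergence argument over all of $\mathcal{H}$. And even where contraction applies, it introduces the slopes $1/\gamma_m$ and $1/\gamma_r$ into the numerator, producing a bound of the shape $(K_m/\gamma_m + K_r/\gamma_r)\,d\sqrt{2\log d}/\sqrt{n}$ rather than the stated $(K_m+K_r)\,d\sqrt{2\log d}/\sqrt{n\,\bP(h(Z)\neq Y)}$; you flag this but do not resolve it. (The bilinear interaction you worry about is the easier point: for binary indicators $\bI_a\bI_b=\max(0,\bI_a+\bI_b-1)$, so a single application of contraction with a $1$-Lipschitz function splits the product --- the same trick the MILP formulation itself uses.) The paper sidesteps both issues by taking the Rademacher terms of the cited theorem as given and evaluating them directly as real-valued linear-class complexities.
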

This bound improves on surrogate optimization since the MILP will achieve a lower training error, $\sysLhat(\hat{m}^*,\hat{r}^*)$, than the surrogate, which optimizes a different objective. 

\textbf{Adding Constraints.} A major advantage of the MILP formulation is that it allows us to provably integrate any linear 
constraints on the variables with ease. For example, the constraints mentioned in Section \ref{sec:problem} can be added to the MILP as follows in a single constraint:
\begin{itemize}
    \item Coverage: $\sum_i r_i /n \leq \beta$ 
    \item Fairness: $\sum_{i: A=1} \bigl(\phi_i + r_i \bI_{h_i \neq y_i}\bigr) / \lvert\{i: A=1\}\rvert = \sum_{i: A=0} \bigl(\phi_i + r_i \bI_{h_i \neq y_i}\bigr) / \lvert\{i: A=0\}\rvert $.
\end{itemize}

So far, we have provided an exact solution to the linear learning to defer problem.
However, the MILP requires significant computational time to find an exact solution for large datasets.
Moreover, we might need a non-linear classifier or rejector to achieve good error.
The remaining questions are (i) how to efficiently find a good pair of halfspaces for large datasets and (ii) how to generalize to non-linear predictors. 
In the following section, we give a novel surrogate loss function that is optimal in the realizable LWD-H setting, performs well with non-linear predictors, and can be efficiently minimized (to a local optimum). 


\section{Realizable Consistent Surrogate}\label{sec:new_method}
\subsection{Consistency vs Realizable Consistency}
\looseness=-1 Most machine learning practice is based on optimizing surrogate loss functions of the true loss that one cares about. 
The surrogate loss functions are chosen so that optimizing them also optimizes the true loss functions, and also chosen to be differentiable so that they are readily optimized.
This first property is captured by the notion of \emph{consistency}, which was the main focus of much of the prior work on expert deferral: \citep{mozannar2020consistent,verma2022calibrated,charusaie2022sample}. 
We start by giving a formal definition of the consistency of a surrogate loss function:

\begin{definition}[Consistency\footnote{This is also referred to as Fisher consistency \citep{lin2002support} and classification-calibration \citep{bartlett2006convexity}.}]
A surrogate loss function $\Tilde{L}(m,r)$ is a consistent loss function for another loss $L_{\mathrm{def}}^{0{-}1}(m,r)$ if optimizing the surrogate over all measurable functions is equivalent to minimizing the original loss.

\end{definition}
For example, the surrogates $L_{CE}$ and $\Psi_{OvA}$ both satisfy consistency for $L_{\mathrm{def}}^{0{-}1}(m,r)$ \citep{mozannar2020consistent,verma2022calibrated}. It is crucial to note that consistency only applies when optimizing over \emph{all measurable functions}. 
Conversely, in LWD-H, and in the setting of Figure \ref{fig:expert_gauss}, when we optimize with linear functions, consistency does not provide any guarantees, which explains why these methods can fail in that setting.

Since we normally optimize over a restricted model class, we want our guarantee for the surrogate to also hold for optimization under a certain model class. The notion of realizable $\cH$-consistency is such a guarantee that has proven fruitful for classification \citep{long2013consistency,zhang2020bayes} and was extended by \citet{mozannar2020consistent} for learning with deferral. We recall the notion when extended for learning with deferral:

\begin{definition}[realizable $(\mathcal{M},\mathcal{R})$-consistency]
A surrogate loss function $\Tilde{L}(m,r)$ is a realizable  $(\mathcal{M},\mathcal{R})$-consistent loss function for the loss $L_{\mathrm{def}}^{0{-}1}(m,r)$ if there exists a zero error solution $m^*,r^* \in \cM \times \cR$ with $L_{\mathrm{def}}^{0{-}1}(m^*,r^*)=0$. Then optimizing the surrogate returns such zero error solution:
\begin{equation*}
    \tilde{m},\tilde{r} \in \arginf_{m,r \in \cM \times \cR  } \Tilde{L}(m,r) \implies L_{\mathrm{def}}^{0{-}1}(\tilde{m},\tilde{r})=0
\end{equation*}
\end{definition}
Realizable $(\cM, \cR)$-consistency guarantees that when our data comes from some ground-truth $m^*, r^* \in \cM \times \cR$, then minimizing the (population) surrogate loss will find an optimal $(m,r)$ pair.
We propose a novel, differentiable, and $(\cM, \cR)$-consistent surrogate for learning with deferral when $\cM$ and $\cR$ are \textit{closed under scaling}.
A  class $\cG$ of scoring functions from $\mathcal{X}$ to $\mathbb{R}^C$ is closed under scaling if $\bm{g} \in \cG \implies \alpha \bm{g} \in \cG$ for any $\alpha \in \mathbb{R}$.
For example, we can let $\cG$ be the class of linear scoring functions $\bm{g}(x) = G^{\top}x$ and $G \in \mathbb{R}^{ d \times C}$.
Our results hold for arbitrary $\cG$ that are closed under scaling, e.g., ReLU feedforward neural networks (FNN).
We parameterize the $(m,r)$ pair with  $|\cY|+1$ dimensional scoring function $\bm{g}:(g_1,\ldots, g_{|\cY|}, g_\bot)$.
We define $m(x) = \arg \max_{y \in \mathcal{Y}}g_y(x)$ and $r(x)= \bI_{\max_{y \in \mathcal{Y}}g_y(x) \leq g_\bot(x) }$. 
The joint classifier-rejector model class $(\mathcal{M},\mathcal{R})$ is thus defined by $\cG$, and we say $(\mathcal{M},\mathcal{R})$ is closed under scaling whenever $\cG$ is closed under scaling. 
The proposed new surrogate loss $L_{RS}$, dubbed \emph{\realizablesurrogatenosp}, is defined at each point $(x,y,h)$ as:
  
\begin{empheq}[box=\mymath]{equation}
 L_{RS}(\mathbf{g},\cdot) = -2 \log\left(\frac{\exp(g_{y}(x)) + \bI_{h = y} \exp(g_{\bot}(x)) }{\sum_{y' \in \mathcal{Y} \cup  \bot}\exp(g_{y'}(x))} \right) \label{eq:proposed_loss_lrs}
\end{empheq}

\subsection{Derivation of Surrogate}
We now derive our proposed surrogate  \emph{\realizablesurrogatenosp} with a principled approach.
In this paper, our primary goal is to predict a target $Y$ given a set of covariates $X$ while having the ability to query a human $H$ to predict. Our overall predictor is denoted as $\hat{Y}$, a function of both $H$ and $X$,  our  goal is  learning a predictor that maximizes the agreement between $\hat{Y}$ and $Y$:
\begin{equation*} \E[\bI_{\hat{Y}(X,H)=Y}] = 
  \E_{X}\left[ \bP(\hat{Y}(x,H)=Y|X=x)  \right]
\end{equation*}
It will be easier to maximize the logarithm of the probability and thus using Jensen's inequality we obtain an upper  bound :
\begin{equation*}
    \log \left( \E_{X}\left[ \bP(\hat{Y}(x,H)=Y|X=x)  \right] \right)
 \leq \E_{X}\left[ \log\left(\bP(\hat{Y}(x,H)=Y|X=x) \right) \right]
\end{equation*}
We now decompose our predictor into a classifier-rejector pair $(m,r)$ where the rejector decides if the classifier or the human should predict. This transforms our objective to:

\begin{align}
&L = \E_{X}\left[ \log\left(\bP(\hat{Y}=Y|X=x) \right) \right] =  \nonumber \\&  \E_{X}\left[ \log\left(\bP(\hat{Y}=Y|X=x,r(x)=0) \bP(r(x)=0|X=x) + \bP(\hat{Y}=Y|X,r(x)=1) \bP(r(x)=1|X=x)   \right) \right]   \nonumber \\
&= \E_{X}\left[ \log\left(\bP(m(x)=Y|X=x) \bP(r(x)=0|X=x) + \bP(H=Y|X=x) \bP(r(x)=1|X=x)   \right) \right] \label{eq:L_decompose}
\end{align}

In \cite{madras2018predict}, their proposed loss splits the sum inside the log above into a sum of log-likelihoods of the classifier and expert each weighted by the probability of predicting and deferring respectively.   Instead in this work, we try to optimize the above log likelihood $L$ \eqref{eq:L_decompose} directly.

\paragraph{Parameterization.} We now try to understand how we can parameterize the classifier-rejector pair. We first parameterize the classifier with a set of scoring functions
$g_y: \mathcal{X} \to \mathbb{R}$ for $y \in \mathcal{Y}$ and define the classifier as the label $y$ that attains the maximum value among the set $\{g_{y'}\}_{y' \in \mathcal{Y}}$.
To parameterize the rejector $r$, we define a single scoring function $g_{\bot}: \mathcal{X} \to \mathbb{R}$ and defer if $g_\bot > \max_{y} g_y$ which induces a comparison between the function $g_\bot$ and the classifier scores. One could instead parameterize the rejector $r$ with a single function $g_{\bot}: \mathcal{X} \to \mathbb{R}$ and defer if $g_\bot(x)$ is positive, we find empirically that the previous parameterization has better performance \footnote{This parameterization form can achieve a halfspace rejector and results in the following loss: $$ L_{RS2} = \E_{X}\left[ \log\left(  \frac{\exp(g_{Y}(x))}{\sum_{y' \in \mathcal{Y}} \cdot \exp(g_{y'}(x))}  \frac{1}{1+\exp(g_{\bot})}+ \bP(H=Y|X) \cdot \frac{\exp(g_{\bot})}{1+\exp(g_{\bot})} \right) \right]$$ }.

However, with  the characterization, both our classifier and rejector are deterministic. 
Plugging in the parameterization of $(m,r)$ into the loss in \eqref{eq:L_decompose} would result in a loss function that is non-differentiable in the parameters $g_y$ and $g_\bot$ due to 
 thresholding. Instead, we allow the classifier and rejector to only be probabilistic during training by defining:
 \[
\bP(m(x)=Y|X=x) =   \frac{\exp(g_{Y}(x))}{\sum_{y' \in \mathcal{Y}} \exp(g_{y'}(x))}, \quad \bP(r(x)=1|X=x) =  \frac{\exp(g_\bot(x))}{\sum_{i \in \mathcal{Y} \cup \bot}  \exp(g_{i}(x))}
  \]
This transforms the log liklelihood $L$ to:

\begin{align*}
 & \E_{X}\left[ \log\left(  \frac{\exp(g_{Y}(x))}{\sum_{y' \in \mathcal{Y}}  \exp(g_{y'}(x))}  \cdot \frac{\sum_{y' \in \mathcal{Y}}  \exp(g_{y'}(x))}{\sum_{i \in \mathcal{Y} \cup \bot}  \exp(g_{i}(x))}+ \bP(H=Y|X)  \cdot \frac{\exp(g_\bot(x))}{\sum_{i \in \mathcal{Y} \cup \bot}  \exp(g_{i}(x))} \right) \right] \\
  &= -\E_{X}\left[ \log\left(  \frac{\exp(g_{Y}(x)) + \bP(H=Y|X) \exp(g_\bot(x)) }{\sum_{y' \in \mathcal{Y}}  \exp(g_{y'}(x))}  \right) \right]
\end{align*}
We multiple the above likelihood by $-2$ so that we can instead minimize a loss and so that it becomes an upper bound of the $0-1$ deferral loss $L_{\mathrm{def}}^{0{-}1}(m,r)$.
Given the dataset $S$, our proposed loss then becomes:
\begin{equation}
L_{RS}^{S} = -2 \sum_{i=1}^n  \log\left(  \frac{\exp(g_{y_i}(x_i)) + \bI_{h(z_i)=y_i} \exp(g_\bot(x)) }{\sum_{y' \in \mathcal{Y}}  \exp(g_{y'}(x))}  \right)
\end{equation}

\subsection{Theoretical Guarantees}
\looseness=-1 Notice in our proposed loss $L_{RS}$ that when the human is incorrect, i.e. $\bI_{h=y}=0$, the loss incentivizes the classifier to be correct, similar to cross entropy loss. 
However, when the human is correct, the learner has the \emph{choice} to either fit the target or defer: there is no penalty for choosing to do one or the other.
This is what enables the classifier to complement the human and differentiates $L_{RS}$ from prior surrogates, such as $L_{CE}$ \citep{mozannar2020consistent}, that are not realizable-consistent (see Theorem \ref{apx:the_realiza_not} in Appendix \ref{apx:proffs_sec_realiz}) and penalize the learner for not fitting the target even when deferring. 
This property is showcased by the fact that our surrogate is realizable $(\mathcal{M},\mathcal{R})$-consistent  for model classes that are closed under scaling. 
Moreover, it is an upper bound of the true loss $L_{\mathrm{def}}^{0{-}1}(m,r)$.
The theorem below characterizes the properties of our novel surrogate function. 
\vspace{-0.13em}
\begin{theorem}
\label{thm:realizable-consistent}
The \emph{\realizablesurrogatenosp} $L_{RS}$ is a  realizable $(\mathcal{M},\mathcal{R})$-consistent  surrogate for $L_{\mathrm{def}}^{0{-}1}$ for model classes closed under scaling, and satisfies $L_{\mathrm{def}}^{0{-}1}(m,r) \le L_{RS}(m,r)$ for all $(m,r)$.
\end{theorem}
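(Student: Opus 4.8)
The statement has two parts---an upper bound and a realizable-consistency guarantee---and I would attack both through a single pointwise analysis of the quantity inside the logarithm. Write $R(\mathbf{g};x,y,h) = \frac{\exp(g_y(x)) + \bI_{h=y}\exp(g_\bot(x))}{\sum_{y'\in\cY}\exp(g_{y'}(x)) + \exp(g_\bot(x))}$, so that the pointwise loss is $L_{RS} = -2\log R$. The first immediate observation is that every term appearing in the numerator of $R$ also appears in the denominator, so $R \le 1$ and hence $L_{RS}\ge 0$ always; this already discharges the upper bound on every $(x,y,h)$ where the system is \emph{correct}, since there $\sysL = 0$.

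For the upper bound it then remains to show $L_{RS}\ge 1$, i.e. $R \le e^{-1/2}$, at every point where the system \emph{errs}. I would split into the error cases dictated by the definitions $m(x)=\argmax_{y} g_y(x)$ and $r(x)=\bI_{\max_y g_y(x)\le g_\bot(x)}$. When $r(x)=1$ but $h\neq y$, the indicator $\bI_{h=y}$ kills the $g_\bot$ term in the numerator while $g_\bot(x)\ge\max_y g_y(x)\ge g_y(x)$ retains a matching mass in the denominator, forcing $R\le\tfrac12$. When $r(x)=0$ and $m(x)\neq y$ with $h\neq y$, the same bound $R\le\tfrac12$ follows from $g_{m(x)}(x)\ge g_y(x)$. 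The delicate case---and the one I expect to be the main obstacle---is $r(x)=0$, $m(x)\neq y$, and $h=y$: now the numerator retains $\exp(g_\bot(x))$, so one cannot bound $R$ below $\tfrac12$ and must instead use the rejector's defining strict inequality $g_\bot(x) < g_{m(x)}(x)$ together with $g_y(x)\le g_{m(x)}(x)$ to control $R$. Verifying that the constant $2$ in front of the logarithm is large enough to absorb exactly this case is where the argument is tightest, and it is the step I would check most carefully.

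For realizable $(\cM,\cR)$-consistency I would argue in two directions. First, given a zero-error pair $m^*,r^*\in\cM\times\cR$ realized by a scoring function $\mathbf{g}^*$, closure under scaling lets me feed $\alpha\mathbf{g}^*$ into $L_{RS}$ and let $\alpha\to\infty$. At every point the pair is correct, so the label (or $\bot$) selected by $\mathbf{g}^*$ is the strict maximizer of the relevant exponent; as $\alpha$ grows the dominant exponential in the numerator of $R$ coincides with the dominant one in the denominator, so $R\to 1$ pointwise and, by nonnegativity and dominated convergence, $L_{RS}(\alpha\mathbf{g}^*)\to 0$. Combined with $L_{RS}\ge 0$ this shows $\inf_{\cM\times\cR}L_{RS}=0$. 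Second, I would reuse the case analysis above to show that on any point where the induced $(m,r)$ errs, $R$ is bounded \emph{away} from $1$ by a universal constant, giving a quantitative bound $L_{RS}(\mathbf{g})\ge c\,\sysL(m,r)$ with $c>0$. Any minimizing sequence driving $L_{RS}$ to its infimum $0$ must therefore drive $\sysL$ to $0$, which yields the claimed implication for $\arginf$.

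One subtlety I would flag is that the infimum $0$ is only approached in the scaling limit and is not attained at finite $\mathbf{g}$, so the $\arginf$ in the definition should be read through minimizing sequences (or, equivalently, through the quantitative inequality $\sysL(m,r)\le c^{-1}L_{RS}(\mathbf{g})$); the consistency conclusion follows under either reading. In short, the realizable-consistency half reduces to a clean pointwise ``bounded away from $1$'' estimate on $R$ in the error region plus a scaling argument, whereas the upper-bound half hinges entirely on the tightness of the $r(x)=0$, $m(x)\neq y$, $h=y$ case, which is the one place I anticipate the calculation to be subtle.
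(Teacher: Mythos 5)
Your overall architecture matches the paper's proof exactly: a pointwise case analysis of the ratio $R$ to establish $\sysL(m,r)\le L_{RS}(m,r)$, then a scaling argument showing $L_{RS}(u m^*,u r^*)\to 0$ combined with the pointwise domination to conclude realizable $(\cM,\cR)$-consistency. The consistency half of your proposal is sound and is essentially the paper's argument (the paper bounds the surrogate value of the minimizer by $\lim_{u\to\infty}L_{RS}(um^*,ur^*)=0$ via monotone convergence and then applies the upper bound); your observation that any uniform constant $c>0$ with $c\,\sysL\le L_{RS}$ would suffice, and your flag that the infimum is only attained in the scaling limit, are both fair refinements of what the paper does.

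The genuine gap is in the upper-bound half, and it sits exactly at the step you deferred. In the case $r(x)=0$, $m(x)\neq y$, $h=y$, the sharp bound on your ratio is $R\le 2/3$, not anything smaller: since $m(x)\neq y$ and $r(x)=0$, the coordinate $y''=m(x)$ satisfies $e^{g_{y''}(x)}\ge\max\bigl(e^{g_y(x)},e^{g_\bot(x)}\bigr)\ge\tfrac12\bigl(e^{g_y(x)}+e^{g_\bot(x)}\bigr)$, so the denominator is at least $\tfrac32\bigl(e^{g_y(x)}+e^{g_\bot(x)}\bigr)$, giving $R\le 2/3$; and $2/3$ is approached by taking $g_y=g_\bot=0$, $g_{m(x)}=\epsilon\downarrow 0$, and all other scores to $-\infty$. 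Your stated target was $R\le e^{-1/2}\approx 0.607$, i.e.\ you read the loss with the natural logarithm, and since $2/3>e^{-1/2}$ the inequality $\sysL\le L_{RS}$ is simply \emph{false} under that reading: at the configuration above the 0--1 system loss is $1$ while $-2\ln(2/3)\approx 0.81<1$. The paper's proof escapes this because its logarithm is base 2 (stated explicitly in the proof and matching the released implementation, which uses \texttt{torch.log2}): the threshold becomes $2^{-1/2}\approx 0.707>2/3$, and indeed $-2\log_2(2/3)\approx 1.17>1$. So completing your argument requires both (i) the bound $R\le 2/3$ via the displaced mass of $g_{m(x)}$, and (ii) the base-2 convention --- equivalently a constant $2/\ln 2\approx 2.89$ rather than $2$ in front of the natural logarithm; with the constant $2$ and natural log, the theorem as you set it up does not hold.
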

This theorem implies that when Assumption \ref{ass:realizability} is satisfied and $\cG$ is the class of linear scoring functions, minimizing $L_{RS}$ yields a classifier-rejector pair with zero system error.
The resulting classifier is the halfspace $\bI((G_1-G_0)^\top x \geq 0)$ and the form of the rejector is $\bI((G_{\bot}^\top x- \max(G_1^\top x,G_0^\top x)) \geq 0)$, which is an intersection of halfspaces. One can obtain a halfspace rejector by minimizing instead with the parameterization of $L_{RS2}$.

\looseness=-1 The surrogate is differentiable but \emph{non}-convex in $\mathbf{g}$, though it is convex in each $g_i$. 
Indeed, a jointly convex surrogate that provably works in the realizable linear setting would contradict Theorem \ref{prop:hardness}.
In practice, we observe that in the linear realizable setting, the local minima reached by gradient descent obtain zero training error despite the nonconvexity.
The mixture-of-experts surrogate in \cite{madras2018predict} is realizable $(\mathcal{M},\mathcal{R})$-consistent, non-convex and not classification consistent as shown by \cite{mozannar2020consistent}, however, \cite{mozannar2020consistent} also showed that it leads to worse empirical results than simple baselines. We have not been able to prove or disprove that \realizablesurrogatenosp is classification-consistent, unlike other surrogates like that of \citet{mozannar2020consistent}. It remains an open problem to find both a consistent \textbf{and} a realizable-consistent surrogate. 

\subsection{Underfitting The Target}
 Minimizing the proposed loss leads to a classifier that attempts to complement the human. 
One consequence is that the classifier might have high error on points that are deferred to the human, resulting in possibly high error across a large subset of the data domain. 
We can explicitly encourage the classifier to fit the target on all points by adding an extra term to the loss:
\begin{empheq}[box=\mymath]{align} \label{eq:proposed_RS_loss+extemsopm}
&     \resizebox{0.48\textwidth}{!}{$L_{RS}^{\alpha}(\mathbf{g},x,y,h) = - \alpha \log\left(\frac{\exp(g_{y}(x) + \bI_{h = y} \exp(g_{\bot}(x)) }{\sum_{y' \in \mathcal{Y} \cup  \bot}\exp(g_{y'}(x))} \right)$}  - (1-\alpha)  \log\left(\frac{\exp(g_{y}(x)) }{\sum_{y' \in \mathcal{Y} }\exp(g_{y'}(x))} \right)
\end{empheq}
The new loss $L_{RS}^{\alpha}$ with $\alpha \in [0,1]$ (a hyperparameter) is a convex combination of $L_{RS}$ and the cross entropy loss for the classifier (with the softmax applied only over the functions $g_y$ rather than including $g_{\bot}$).
Empirically, this allows the points that are deferred to the human to still help provide extra training signal to the classifier, which is useful for sample-efficiency when training complex, non-linear hypotheses.
Finally, due to adding the parameter $\alpha$, the loss no longer remains realizable consistent, thus we let the rejector be $r(x) = \bI_{g_{\bot}(x) - \max_y g_y(x) \geq \tau}$ and we learn $\tau$ with a line search to maximize system accuracy on a validation set.
In the next section, we evaluate our approaches with an extensive empirical benchmark.

\section{Experiments}\label{sec:experiments}

\newpage
\begin{figure}[H]
    \begin{subfigure}{0.5\textwidth}
        \centering
          \includegraphics[width=\textwidth]{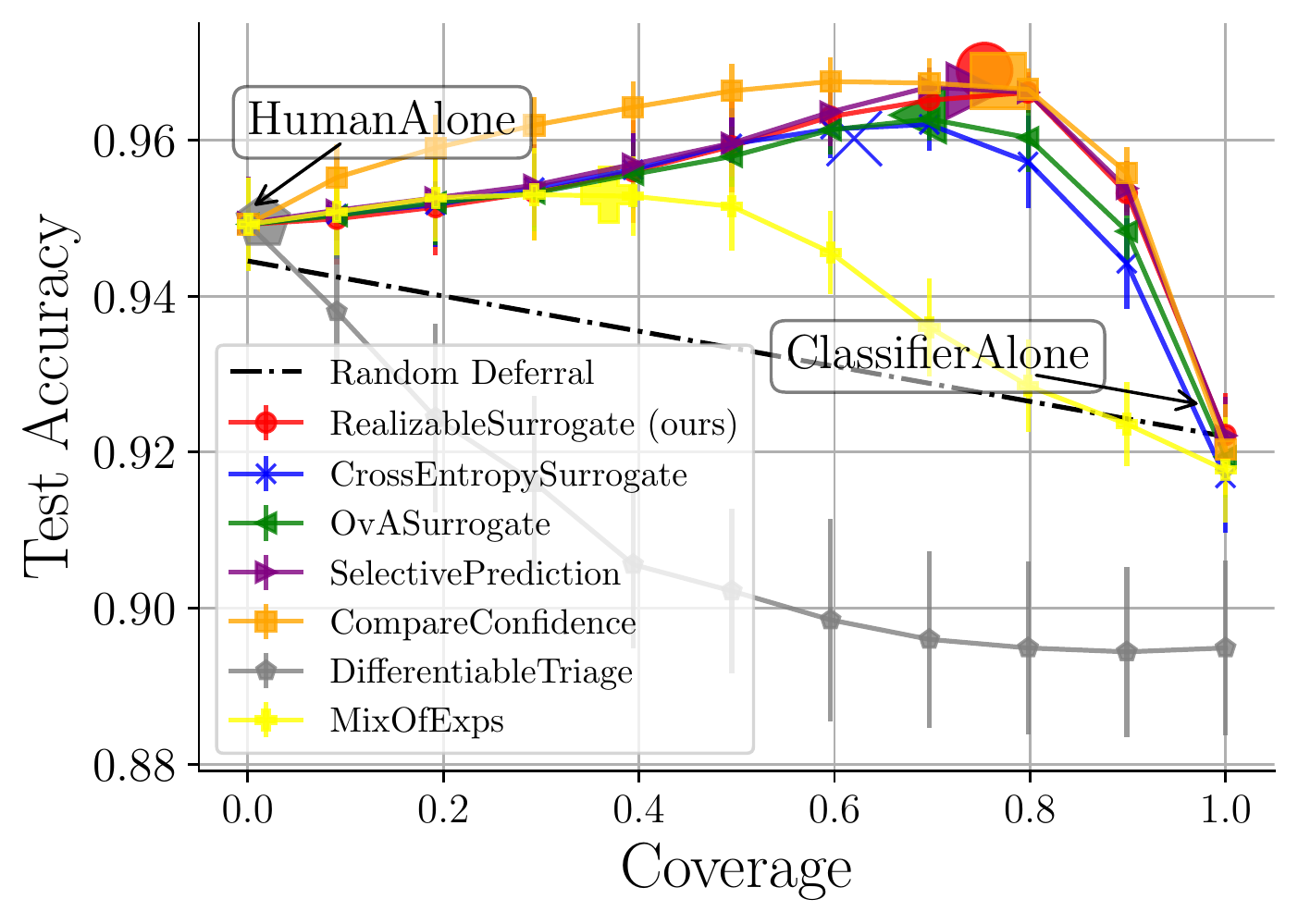}
        \subcaption{CIFAR-10H}
    \end{subfigure}%
    \begin{subfigure}{0.5\textwidth}
        \centering
          \includegraphics[width=\textwidth]{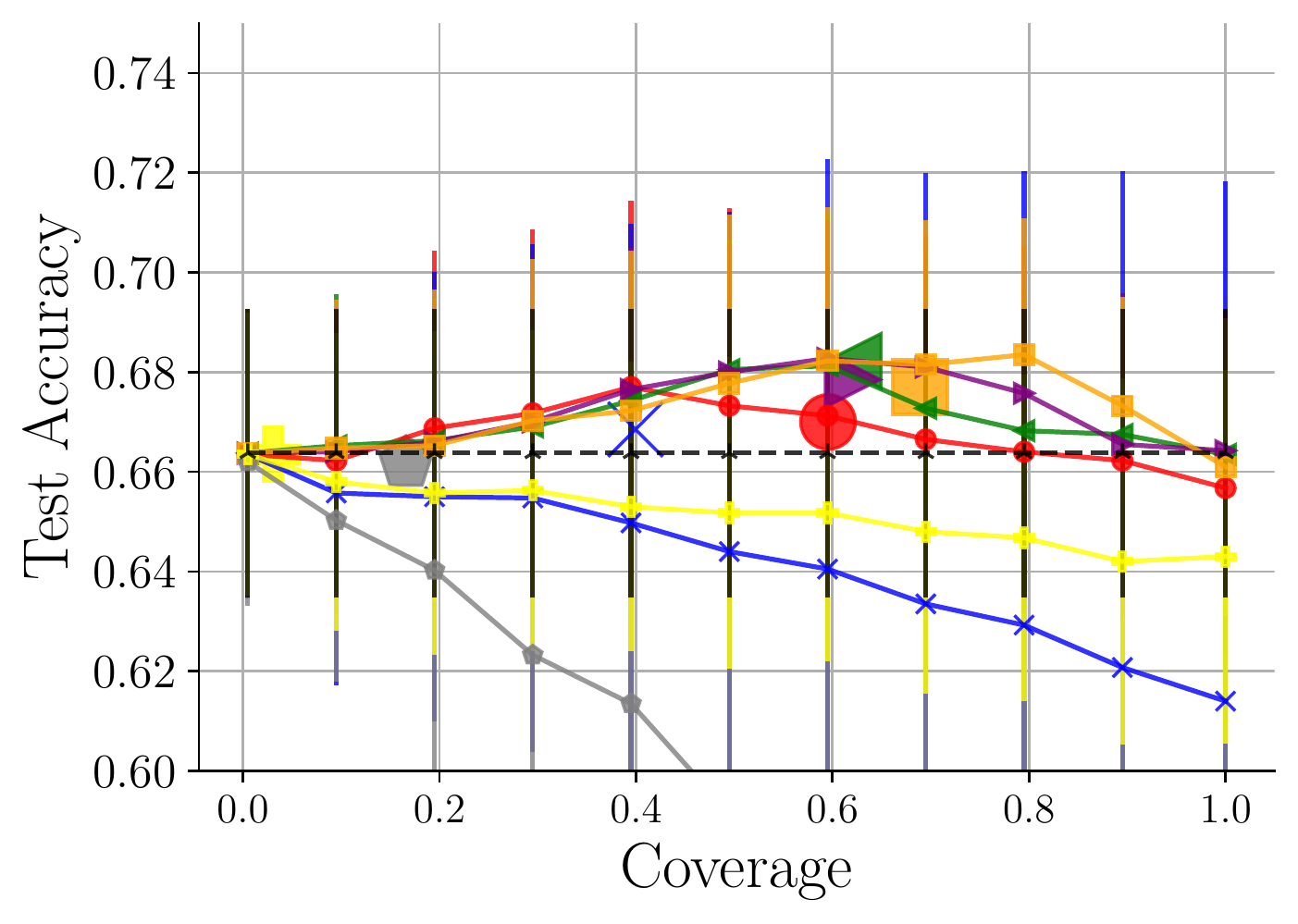}
        \subcaption{COMPASS}
    \end{subfigure} 
    \begin{subfigure}{0.5\textwidth}
        \centering
          \includegraphics[width=\textwidth]{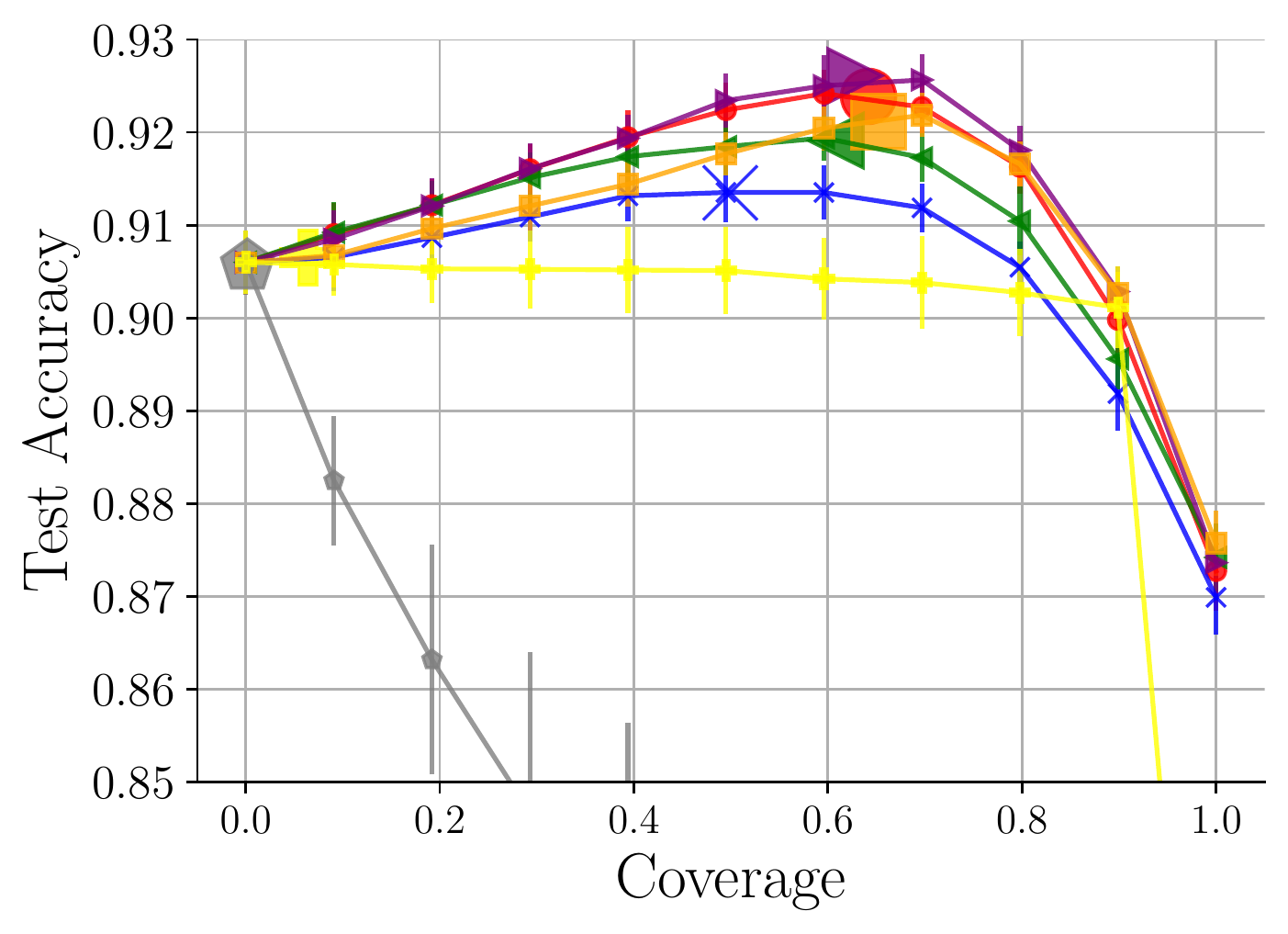}
        \subcaption{HateSpeech}
    \end{subfigure} 
            \begin{subfigure}{0.5\textwidth}
        \centering
          \includegraphics[width=\textwidth]{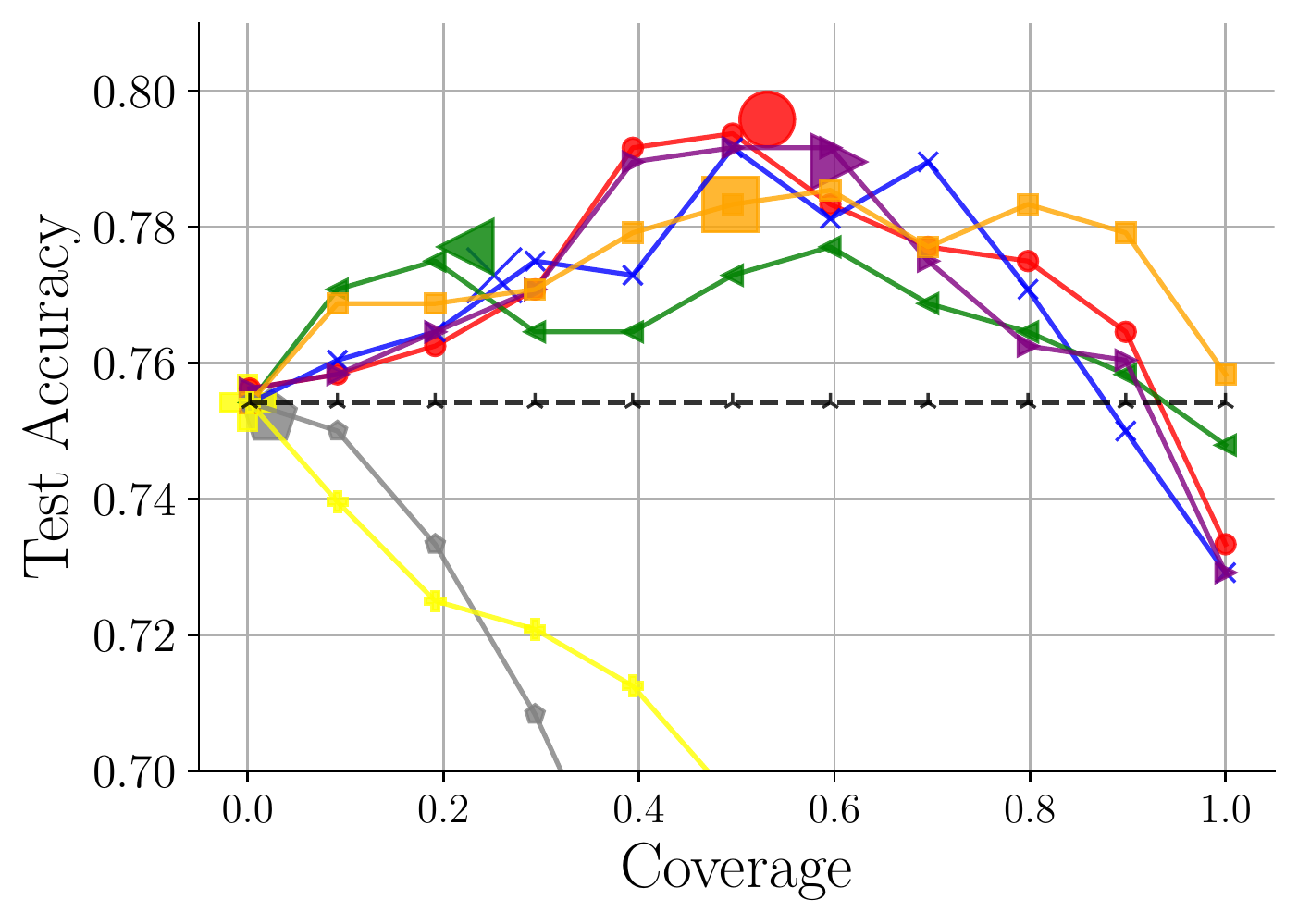}
        \subcaption{ImageNet-16H}
    \end{subfigure}
              \begin{subfigure}{0.5\textwidth}
        \centering
          \includegraphics[width=\textwidth]{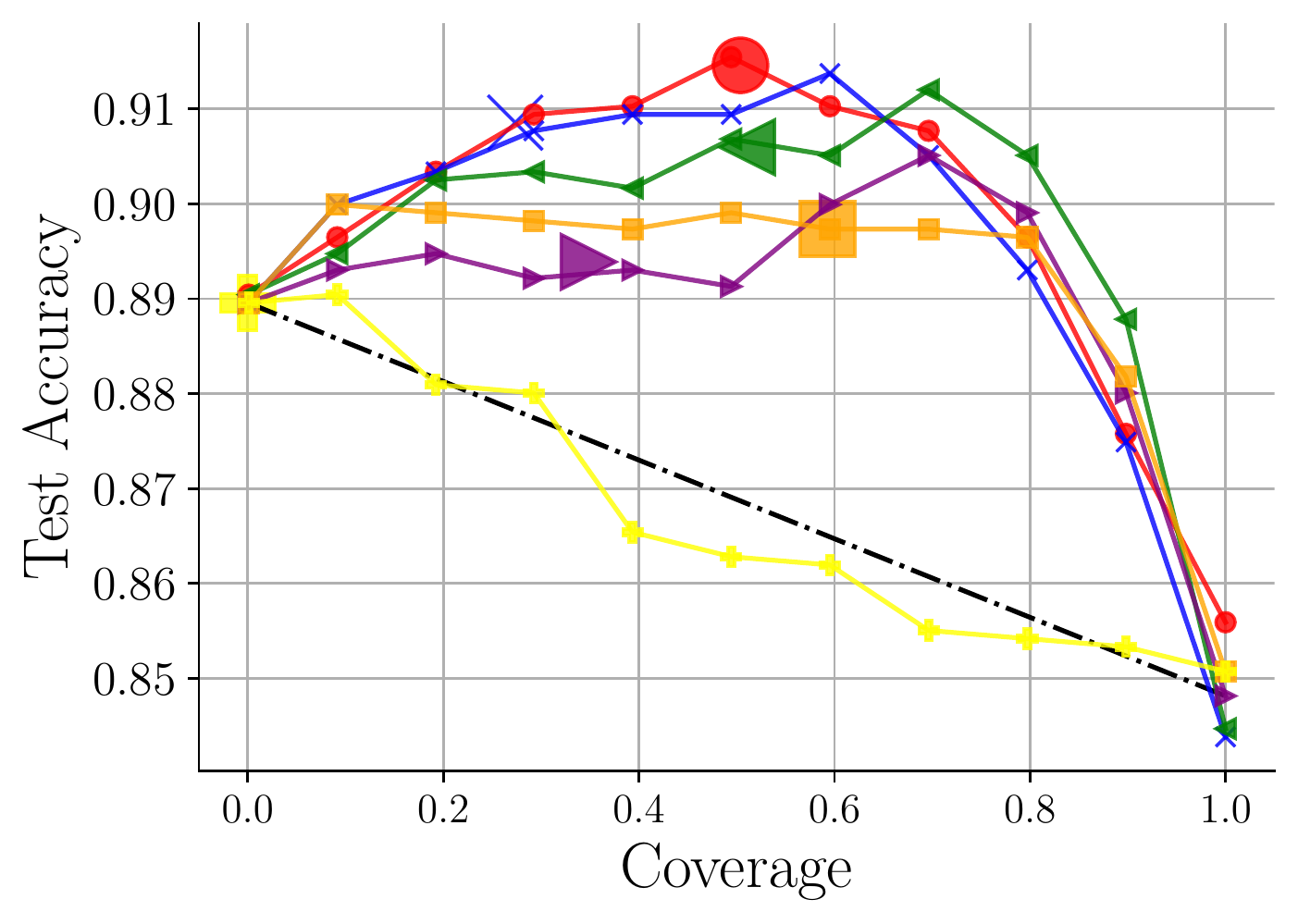}
        \subcaption{ Chest X-ray - Airspace Opacity}
    \end{subfigure}
            \begin{subfigure}{0.5\textwidth}
        \centering
          \includegraphics[width=\textwidth]{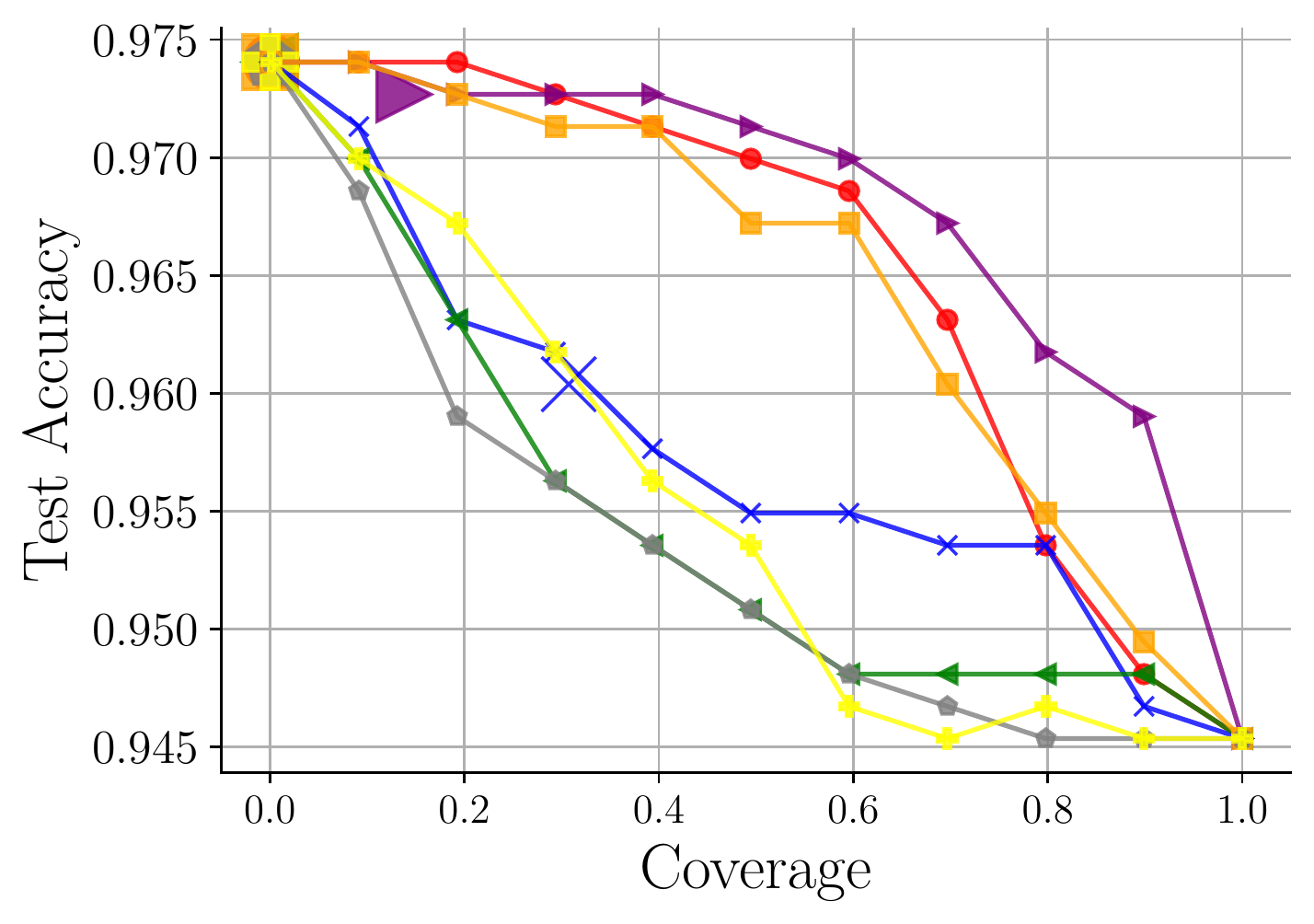}
        \subcaption{ Chest X-ray - Pneumothorax}
    \end{subfigure}
    \caption{\looseness=-1 Accuracy vs coverage (fraction of points where classifier predicts) plots across the real world datasets showcasing the behavior of our method and the baselines. On each plot, we showcase the test accuracy of each method with a large marker, with the curve representing varying the rejector threshold on the test set.  To achieve different levels of coverage, we sort the rejection score for each method on the test set and vary the threshold used, for \realizablesurrogate the rejector is defined as $r(x)=\bI_{g_{\bot}(x) - \max_y g_y(x) \geq c}$ where the optimal solution is at $c=0$ and we vary $c \in \mathbb{R}$ to obtain the curve. }
      \label{fig:real_datasets}
\end{figure}
\newpage

\begin{figure}[H]
    \centering
    \begin{subfigure}{0.5\textwidth}
        \centering
          \includegraphics[width=\textwidth]{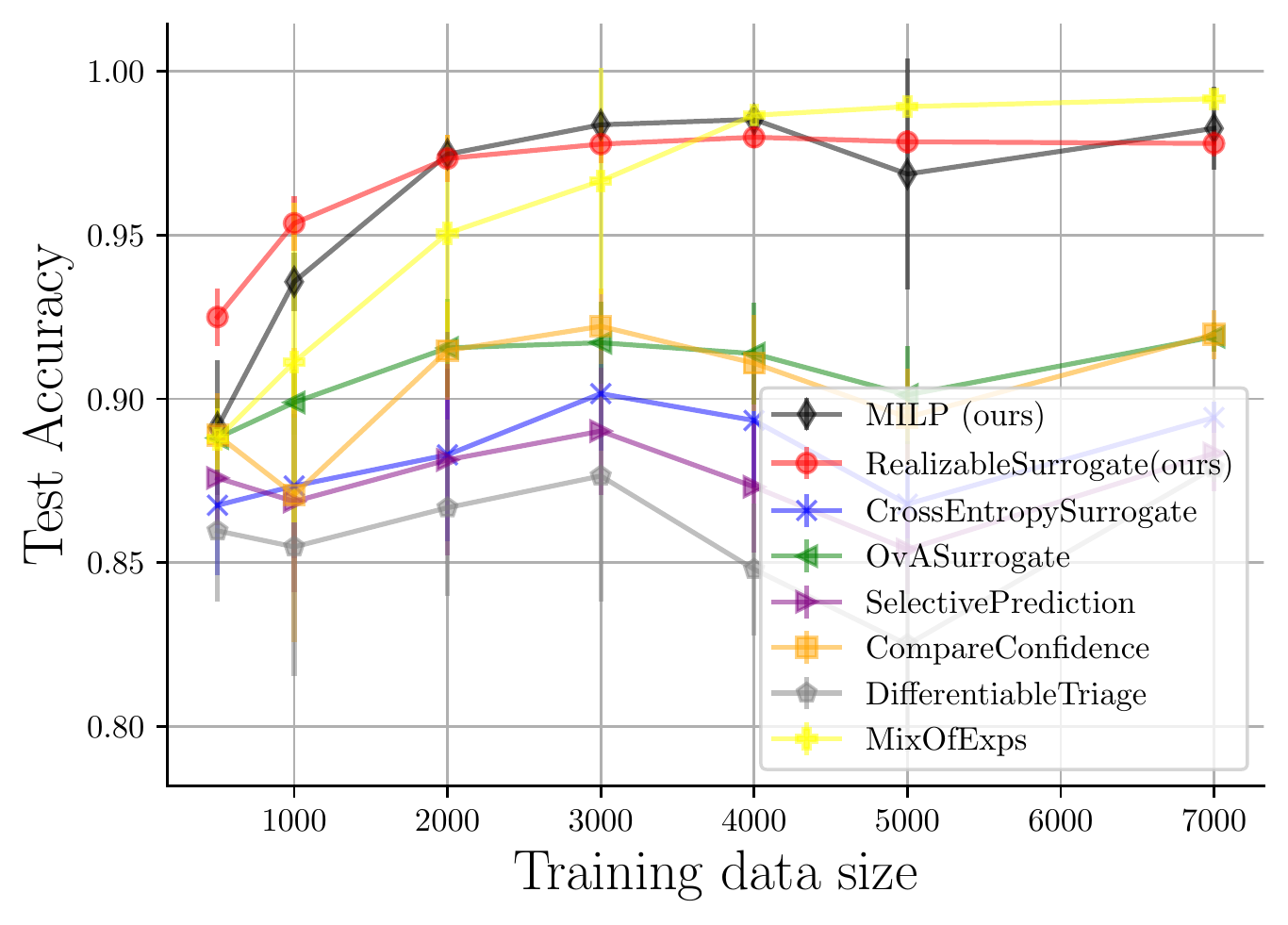}
        \subcaption{Synthetic Data Sample Complexity}
        \label{fig:synthetic-res}%
    \end{subfigure}\hfill%
    \begin{subfigure}{0.5\textwidth}
        \centering
          \includegraphics[width=\textwidth]{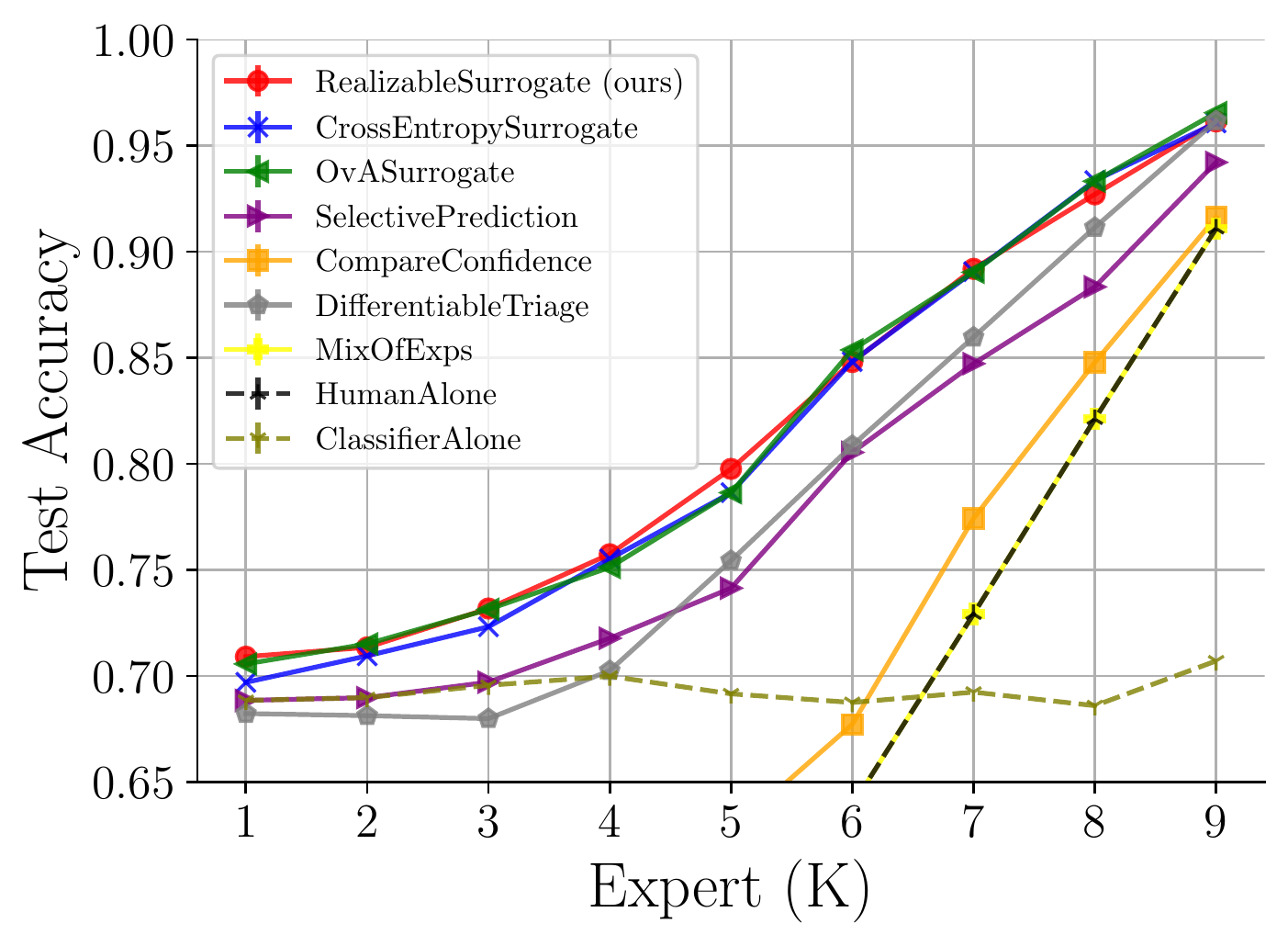}
        \subcaption{CIFAR-K Semi-Synthetic}
        \label{fig:cifark-res}%
    \end{subfigure}
    \caption{(a) Test performance of the different methods on synthetic data as we increase the training data size and repeat the randomization over 10 trials to get standard errors. (b) Test performance on the semi-synthetic CIFAR-K dataset vs.~the number of classes $K$ for which the expert is perfect.}
    \vspace{-6mm}
      \label{fig:semi-and-synthetic_results}
\end{figure}

\begin{table*}[t]
    \centering
        \caption{Datasets used for our benchmark for learning with deferral to humans. We note the total number of samples $n$, the target set size $|\mathcal{Y}|$,  the number of tasks in each dataset (a task is a set of human and target labels), the human expert where 'random annotator' means that for each point we have multiple human annotations and we let the target be a consensus and the human label be a random sample while 'separate human annotation' means that the human label is completely separate from the label annotations and finally the model class for both the classifier and rejector.}
    \resizebox{1\textwidth}{!}{
    \begin{tabular}{p{0.3 \textwidth}ccllp{0.4 \textwidth}}
    \toprule
         \textbf{Dataset} & $n$ & $\left| \mathcal{Y} \right|$ & \textbf{ Number of Tasks}  & \textbf{Human} & \textbf{Model Class} \\ 
         \toprule
         SyntheticData (ours) & arbitrary & 2 & 1 & synthetic &linear\\
         CIFAR-K  & 60k& 10 & 10 (per expert $k$) & synthetic (perfect on k classes) & CNN\\
         \midrule
         CIFAR-10H \citep{battleday2020capturing} & 10k & 10 & 1 & separate human annotation & pretrained WideResNet \citep{zagoruyko2016wide} \\
         Imagenet-16H \citep{kerrigan2021combining} & 1.2k & 16 & 4 (per noise version) & separate human annotation & pretrained  DenseNet121 \citep{huang2017densely}, finetuning last layer only \\
         HateSpeech \citep{davidson2017automated} & 25k & 3 & 1 & random annotator & FNN on  embeddings from SBERT \citep{reimers2019sentence} \\
         COMPASS \citep{dressel2018accuracy} & 1k & 2 & 1 & separate human annotation &  linear\\
         NIH Chest X-ray  \citep{wang2017hospital,majkowska2020chest}
 & 4k & 2 & 4 (for different conditions)  & random annotator& pretrained DenseNet121 on non-human labeled data \\ \bottomrule
    \end{tabular}}
    \label{tab:datasets}
\end{table*}

\subsection{Human-AI Deferral Benchmark}
\textbf{Objective.} We investigate the empirical performance of our proposed approaches compared to prior baselines on a range of datasets. Specifically, we want to compare the accuracy of the human-AI team at the learned classifier-rejector pairs. We also check the accuracy of the system when we change the deferral policy by varying the threshold used for the rejector, this leads to an accuracy-coverage plot where \emph{coverage} is defined as the fraction of the test points where the classifier predicts. 

\textbf{Datasets.} In Table \ref{tab:datasets} we list the datasets used in our benchmark. We start with synthetic data described below, then semi-synthetic data with CIFAR-K \citep{mozannar2020consistent}. We then evaluate on 5 real world datasets with three image classification domains with multiple tasks per domain, a natural language domain and a tabular domain. Each dataset is randomly split 70-10-20 for training-validation-testing respectively.

\textbf{Baselines.} We compare to multiple methods from the literature including: the confidence method from \citet{raghu2019algorithmic} (CompareConfidence), the surrogate $L_{CE}^\alpha$ from \citet{mozannar2020consistent} (CrossEntropySurrogate), the surrogate $\Psi_{\textrm{OvA}}$ from \citet{verma2022calibrated} (OvASurrogate),  Diff-Triage from \citet{okati2021differentiable} (DifferentiableTriage),
mixture of experts from \cite{madras2018predict} (MixOfExps)
and finally a selective prediction baseline that thresholds classifier confidence for the rejector (SelectivePrediction). 
For all baselines and datasets, we train using Adam and use the same learning rate and the same number of training epochs to ensure an equal footing across baselines, each run is repeated for 5 trials with different dataset splits. We track the best model in terms of system accuracy on a validation set for each training epoch and return the best-performing model. For \realizablesurrogate, we perform a hyperparameter search on the validation set over $\alpha \in [0,1]$, and do hyperparameter tuning over $L_{CE}^{\alpha}$.

\subsection{Synthetic and Semi-Synthetic Data}\label{sec:exp_synthetic}

\textbf{Synthetic Data.} We create a set of synthetic data distributions that are realizable by linear functions (or nearly so) to benchmark our approach. For the input $X$, we set the dimension $d$, and experiment with two data distributions. (1) Uniform distribution: we draw points $X \sim \mathrm{Unif}(0,U)^d$ where $U \in \mathbb{R}^+$; (2) Mixture-of-Gaussians: we fix some $K \in \mathbb{N}$ and generate data from $K$ equally weighted Gaussians, each with random uniform means and variances.
To obtain labels $Y$ that satisfy Assumption \ref{ass:realizability}, we generate two random halfspaces and denote one as the optimal classifier $m^*(x)$ and the other as the optimal rejector $r^*(x)$. 
We then set the labels $Y$ on the side where $r^*(x)=0$ to be consistent with $m^*(x)$ with probability $1-p_m$ and otherwise uniform.
When $r^*(x)=1$, we sample the labels uniformly.
Finally, we choose the human expert to have error $p_{h0} $ when $r^*(x)=0$ and have error $p_{h1}$ when $r^*(x)=1$. 
When $p_m=0, p_{h0} \in [0,1]$, and $p_{h1}=0$, this process generates datasets $D = \{x_i,y_i,h_i\}_{i=1}^n$ that satisfy Assumption \ref{ass:realizability}.
\begin{figure}[h]
    \centering
    \includegraphics[scale=0.7]{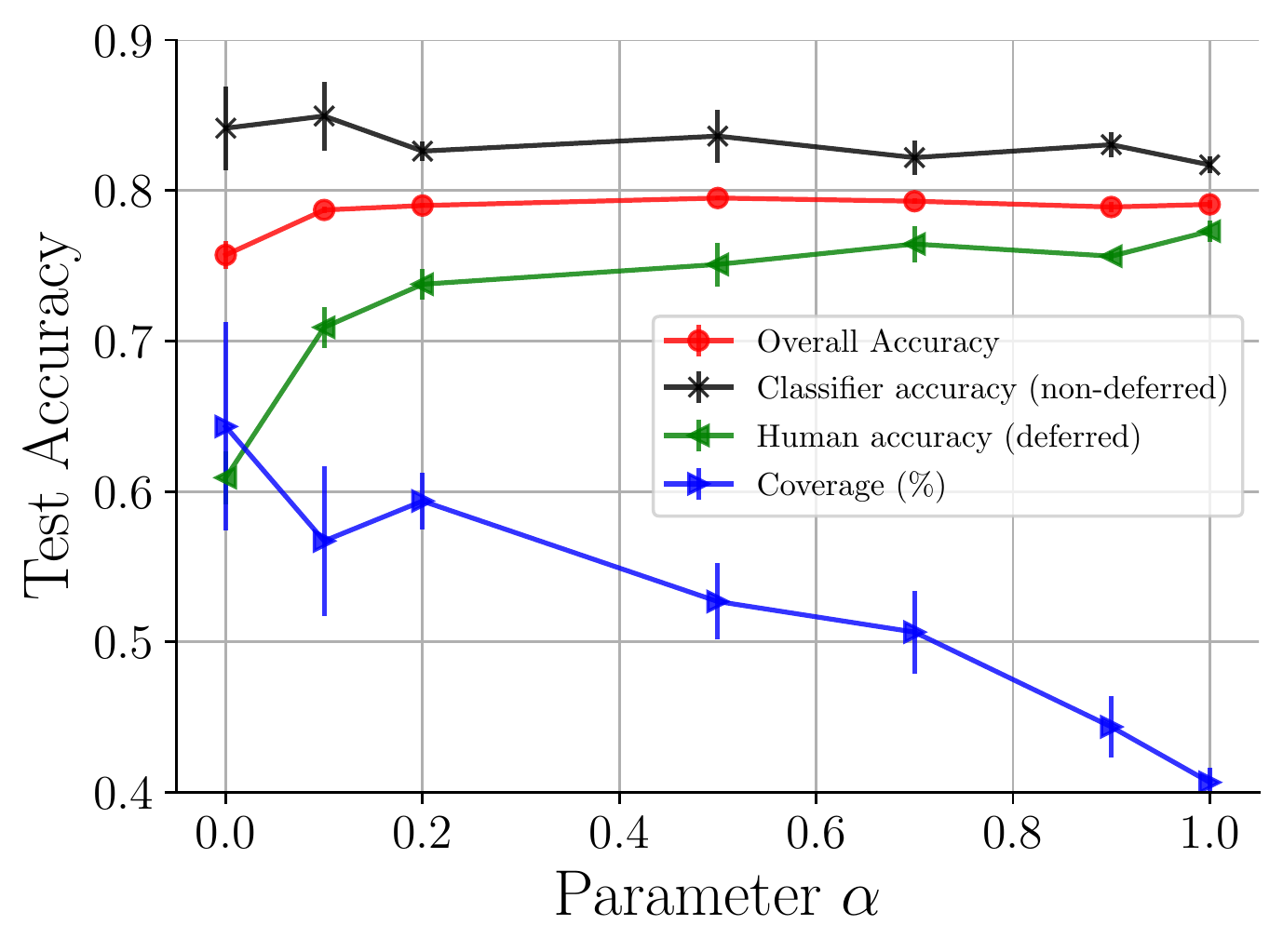}
    \caption{Sensitivity of the \realizablesurrogate to the hyperparameter $\alpha$. We vary the hyperparameter $\alpha$ in the \realizablesurrogate surrogate loss  and show the different metrics including overall accuracy, accuracy when we defer, accuracy when we don't defer, and finally coverage.}
    \label{fig:effect_alpha}
\end{figure}

\textbf{Sample Complexity.} \looseness=-1 For realizable data with a feature distribution that is mixture of Gaussians ($d=30$, $p_m=0, p_{h0}=0.3, p_{h1}=0$), Figure \ref{fig:synthetic-res} plots the test accuracy of the different methods on a held-out dataset of 5k points as we increase the training data size.
We observe that MILP and \realizablesurrogate are able to get close to zero error, while all other methods fail at finding a near zero-error solution.
We also experiment with non-realizable data. 
For example, when $p_m=0.1, p_{h0} =0.4, p_{h1}=0.1$ with $n=1000$, the optimal test error is $7.5 \pm 1.0 \%$  for the generated data: the MILP obtains $11.2$ error and \realizablesurrogate achieves $17.8 \pm 1.0$ error, while the best baseline CrossEntropySurrogate achieves $21.4 \pm 1.1$ error.
In the Appendix, we show results on the uniform data distribution, which shows an identical pattern, and we study the run-time and performance of the MILP as we increase the error probabilities.  

\textbf{CIFAR-K.} We use the CIFAR-10 image
classification dataset \citep{krizhevsky2009learning} and employ a simple  convolution neural network (CNN) with three layers. 
 We consider the human expert models from \citet{mozannar2020consistent,verma2022calibrated}:  if the image is in the first $K$ classes the expert is perfect, otherwise the expert predicts randomly.  Figure \ref{fig:cifark-res} shows the test accuracy of the different methods as we vary the expert strength $K$. \realizablesurrogate outperforms the second-best method by 0.8\% on average and up to 2.8\% maximum showcasing that the method can perform well for non-linear predictors. 

\subsection{Realistic Data}

\textbf{Models.} \looseness=-1 In Figure \ref{fig:real_datasets}, we showcase the test accuracy of the different baselines on the real datasets in Table \ref{tab:datasets}, and illustrate their behavior when we constrain our method and the baselines to achieve different levels of coverage. The test accuracy of the operating point on the different datasets is shown in Table \ref{tab:results_summarized}.
We can see that $L_{RS}^{\alpha}$ is competitive with the best baseline on each dataset/task.
Moreover, we see that the human-AI team is often able to achieve performance that is higher than the human or classifier on their own. The methods often achieve peak performance at a coverage rate that is not at the extremes of [0,1], and on each of the six datasets we notice variability between the peak accuracy coverage rate  indicating tat they are finding different solutions. 
This demonstrates that deferral using $L_{RS}^{\alpha}$ is able to achieve complementary human-AI team performance in practice.
In summary, the new surrogate $L_{RS}$ performs as well as the MILP on synthetic data, and as well as all the baselines (or better) on real-world data. Note that Differentiable Triage on these datasets is underperforming as we are testing it on a setting beyond the paper as here we only have samples of expert predictions instead of probabilities from the expert.

\begin{table}[h!]
\centering
\caption{Test accuracy of the operating point of the different methods on the datasets tested on. The baselines are $L_{CE}$ \citep{mozannar2020consistent}, $\Psi_{\textrm{OvA}}$ \citep{verma2022calibrated}, Selective Prediction (SP), Compare Confidence (CP) \citep{raghu2019algorithmic}, DIFT \citep{okati2021differentiable}  and MoE \citep{madras2018predict}.}

\resizebox{\linewidth}{!}{
\begin{tabular}{cccccccc}
\hline
\textbf{Dataset} & $L_{RS}^{\alpha}$ (ours) & $L_{CE}$ & $\Psi_{\textrm{OvA}}$ & SP & CC & DIFT & MoE \\
\hline
Synthetic Realizable& 0.979 & 0.891 & 0.918 & 0.882 & 0.918 & 0.870 & \textbf{0.992} \\
Synthetic Non-Realizable &\textbf{ 0.879 }& 0.828 & 0.839 & 0.797 & 0.836 & 0.770 & 0.774 \\
Cifar-K (K=5) & \textbf{0.795} & 0.785 & 0.786 & 0.747 & 0.621 & 0.749 & 0.550 \\
Compass & 0.670 & 0.668 & \textbf{0.682} & 0.678 & 0.677 & 0.662 & 0.663 \\
Cifar-10H & \textbf{0.969} & 0.960 & 0.963 & 0.966 & 0.968 & 0.949 & 0.953 \\
Hate Speech & 0.924 & 0.913 & 0.919 & \textbf{0.926} & 0.921 & 0.906 & 0.907 \\
ImageNet16H (noise 80)& \textbf{0.912} & 0.908 & 0.909 & 0.910 & 0.908 & 0.898 & 0.904 \\
ImageNet16H (noise 95)& 0.865 & 0.872 & 0.872 & \textbf{0.875} & 0.868 & 0.856 & 0.861 \\
ImageNet16H (noise 110)& 0.802 & 0.791 & 0.791 & \textbf{0.809} & 0.792 & 0.756 & 0.761 \\
ImageNet16H (noise 125)& 0.755 & 0.707 & 0.732 & \textbf{0.756} & 0.743 & 0.655 & 0.604 \\
Pneumothorax & 0.976 & 0.963 & \textbf{0.978} & 0.972 & \textbf{0.978} & \textbf{0.978} & \textbf{0.978} \\
Airspace Opacity & \textbf{0.913} & 0.908 & 0.906 & 0.899 & 0.905 & 0.894 & 0.894 \\
\hline \\
\end{tabular}}
\label{tab:results_summarized}
\end{table}

\paragraph{Hyperparameter $\alpha$.} We show how the behavior of the classifier and rejector system changes when we modify the hyperparameter $\alpha \in [0,1]$ in Figure \ref{fig:effect_alpha}. When $\alpha$ is small, the behavior of the surrogate is the same as selective prediction which is why we see the lowest accuracy of the human when we defer. As $\alpha$ increases to $1$, we can see that the system better adapts to the human. 

\paragraph{Recommendations: Which Method to Use?} Given our experimental results, the question to ask is which method should be used for a given dataset and model class. The simple and natural baseline of CompareConfidence should be the first tool one applies to their setting, it often achieves good performance,  outperforming the naive baseline SelectivePrediction. However, CompareConfidence does not allow the classifier to adapt to the humans strengths and weaknesses. The surrogates CrossEntropySurrogate and OvASurrogate when applied with expressive model classes such as deep networks can find complementary classifiers. The surrogates offer other advantages, notably, CrossEntropySurrogate has been shown to have better sample complexity over the CompareConfidence baseline and can incorporate arbitrary costs of deferral and prediction \citep{mozannar2020consistent}.
However, as our synthetic experiments have shown, there is a limit of the CrossEntropySurrogate and OvASurrogate surrogats to how much they can complement the human and defer accordingly. This is where our proposed methods MILPDefer and \texttt{RealizableSurrogate} come in. We recommend using the MILP in settings with limited data where linear models are suitable as it can achieve optimal performance, however, one must carefully tune regularization parameters to not overfit. If the data is realizable, then the \texttt{RealizableSurrogate} is also optimal and is much easier to optimize, one can apply the surrogate without knowing beforehand if the data is realizable. \texttt{RealizableSurrogate} works well with linear and non-linear model classes, and performs the best under model resource constraints, we recommend using it broadly when optimizing accuracy.

\section{Conclusion}
\looseness=-1
We have shown that properly learning halfspaces with deferral (LWD-H) is computationally hard and that existing approaches in the literature fail in this setting.
Understanding the computational limits of learning to defer led to the design of a new exact algorithm (the MILP) and a new surrogate (\realizablesurrogatenosp) that both obtain better empirical performance than existing surrogate approaches.
Studying $(\cM,\cR)$-consistency in the non-realizable setting, obtaining conditions under which nonconvex surrogates like $L_{RS}$ can be provably and efficiently minimized, and considering \emph{online} versions of learning to defer are interesting directions for future work.
As human-AI teams are deployed in real-world decision-making scenarios, better and safer methods for training these systems are of critical interest. Giving the AI the power to allow the human to predict or not  requires very careful optimization of the rejector so that we have favorable outcomes, this motivates the need for exact algorithms with guarantees.

\section*{Acknowledgments}
HM is thankful for the support of the MIT-IBM Watson AI Lab.  

\bibliography{ref}

\begin{thebibliography}{51}
\providecommand{\natexlab}[1]{#1}
\providecommand{\url}[1]{\texttt{#1}}
\expandafter\ifx\csname urlstyle\endcsname\relax
  \providecommand{\doi}[1]{doi: #1}\else
  \providecommand{\doi}{doi: \begingroup \urlstyle{rm}\Url}\fi

\bibitem[Acar et~al.(2020)Acar, Gangrade, and Saligrama]{acar2020budget}
D.~A.~E. Acar, A.~Gangrade, and V.~Saligrama.
\newblock Budget learning via bracketing.
\newblock In \emph{International Conference on Artificial Intelligence and
  Statistics}, pages 4109--4119. PMLR, 2020.

\bibitem[Bartlett and Wegkamp(2008)]{bartlett2008classification}
P.~L. Bartlett and M.~H. Wegkamp.
\newblock Classification with a reject option using a hinge loss.
\newblock \emph{Journal of Machine Learning Research}, 9\penalty0
  (Aug):\penalty0 1823--1840, 2008.

\bibitem[Bartlett et~al.(2006)Bartlett, Jordan, and
  McAuliffe]{bartlett2006convexity}
P.~L. Bartlett, M.~I. Jordan, and J.~D. McAuliffe.
\newblock Convexity, classification, and risk bounds.
\newblock \emph{Journal of the American Statistical Association}, 101\penalty0
  (473):\penalty0 138--156, 2006.

\bibitem[Battleday et~al.(2020)Battleday, Peterson, and
  Griffiths]{battleday2020capturing}
R.~M. Battleday, J.~C. Peterson, and T.~L. Griffiths.
\newblock Capturing human categorization of natural images by combining deep
  networks and cognitive models.
\newblock \emph{Nature communications}, 11\penalty0 (1):\penalty0 1--14, 2020.

\bibitem[Beede et~al.(2020)Beede, Baylor, Hersch, Iurchenko, Wilcox,
  Ruamviboonsuk, and Vardoulakis]{beede2020human}
E.~Beede, E.~Baylor, F.~Hersch, A.~Iurchenko, L.~Wilcox, P.~Ruamviboonsuk, and
  L.~M. Vardoulakis.
\newblock A human-centered evaluation of a deep learning system deployed in
  clinics for the detection of diabetic retinopathy.
\newblock In \emph{Proceedings of the 2020 CHI Conference on Human Factors in
  Computing Systems}, pages 1--12, 2020.

\bibitem[Bengio et~al.(2013)Bengio, Courville, and
  Vincent]{bengio2013representation}
Y.~Bengio, A.~Courville, and P.~Vincent.
\newblock Representation learning: A review and new perspectives.
\newblock \emph{IEEE transactions on pattern analysis and machine
  intelligence}, 35\penalty0 (8):\penalty0 1798--1828, 2013.

\bibitem[Blum and Rivest(1988)]{blum1988training}
A.~Blum and R.~Rivest.
\newblock Training a 3-node neural network is np-complete.
\newblock \emph{Advances in neural information processing systems}, 1, 1988.

\bibitem[Boyd and Vandenberghe(2004)]{boyd2004convex}
S.~Boyd and L.~Vandenberghe.
\newblock \emph{Convex optimization}.
\newblock Cambridge university press, 2004.

\bibitem[Charoenphakdee et~al.(2021)Charoenphakdee, Cui, Zhang, and
  Sugiyama]{charoenphakdee2021classification}
N.~Charoenphakdee, Z.~Cui, Y.~Zhang, and M.~Sugiyama.
\newblock Classification with rejection based on cost-sensitive classification.
\newblock In \emph{International Conference on Machine Learning}, pages
  1507--1517. PMLR, 2021.

\bibitem[Charusaie et~al.(2022)Charusaie, Mozannar, Sontag, and
  Samadi]{charusaie2022sample}
M.-A. Charusaie, H.~Mozannar, D.~Sontag, and S.~Samadi.
\newblock Sample efficient learning of predictors that complement humans.
\newblock In \emph{International Conference on Machine Learning}, pages
  2972--3005. PMLR, 2022.

\bibitem[Chow(1970)]{chow1970optimum}
C.~Chow.
\newblock On optimum recognition error and reject tradeoff.
\newblock \emph{IEEE Transactions on information theory}, 16\penalty0
  (1):\penalty0 41--46, 1970.

\bibitem[Cortes et~al.(2016)Cortes, DeSalvo, and Mohri]{cortes2016learning}
C.~Cortes, G.~DeSalvo, and M.~Mohri.
\newblock Learning with rejection.
\newblock In \emph{International Conference on Algorithmic Learning Theory},
  pages 67--82. Springer, 2016.

\bibitem[Davidson et~al.(2017)Davidson, Warmsley, Macy, and
  Weber]{davidson2017automated}
T.~Davidson, D.~Warmsley, M.~Macy, and I.~Weber.
\newblock Automated hate speech detection and the problem of offensive
  language.
\newblock In \emph{Eleventh international aaai conference on web and social
  media}, 2017.

\bibitem[De et~al.(2020)De, Koley, Ganguly, and
  Gomez-Rodriguez]{de2020regression}
A.~De, P.~Koley, N.~Ganguly, and M.~Gomez-Rodriguez.
\newblock Regression under human assistance.
\newblock In \emph{Proceedings of the AAAI Conference on Artificial
  Intelligence}, volume~34, pages 2611--2620, 2020.

\bibitem[Dressel and Farid(2018)]{dressel2018accuracy}
J.~Dressel and H.~Farid.
\newblock The accuracy, fairness, and limits of predicting recidivism.
\newblock \emph{Science advances}, 4\penalty0 (1):\penalty0 eaao5580, 2018.

\bibitem[El-Yaniv and Wiener(2010)]{el2010foundations}
R.~El-Yaniv and Y.~Wiener.
\newblock On the foundations of noise-free selective classification.
\newblock \emph{Journal of Machine Learning Research}, 11\penalty0
  (May):\penalty0 1605--1641, 2010.

\bibitem[Gangrade et~al.(2021)Gangrade, Kag, and
  Saligrama]{gangrade2021selective}
A.~Gangrade, A.~Kag, and V.~Saligrama.
\newblock Selective classification via one-sided prediction.
\newblock In \emph{International Conference on Artificial Intelligence and
  Statistics}, pages 2179--2187. PMLR, 2021.

\bibitem[Geifman and El-Yaniv(2017)]{geifman2017selective}
Y.~Geifman and R.~El-Yaniv.
\newblock Selective classification for deep neural networks.
\newblock In \emph{Advances in neural information processing systems}, pages
  4878--4887, 2017.

\bibitem[{Gurobi Optimization, LLC}(2022)]{gurobi}
{Gurobi Optimization, LLC}.
\newblock {Gurobi Optimizer Reference Manual}, 2022.
\newblock URL \url{https://www.gurobi.com}.

\bibitem[Guruswami and Raghavendra(2009)]{guruswami2009hardness}
V.~Guruswami and P.~Raghavendra.
\newblock Hardness of learning halfspaces with noise.
\newblock \emph{SIAM Journal on Computing}, 39\penalty0 (2):\penalty0 742--765,
  2009.

\bibitem[Huang et~al.(2017)Huang, Liu, Van Der~Maaten, and
  Weinberger]{huang2017densely}
G.~Huang, Z.~Liu, L.~Van Der~Maaten, and K.~Q. Weinberger.
\newblock Densely connected convolutional networks.
\newblock In \emph{Proceedings of the IEEE conference on computer vision and
  pattern recognition}, pages 4700--4708, 2017.

\bibitem[Jacobs et~al.(2021)Jacobs, Pradier, McCoy, Perlis, Doshi-Velez, and
  Gajos]{jacobs2021machine}
M.~Jacobs, M.~F. Pradier, T.~H. McCoy, R.~H. Perlis, F.~Doshi-Velez, and K.~Z.
  Gajos.
\newblock How machine-learning recommendations influence clinician treatment
  selections: the example of antidepressant selection.
\newblock \emph{Translational psychiatry}, 11\penalty0 (1):\penalty0 1--9,
  2021.

\bibitem[Kakade and Tewari(2008)]{rademacherlinear}
S.~Kakade and A.~Tewari.
\newblock Rademacher composition and linear prediction.
\newblock \url{https://home.ttic.edu/~tewari/lectures/lecture17.pdf}, February
  2008.

\bibitem[Kerrigan et~al.(2021)Kerrigan, Smyth, and
  Steyvers]{kerrigan2021combining}
G.~Kerrigan, P.~Smyth, and M.~Steyvers.
\newblock Combining human predictions with model probabilities via confusion
  matrices and calibration.
\newblock \emph{Advances in Neural Information Processing Systems}, 34, 2021.

\bibitem[Keswani et~al.(2021)Keswani, Lease, and
  Kenthapadi]{keswani2021towards}
V.~Keswani, M.~Lease, and K.~Kenthapadi.
\newblock Towards unbiased and accurate deferral to multiple experts.
\newblock \emph{arXiv preprint arXiv:2102.13004}, 2021.

\bibitem[Khot and Saket(2011)]{khot2011hardness}
S.~Khot and R.~Saket.
\newblock On the hardness of learning intersections of two halfspaces.
\newblock \emph{Journal of Computer and System Sciences}, 77\penalty0
  (1):\penalty0 129--141, 2011.

\bibitem[Kingma and Ba(2014)]{kingma2014adam}
D.~P. Kingma and J.~Ba.
\newblock Adam: A method for stochastic optimization.
\newblock \emph{arXiv preprint arXiv:1412.6980}, 2014.

\bibitem[Krizhevsky et~al.(2009)Krizhevsky, Hinton,
  et~al.]{krizhevsky2009learning}
A.~Krizhevsky, G.~Hinton, et~al.
\newblock Learning multiple layers of features from tiny images.
\newblock \emph{Citeseer}, 2009.

\bibitem[Kumar et~al.(2022)Kumar, Raghunathan, Jones, Ma, and
  Liang]{kumar2021fine}
A.~Kumar, A.~Raghunathan, R.~M. Jones, T.~Ma, and P.~Liang.
\newblock Fine-tuning can distort pretrained features and underperform
  out-of-distribution.
\newblock In \emph{International Conference on Learning Representations}, 2022.

\bibitem[Lin(2002)]{lin2002support}
Y.~Lin.
\newblock Support vector machines and the bayes rule in classification.
\newblock \emph{Data Mining and Knowledge Discovery}, 6\penalty0 (3):\penalty0
  259--275, 2002.

\bibitem[Liu et~al.(2021{\natexlab{a}})Liu, Lai, and Tan]{liu2021understanding}
H.~Liu, V.~Lai, and C.~Tan.
\newblock Understanding the effect of out-of-distribution examples and
  interactive explanations on human-ai decision making.
\newblock \emph{Proceedings of the ACM on Human-Computer Interaction},
  5\penalty0 (CSCW2):\penalty0 1--45, 2021{\natexlab{a}}.

\bibitem[Liu et~al.(2021{\natexlab{b}})Liu, Gallego, and
  Barbieri]{liu2021incorporating}
J.~Liu, B.~Gallego, and S.~Barbieri.
\newblock Incorporating uncertainty in learning to defer algorithms for safe
  computer-aided diagnosis.
\newblock \emph{arXiv preprint arXiv:2108.07392}, 2021{\natexlab{b}}.

\bibitem[Long and Servedio(2013)]{long2013consistency}
P.~Long and R.~Servedio.
\newblock Consistency versus realizable h-consistency for multiclass
  classification.
\newblock In \emph{International Conference on Machine Learning}, pages
  801--809. PMLR, 2013.

\bibitem[Loshchilov and Hutter(2017)]{loshchilov2017decoupled}
I.~Loshchilov and F.~Hutter.
\newblock Decoupled weight decay regularization.
\newblock \emph{arXiv preprint arXiv:1711.05101}, 2017.

\bibitem[Madras et~al.(2018)Madras, Pitassi, and Zemel]{madras2018predict}
D.~Madras, T.~Pitassi, and R.~Zemel.
\newblock Predict responsibly: Improving fairness and accuracy by learning to
  defer.
\newblock In \emph{Advances in Neural Information Processing Systems}, pages
  6150--6160, 2018.

\bibitem[Majkowska et~al.(2020)Majkowska, Mittal, Steiner, Reicher, McKinney,
  Duggan, Eswaran, Cameron~Chen, Liu, Kalidindi, et~al.]{majkowska2020chest}
A.~Majkowska, S.~Mittal, D.~F. Steiner, J.~J. Reicher, S.~M. McKinney, G.~E.
  Duggan, K.~Eswaran, P.-H. Cameron~Chen, Y.~Liu, S.~R. Kalidindi, et~al.
\newblock Chest radiograph interpretation with deep learning models: assessment
  with radiologist-adjudicated reference standards and population-adjusted
  evaluation.
\newblock \emph{Radiology}, 294\penalty0 (2):\penalty0 421--431, 2020.

\bibitem[Mozannar and Sontag(2020)]{mozannar2020consistent}
H.~Mozannar and D.~Sontag.
\newblock Consistent estimators for learning to defer to an expert.
\newblock In \emph{International Conference on Machine Learning}, pages
  7076--7087. PMLR, 2020.

\bibitem[Mozannar et~al.(2022)Mozannar, Satyanarayan, and
  Sontag]{mozannar2021teaching}
H.~Mozannar, A.~Satyanarayan, and D.~Sontag.
\newblock Teaching humans when to defer to a classifier via exemplars.
\newblock In \emph{Proceedings of the Thirty-Sixth AAAI Conference on
  Artificial Intelligence (AAAI)}, 2022.

\bibitem[Nguyen and Sanner(2013)]{nguyen2013algorithms}
T.~Nguyen and S.~Sanner.
\newblock Algorithms for direct 0--1 loss optimization in binary
  classification.
\newblock In \emph{International Conference on Machine Learning}, pages
  1085--1093. PMLR, 2013.

\bibitem[Okati et~al.(2021)Okati, De, and
  Gomez-Rodriguez]{okati2021differentiable}
N.~Okati, A.~De, and M.~Gomez-Rodriguez.
\newblock Differentiable learning under triage.
\newblock \emph{arXiv preprint arXiv:2103.08902}, 2021.

\bibitem[Pradier et~al.(2021)Pradier, Zazo, Parbhoo, Perlis, Zazzi, and
  Doshi-Velez]{pradier2021preferential}
M.~F. Pradier, J.~Zazo, S.~Parbhoo, R.~H. Perlis, M.~Zazzi, and F.~Doshi-Velez.
\newblock Preferential mixture-of-experts: Interpretable models that rely on
  human expertise as much as possible.
\newblock \emph{arXiv preprint arXiv:2101.05360}, 2021.

\bibitem[Raghu et~al.(2019)Raghu, Blumer, Corrado, Kleinberg, Obermeyer, and
  Mullainathan]{raghu2019algorithmic}
M.~Raghu, K.~Blumer, G.~Corrado, J.~Kleinberg, Z.~Obermeyer, and
  S.~Mullainathan.
\newblock The algorithmic automation problem: Prediction, triage, and human
  effort.
\newblock \emph{arXiv preprint arXiv:1903.12220}, 2019.

\bibitem[Raman and Yee(2021)]{raman2021improving}
N.~Raman and M.~Yee.
\newblock Improving learning-to-defer algorithms through fine-tuning.
\newblock \emph{arXiv preprint arXiv:2112.10768}, 2021.

\bibitem[Razavian et~al.(2015)Razavian, Blecker, Schmidt, Smith-McLallen,
  Nigam, and Sontag]{razavian2015population}
N.~Razavian, S.~Blecker, A.~M. Schmidt, A.~Smith-McLallen, S.~Nigam, and
  D.~Sontag.
\newblock Population-level prediction of type 2 diabetes from claims data and
  analysis of risk factors.
\newblock \emph{Big Data}, 3\penalty0 (4):\penalty0 277--287, 2015.

\bibitem[Reimers and Gurevych(2019)]{reimers2019sentence}
N.~Reimers and I.~Gurevych.
\newblock Sentence-bert: Sentence embeddings using siamese bert-networks.
\newblock \emph{arXiv preprint arXiv:1908.10084}, 2019.

\bibitem[Ustun and Rudin(2016)]{ustun2016supersparse}
B.~Ustun and C.~Rudin.
\newblock Supersparse linear integer models for optimized medical scoring
  systems.
\newblock \emph{Machine Learning}, 102\penalty0 (3):\penalty0 349--391, 2016.

\bibitem[Verma and Nalisnick(2022)]{verma2022calibrated}
R.~Verma and E.~Nalisnick.
\newblock Calibrated learning to defer with one-vs-all classifiers.
\newblock \emph{arXiv preprint arXiv:2202.03673}, 2022.

\bibitem[Wang et~al.(2017)Wang, Peng, Lu, Lu, Bagheri, and
  Summers]{wang2017hospital}
X.~Wang, Y.~Peng, L.~Lu, Z.~Lu, M.~Bagheri, and R.~Summers.
\newblock Hospital-scale chest x-ray database and benchmarks on
  weakly-supervised classification and localization of common thorax diseases.
\newblock In \emph{IEEE CVPR}, volume~7, 2017.

\bibitem[Wilder et~al.(2020)Wilder, Horvitz, and Kamar]{wilder2020learning}
B.~Wilder, E.~Horvitz, and E.~Kamar.
\newblock Learning to complement humans.
\newblock \emph{arXiv preprint arXiv:2005.00582}, 2020.

\bibitem[Zagoruyko and Komodakis(2016)]{zagoruyko2016wide}
S.~Zagoruyko and N.~Komodakis.
\newblock Wide residual networks.
\newblock \emph{arXiv preprint arXiv:1605.07146}, 2016.

\bibitem[Zhang and Agarwal(2020)]{zhang2020bayes}
M.~Zhang and S.~Agarwal.
\newblock Bayes consistency vs. h-consistency: The interplay between surrogate
  loss functions and the scoring function class.
\newblock \emph{Advances in neural information processing systems},
  33:\penalty0 16927--16936, 2020.

\end{thebibliography}
\appendix
\onecolumn

\section{Practitioner's guide to our approach}\label{apx:guide}

\subsection{MILP}

We implement the MILP \eqref{eq:milp_obj}-\eqref{eq:mil_last_cst} in the binary setting using the Gurobi Optimizer \cite{gurobi} in Python.  

\begin{python}
class MILPDefer:
    def __init__(self, n_classes, time_limit=-1, add_regularization=False,
                lambda_reg=1, verbose=False):
        self.n_classes = n_classes
        self.time_limit = time_limit
        self.verbose = verbose
        self.add_regularization = add_regularization
        self.lambda_reg = lambda_reg

    def fit(self, dataloader_train, dataloader_val, dataloader_test):
        self.fit_binary(dataloader_train, dataloader_val, dataloader_test)

    def fit_binary(self, dataloader_train, dataloader_val, dataloader_test):
        data_x = dataloader_train.dataset.tensors[0]
        data_y = dataloader_train.dataset.tensors[1]
        human_predictions = dataloader_train.dataset.tensors[2]

        C = 1
        gamma = 0.00001
        Mi = C + gamma
        Ki = C + gamma
        max_data = len(data_x)
        hum_preds = 2*np.array(human_predictions) - 1
        # add extra dimension to x
        data_x_original = torch.clone(data_x)
        norm_scale = max(torch.norm(data_x_original, p=1, dim=1))
        last_time = time.time()
        # normalize data_x and then add dimension
        data_x = torch.cat((torch.ones((len(data_x)), 1),
                           data_x/norm_scale), dim=1).numpy()
        data_y = 2*data_y - 1  # covert to 1, -1
        max_data = max_data  # len(data_x)
        dimension = data_x.shape[1]

        model = gp.Model("milp_deferral")
        model.Params.IntFeasTol = 1e-9
        model.Params.MIPFocus = 0
        if self.time_limit != -1:
            model.Params.TimeLimit = self.time_limit

        H = model.addVars(dimension, lb=[-C] *
                          dimension, ub=[C]*dimension, name="H")
        Hnorm = model.addVars(
            dimension, lb=[0]*dimension, ub=[C]*dimension, name="Hnorm")
        Rnorm = model.addVars(
            dimension, lb=[0]*dimension, ub=[C]*dimension, name="Rnorm")
        R = model.addVars(dimension, lb=[-C] *
                          dimension, ub=[C]*dimension, name="R")
        phii = model.addVars(max_data, vtype=gp.GRB.CONTINUOUS, lb=0)
        psii = model.addVars(max_data, vtype=gp.GRB.BINARY)
        ri = model.addVars(max_data, vtype=gp.GRB.BINARY)

        equal = np.array(data_y) == hum_preds * 1.0
        human_err = 1-equal

        if self.add_regularization:
            model.setObjective(gp.quicksum([phii[i] + ri[i]*human_err[i]
            for i in range(max_data)])/max_data + self.lambda_reg * gp.quicksum(
                [Hnorm[j] for j in range(dimension)])
                + self.lambda_reg * gp.quicksum([Rnorm[j] for j in range(dimension)]))
        else:
            model.setObjective(gp.quicksum(
                [phii[i] + ri[i]*human_err[i] for i in range(max_data)])/max_data)
        for i in range(max_data):
            model.addConstr(phii[i] >= psii[i] - ri[i], name="phii" + str(i))
            model.addConstr(Mi*psii[i] >= gamma - data_y[i]*gp.quicksum(
                H[j] * data_x[i][j] for j in range(dimension)), name="psii" + str(i))
            model.addConstr(gp.quicksum([R[j]*data_x[i][j] for j in range(dimension)]) >=
            Ki*( ri[i]-1) + gamma*ri[i], name="Riub" + str(i))
            model.addConstr(gp.quicksum([R[j]*data_x[i][j] for j in range(
                dimension)]) <= Ki*ri[i] + gamma*(ri[i]-1),  name="Rilb" + str(i))
            model.update()
        if self.add_regularization:
            for j in range(dimension):
                model.addConstr(Hnorm[j] >= H[j], name="Hnorm1" + str(j))
                model.addConstr(Hnorm[j] >= -H[j], name="Hnorm2" + str(j))
                model.addConstr(Rnorm[j] >= R[j], name="Rnorm1" + str(j))
                model.addConstr(Rnorm[j] >= -R[j], name="Rnorm2" + str(j))

        model.ModelSense = 1  # minimize
        model._time = time.time()
        model._time0 = time.time()
        model._cur_obj = float('inf')
        # model.write('model.lp')
        if self.verbose:
            model.optimize()
        else:
            model.optimize()
        # check if halspace solution has 0 error
        error_v = 0
        rejs = 0
        for i in range(max_data):
            rej_raw = np.sum([R[j].X * data_x[i][j] for j in range(dimension)])
            pred_raw = np.sum([H[j].X * data_x[i][j]
                              for j in range(dimension)])
            if rej_raw > 0:
                rejs += 1
                error_v += (data_y[i] * hum_preds[i] != 1)
            else:
                pred = (pred_raw > 0)
                error_v += (data_y[i] != (2*pred-1))

        self.H = [H[j].X for j in range(dimension)]
        self.R = [R[j].X for j in range(dimension)]
        self.run_time = model.Runtime
        self.norm_scale = norm_scale
        self.train_error = error_v/max_data
\end{python}

\subsection{Realizable Surrogate}

We implement the \realizablesurrogate in PyTorch. We showcase the loss function $L_{RS}$ below:
\begin{python}
def realizable_surrogate_loss(outputs, human_is_correct, labels, lambdaa):
    '''
    outputs (tensor): outputs of model with K+1 output heads (without softmax)
    human_is_correct (tensor): binary tensor indicating if human is
                    correct on each point I_{h=y}
    labels (tensor): list of targets y_i
    lambdaa (float in [0,1]): trade-off parameter in loss
    
    return: loss (single tensor)
    '''
    batch_size = outputs.size()[0]            
    outputs_exp = torch.exp(outputs)
    rs_loss =  -torch.log2(( m * outputs_exp[range(batch_size), -1] 
    + outputs_exp[range(batch_size),labels] )  /(torch.sum(outputs_exp, dim = 1) +eps_cst ))   
    ce_loss = -torch.log2(( outputs_exp[range(batch_size),labels] ) 
            /(torch.sum(outputs_exp[range(batch_size),:-1], dim = 1)  +eps_cst ))  
    loss = lambdaa*rs_loss + (1-lambdaa)*ce_loss 
    return torch.sum(loss)/batch_size

\end{python}


\section{MILP}\label{apx:milp}

\subsection{Verification}\label{apx:milp_verify}
The MILP in the binary setting is formulated as:
\begin{align}
M^*, R^*, . &= \arg \min_{M,R,\{r_i\},\{t_i\},\{\phi_i\}} \sum_{i} \phi_i + r_i \bI_{h_i \neq y_i} \\
& \phi_i \geq t_i - r_i, \qquad \phi_i \geq 0 \quad  \forall i \in [n] \\
& K_m t_i \geq \gamma_h - y_i M^\top x_i \quad \forall i \in [n] \label{eq: constraint_milp_ti} \\
& R^{\top} x_i \leq K_r r_i + \gamma_r (r_i - 1) , \quad  R^{\top} x_i \geq K_r (r_i - 1) + \gamma_r r_i  \quad \forall i \in [n]\\ 
& -C \leq R_i \leq C , \quad -C \leq M_i \leq C \quad \forall i \in [d] \\
& r_i \in \{0,1\}, t_i \in \{0,1\}, \phi_i \in \mathbb{R}^+ \quad \forall i \in [n],
\ R,M \in \mathbb{R}^d
\end{align}

\textbf{Extension to Multiclass.} The above MILP only applies to binary labels but we can generalize it to the multiclass setting where $\mathcal{Y} = \{1,\cdots, C\}$. In this case, we have a coefficient vector $M_j$ for each class $j \in \mathcal{Y}$, and $m(x) = \arg \max_{j \in \mathcal{Y}} M_j^\top x$. Given a labeled point $(x,y)$, we let $c_j = \textrm{sign}( M_y^\top x - M_j^\top x) $ for $j \neq y$, and let $t_i = \bI_{\sum_{j \neq y} c_j < C-1}$. Then if $m(x) = y$, we must have $c_j = 1$ for all $j \neq y$ and thus $t_i =0$ which means that the classifier is correct. Similarly, if there exists a $j \neq y$ for which $c_j = -1$, it means the classifier is incorrect and accordingly $t_i =1$. We can reformulate these indicator constraints using a similar big-M approach as above. The formulation is below:

\begin{align}
M^*, R^*, . &= \arg \min_{M,R,\{r_i\},\{t_i\}, \{c_{ij}\},\{\phi_i\}} \sum_{i} \phi_i + r_i \bI_{h_i \neq y_i} \\
& \phi_i \geq t_i - r_i, \qquad \phi_i \geq 0 \quad  \forall i \in [n] \\
& (M_{y_i} - M_{j})^\top x_i \leq 2K_h c_{ij}  + \gamma_h (c_{ij} - 1) , \nonumber
\\&(M_{y_i} - M_{j})^\top x_i \geq 2K_h (c_{ij}  - 1) + \gamma_h c_{ij}   \quad \forall i \in [n] \ \forall j \in [C] \neq y_i \\
& t_i \geq (C - 1 - \sum_{j \in [L], j != y_i} c_{ij})/(C-1) \\
& R^{\top} x_i \leq K_r r_i + \gamma_r (r_i - 1) , \quad  R^{\top} x_i \geq K_r (r_i - 1) + \gamma_r r_i  \quad \forall i \in [n]\\ 
& -C \leq R_i \leq C , \quad -C \leq M[i,l] \leq C \quad \forall i \in [d] \ \forall l \in [C] \\
& r_i \in \{0,1\}, t_i \in \{0,1\}, c_{ij} \in \{0,1\}, \phi_i \in \mathbb{R}^+ \quad \forall i \in [n],
\ R,M \in \mathbb{R}^d
\end{align}

Let us verify the formulations above.

The variable $\phi_i \geq \max(t_i- r_i,0)$, the RHS takes values either 0 or 1, since $\phi_i$ in the objective then the optimal value is either $0$ or $1$ as well so that $\phi_i = \max(t_i - r_i,0) = (1-r_i)t_i$. 

For $t_i$ in the binary case:  when $y_i M^\top x_i$ is positive, then $\gamma_h - y_i M^\top x_i$ is negative since $|M^\top x_i| \geq \gamma_h$ by Assumption \ref{ass: margin}, so that to satisfy constraint \eqref{eq: constraint_milp_ti} either value of $0$ or $1$ are valid for $t_i$, however since $t_i$ shows up in the objective then the optimal value is $0$. On the other hand, when $y_i M^\top x_i$ is negative, then $\gamma_h - y_i M^\top x_i$ is positive, so that the only valid option for $t_i$ is $1$ and since $M^\top x_i \leq K_m$ then the constraint can be satisfied. So that we proved that $t_i = sign(y_i M^\top x_i)$.

We previously verified constraint for $r_i$ and $R$ in the body. When $r_i=0$ then we have the constraints $ R^{\top} x_i  \leq - \gamma_r$ and $R^{\top} x_i \geq -K_r$: this forces the rejector to be negative which is consistent. When $r_i=1$, we have $R^{\top} x_i  \geq  \gamma_r$ and $R^{\top} x_i \leq K_r$: which means the rejector is positive.  Thus we proved $r_i = \bI(R^\top x_i \geq 0 )$.

For $t_i$ in the multiclass settings: by analogy to the constraints for $R$ and $r_i$ it is easy to see that the variable $c_{ij} = sign( H_{y_i}^\top x_i - H_{j}^\top x_i)$. For a given $x_{i},y_{i}$, the classification is only correct if $c_{ij}=1$ for all $j \in [C] \neq y_i$ so that $\arg\max_j H_i^\top x_i =y_i$. We can then see that we set $t_i = \bI( \sum_{j \neq y_i} c_{ij} /(C-1) \neq 1 )$ so that $t_i$ denotes the error of our classifier on example $i$.

\section{Experimental Details and Results}\label{apx:experiments}

\subsection{Baseline Implementation}\label{apx:baselines}

OvASurrogate \citep{verma2022calibrated}: We rely on the loss implementation available online at \footnote{\url{https://github.com/rajevv/OvA-L2D}}.

DifferentiableTriage \citep{okati2021differentiable}: We rely on the implementation in \footnote{\url{https://github.com/Networks-Learning/differentiable-learning-under-triage}}. Note that the differentiable triage method implementation in \cite{okati2021differentiable} relies on having loss estimates of the human, particularly cross entropy loss estimates, which requires the conditional probabilities $\bP(H=i|X=x)$ for each $i \in \mathcal{Y}$. However, in our setting, we only have samples of the human decisions $m_i$, not probabilistic estimates. The method can be summarized as a two-stage method: 1) classifier training: at each epoch, only train on points where classifier loss is lower than human loss, 2) rejector training: fit the rejector to predict who between the classifier and the human has lower loss. Since we only have samples of human behavior, we use the $0-1$ loss of the classifier and the human on an example basis for comparison. 

CrossEntropySurrogate \citep{mozannar2020consistent}: We rely on the implementation in \footnote{\url{https://github.com/clinicalml/learn-to-defer}}. We tune the parameter $\alpha$ over the grid $[0, 0.1, 0.5, 1]$ on the validation set.

CompareConfidence \citep{raghu2019algorithmic}: we train the classifier using the cross entropy loss on all the data, we then train a model to predict if the human is correct or not on each example in the training set. For each test point, we compare the confidence of the classifier versus the human correctness model and defer accordingly.

SelectivePrediction: we train the classifier using the cross entropy loss on all the data, for the rejector, we learn a single threshold on the validation set for the classifier confidence (probability of the predicted class) in order to maximize system accuracy.

\subsection{Training Details}

\begin{table}[H]
    \centering
        \caption{Training details for each dataset, we use the Adam optimizer \citep{kingma2014adam} and AdamW \citep{loshchilov2017decoupled} }
    \begin{tabular}{p{0.3 \textwidth}ccllp{0.4 \textwidth}}
    \toprule
         \textbf{Dataset} & Optimizer & Number of Epochs  & Learning Rate \\ 
         \toprule
         SyntheticData (ours) & Adam & 300 & 0.1\\
         CIFAR-K  &  Adam & 100 & 0.001 \\
         \midrule
         CIFAR-10H \citep{battleday2020capturing} & AdamW & 20 & 0.001 \\
         Imagenet-16H \citep{kerrigan2021combining} & Adam & 20 & 0.001 \\
         HateSpeech \citep{davidson2017automated} & Adam & 50 & 0.001 \\
         COMPASS \citep{dressel2018accuracy} & Adam &300 & 0.1\\
         NIH Chest X-ray  \citep{wang2017hospital,majkowska2020chest}
 & AdamW & 3 & 0.001 \\ \bottomrule
    \end{tabular}
    \label{tab:training_details}
\end{table}


\subsection{Synthetic Data} \label{apx:synth_data}

 We show in Figure \ref{fig:synthetic-res-uniform} the performance of the different methods with the same setup with the uniform data distribution.

\begin{figure}[H]
    \centering
          \includegraphics[scale = 0.7]{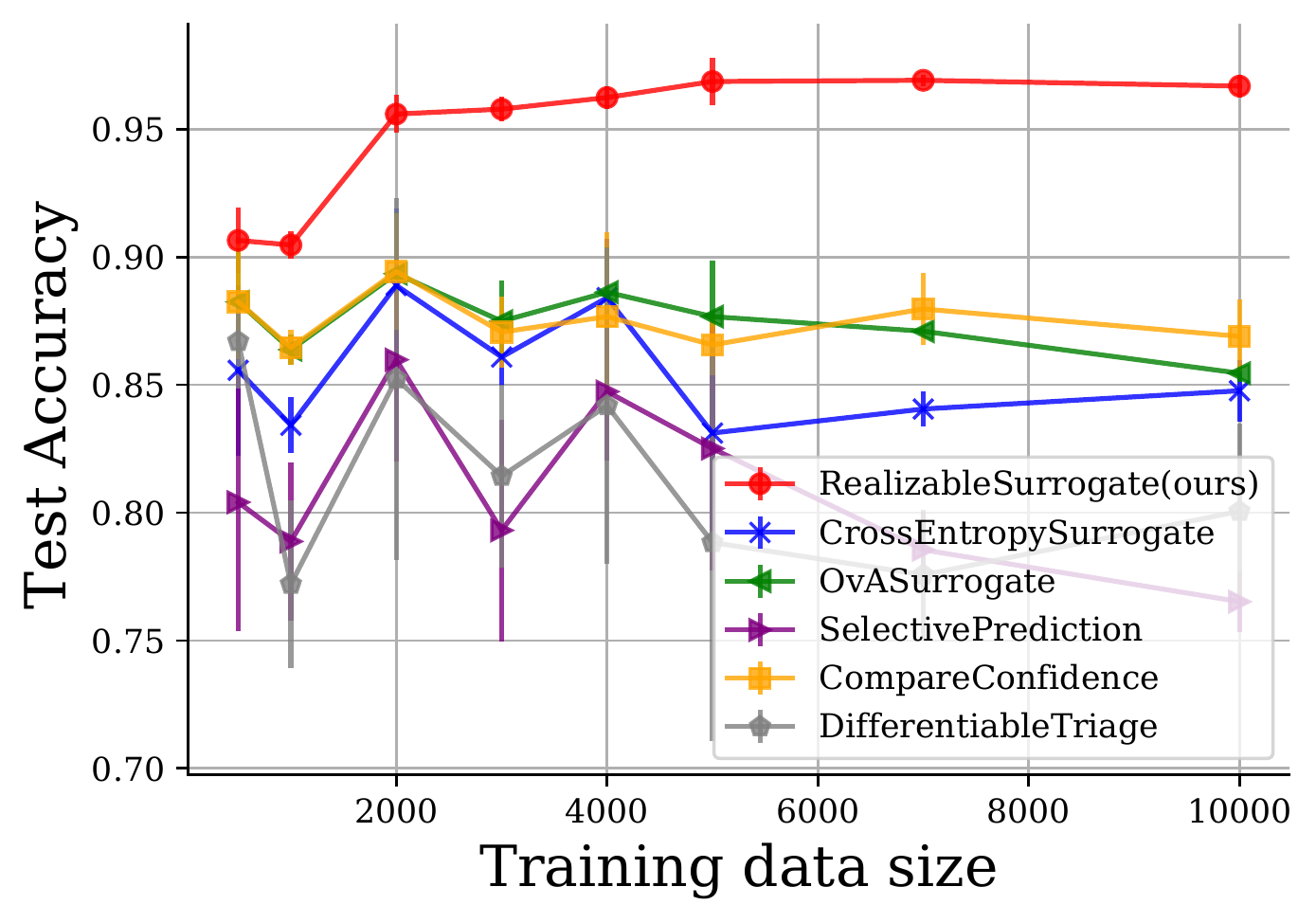}

    \caption{(Test performance of the different methods on realizable synthetic data as we increase the training data size and repeat the randomization over 10 trials to get standard errors on uniform data.}
        \label{fig:synthetic-res-uniform}

\end{figure}

We also experiment with making the data unrealizable by setting   ($d=10$, $p_m=0.1, p_{h0}=0.4, p_{h1}=0.1$, Gaussian distribution with 20 clusters) in Figure \ref{fig:synthetic-res-gaussain-unrez}. 

\begin{figure}[H]
    \centering
          \includegraphics[scale = 0.7]{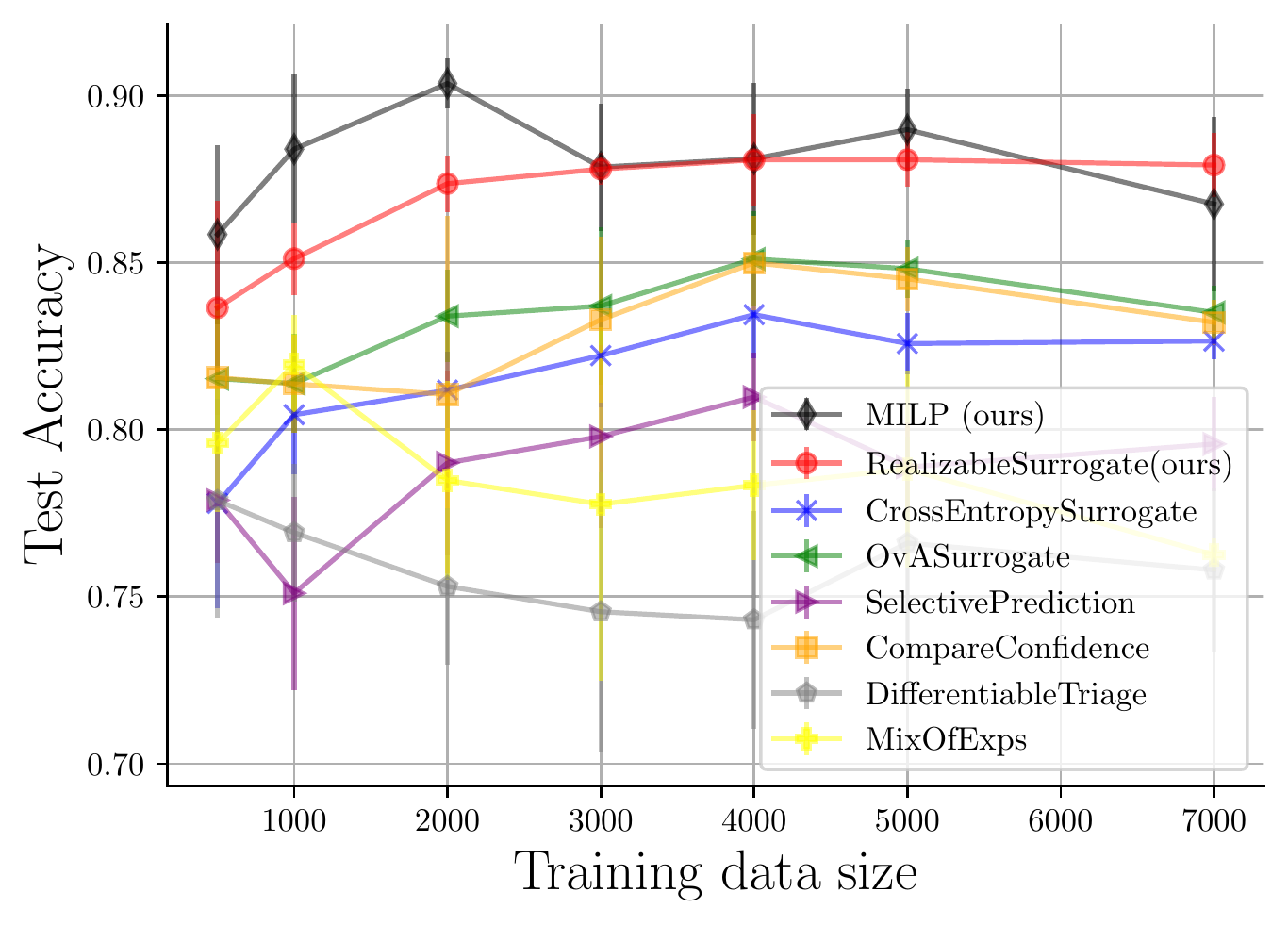}

    \caption{(Test performance of the different methods on unrealizable ($d=10$, $p_m=0.1, p_{h0}=0.4, p_{h1}=0.1$, Gaussian distribution with 20 clusters)  synthetic data as we increase the training data size.}
        \label{fig:synthetic-res-gaussain-unrez}

\end{figure}

We also show average run-times for the MILP on the synthetic data as we increase the dimension in Figure \ref{fig:synthetic-res-dimension} and as we increase the training data size in Figure \ref{fig:synthetic-res-size}. The distribution was uniform and realizable  with $p_m=0.0, p_{h0}=0.3, p_{h1}=0.0$. We observe that the run time increases with training set size which is the biggest bottleneck. The runtime also increases with dimension up until the dimension is of the same order as the number of training points, afterwards it is  faster for the MILP to find a 0 error solution.

\begin{figure}[t]
    \centering
    \begin{subfigure}{0.48\textwidth}
        \centering
          \includegraphics[width=\textwidth]{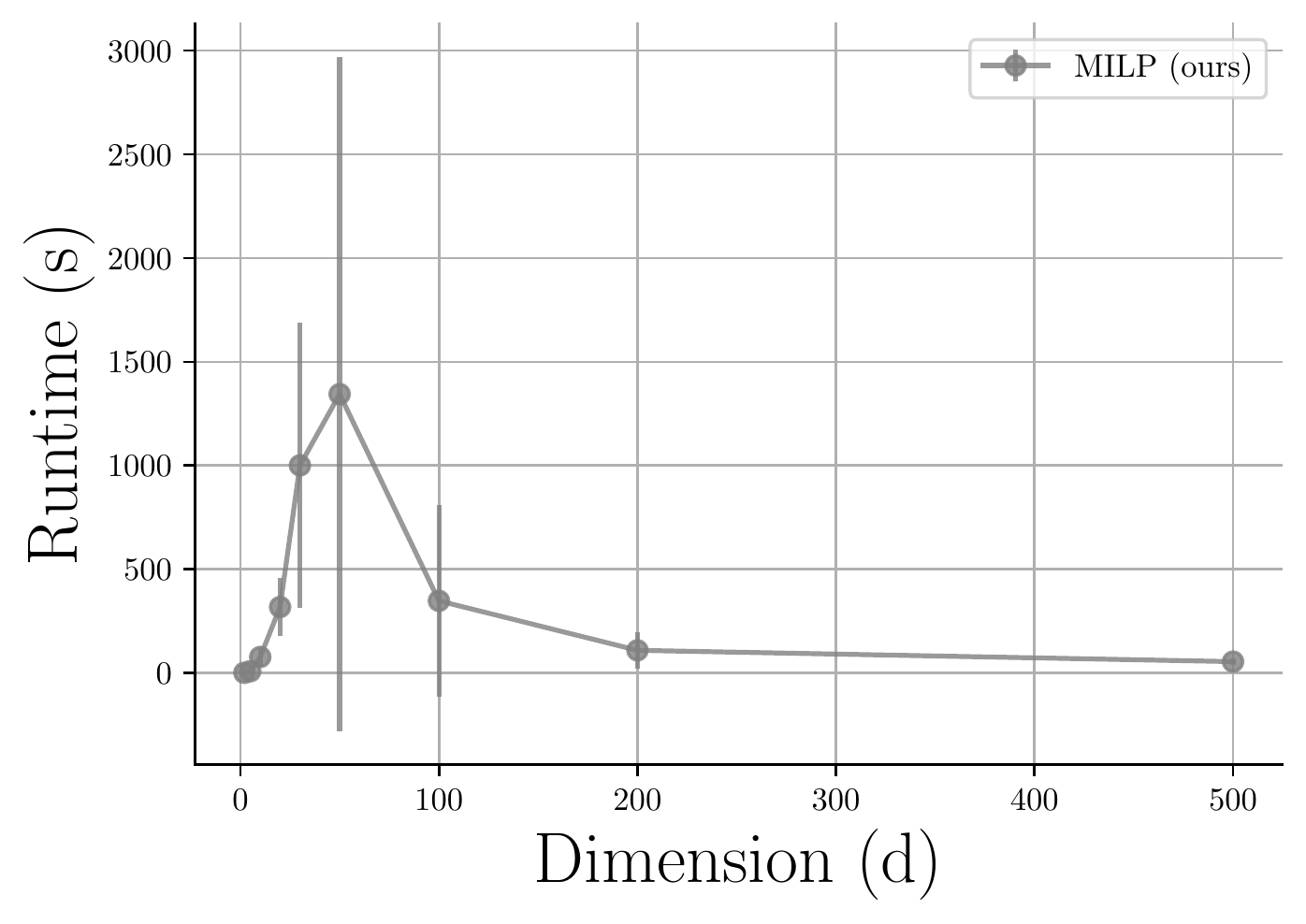}
        \subcaption{Runtime with increasing dimension, $n=1000$}
        \label{fig:synthetic-res-dimension}%
    \end{subfigure}\hfill%
    \begin{subfigure}{0.48\textwidth}
        \centering
          \includegraphics[width=\textwidth]{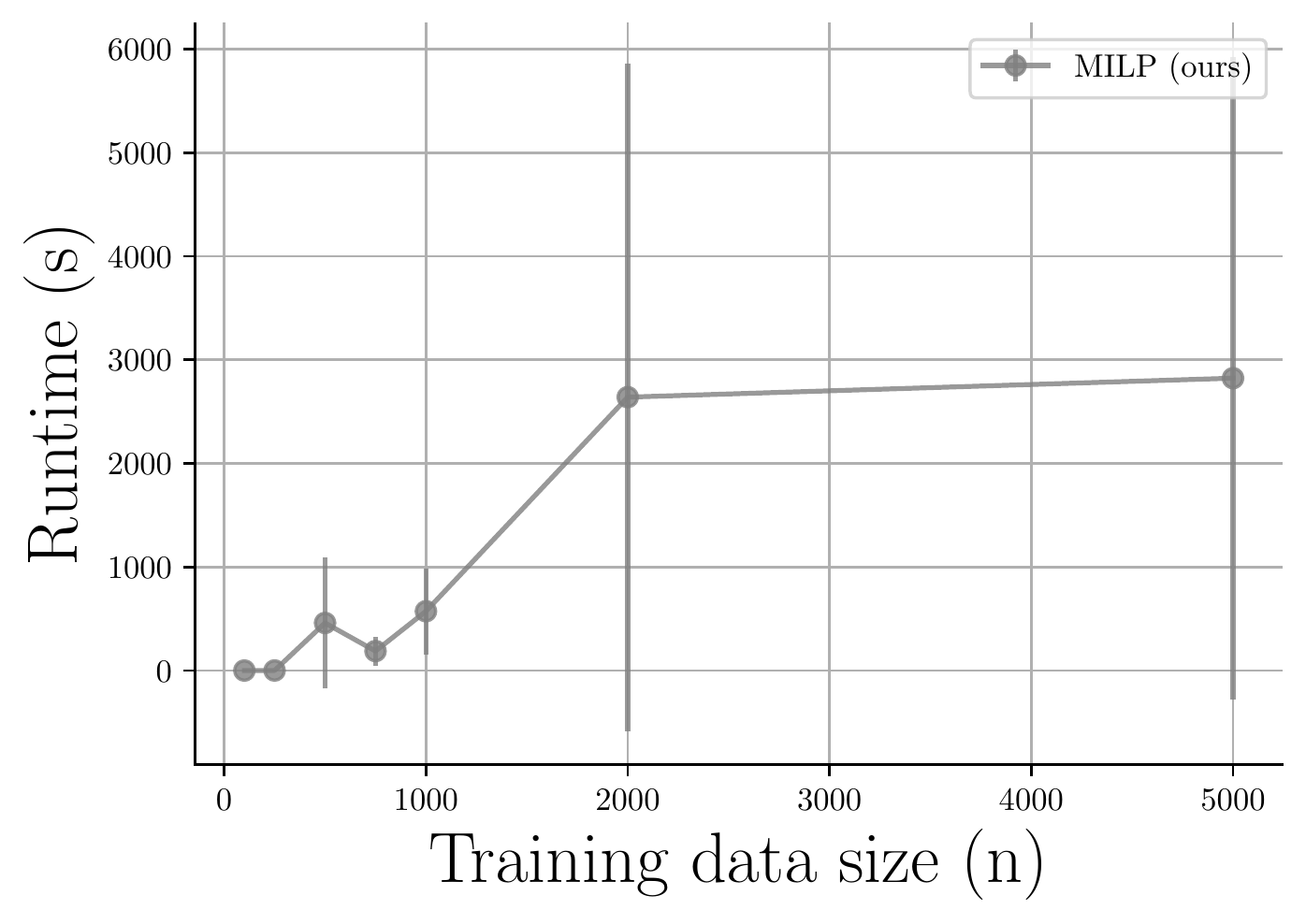}
        \subcaption{Runtime with increasing training data size, $d=30$}
        \label{fig:synthetic-res-size}%
    \end{subfigure}
    \caption{Runtime of the MILP on the realizable synthetic data with uniform data distribution. Note that the test accuracy of the MILP is demonstrated in Figure \ref{fig:synthetic-res} and the MILP always reaches 0 training error across the different data dimensions and training set sizes. }
            \vspace{-0em}
\end{figure}

\subsection{NIH Chest X-ray}

\begin{figure}[H]
    \centering
    \begin{subfigure}{0.48\textwidth}
        \centering
          \includegraphics[width=\textwidth]{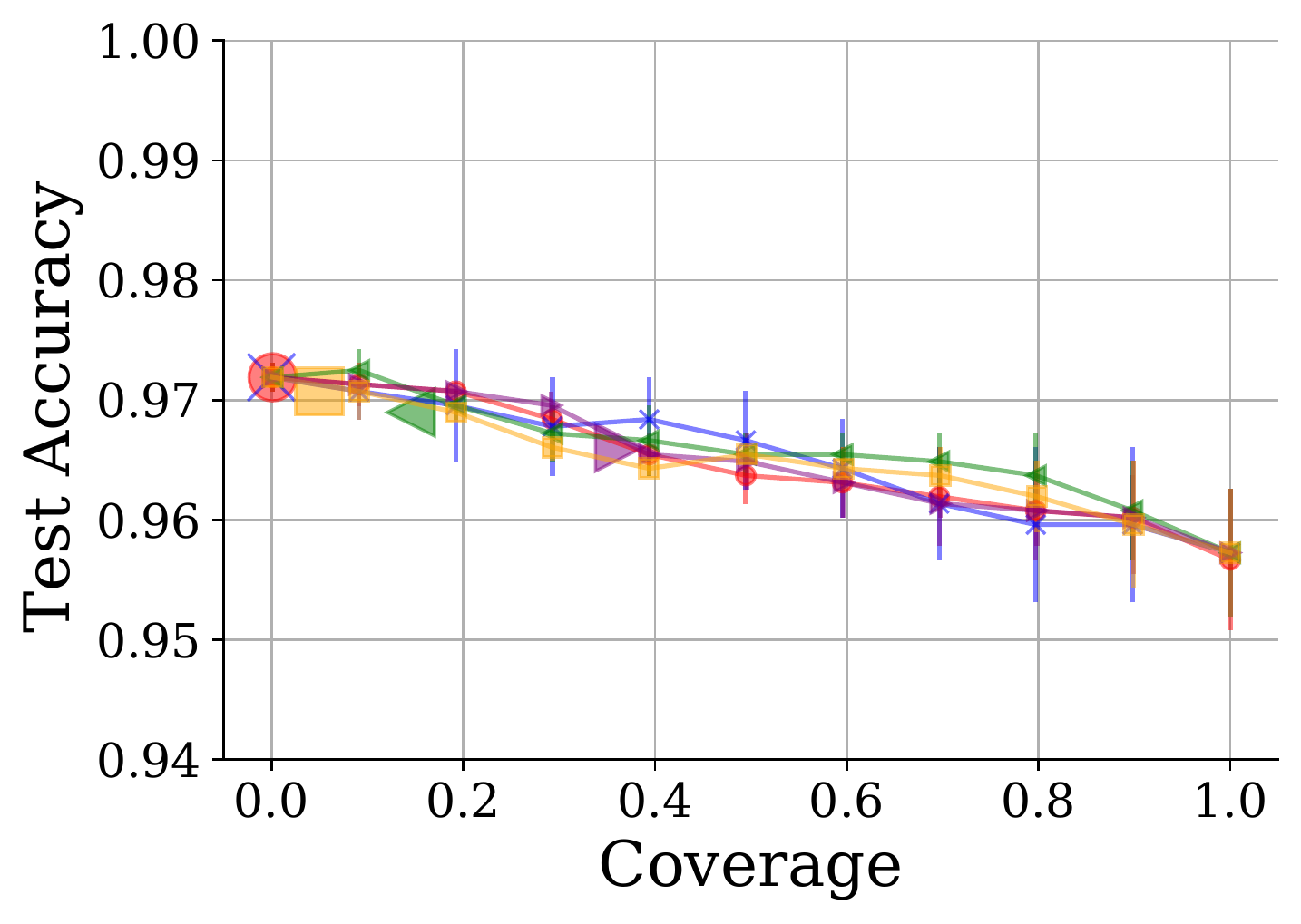}
        \subcaption{Fracture}
        \label{fig:chest0}%
    \end{subfigure}\hfill%
    \begin{subfigure}{0.48\textwidth}
        \centering
          \includegraphics[width=\textwidth]{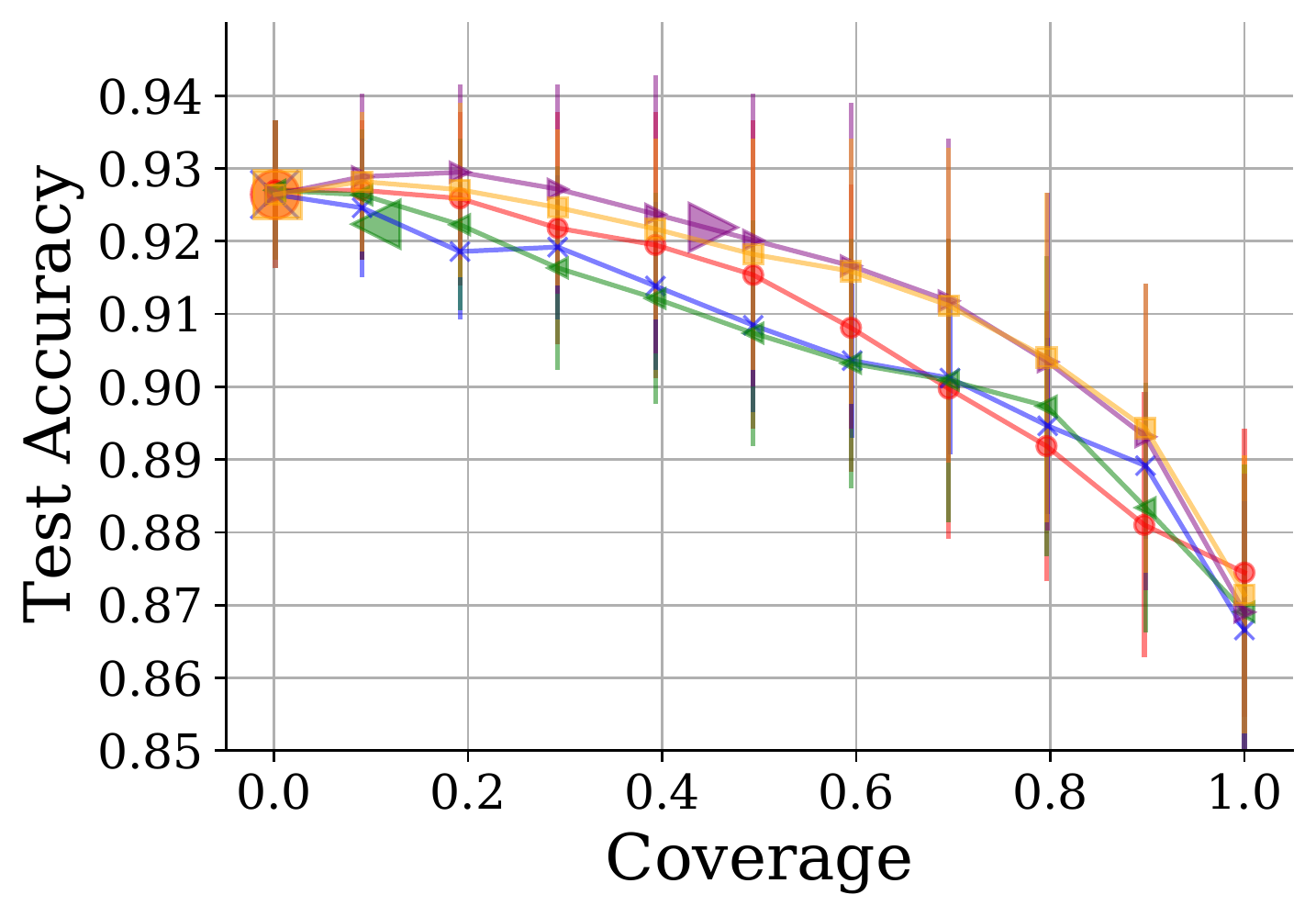}
        \subcaption{Nodule or Mass}
        \label{fig:chest3}%
    \end{subfigure}
    \caption{NIH Chest X-ray results on the two remaining tasks with the baselines and our method and red with circle markers. We see that all methods aren't able to obtain a performance of a human-AI team with better performance than the human, our method on both tasks defers to the human.  }
            \vspace{-0em}
\end{figure}

\subsection{CIFAR-10H}

\begin{figure}[H]
    \begin{subfigure}{0.5\textwidth}
        \centering
          \includegraphics[width=\textwidth]{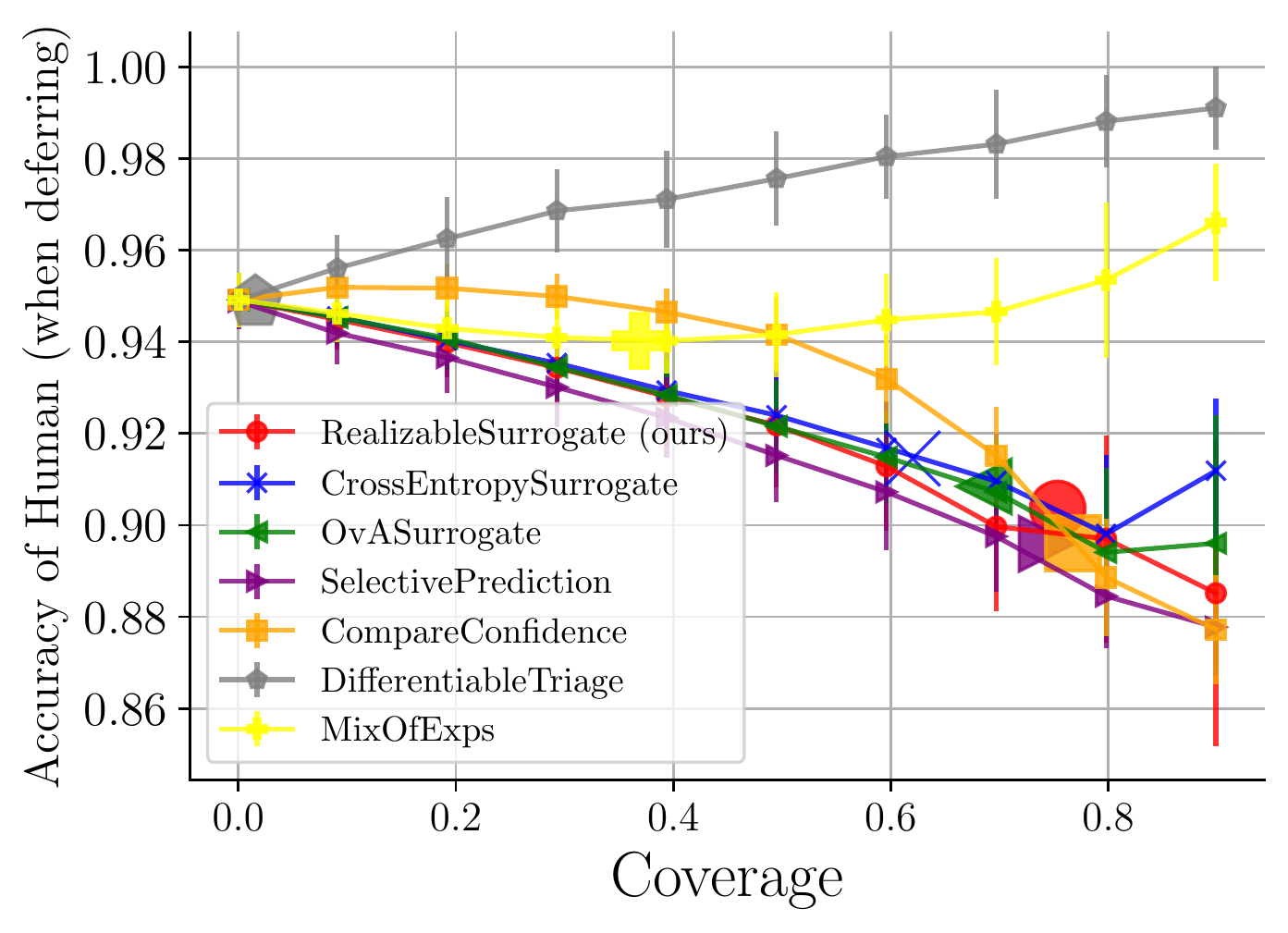}
        \subcaption{Accuracy on the examples deferred to human}
    \end{subfigure}%
    \begin{subfigure}{0.5\textwidth}
        \centering
          \includegraphics[width=\textwidth]{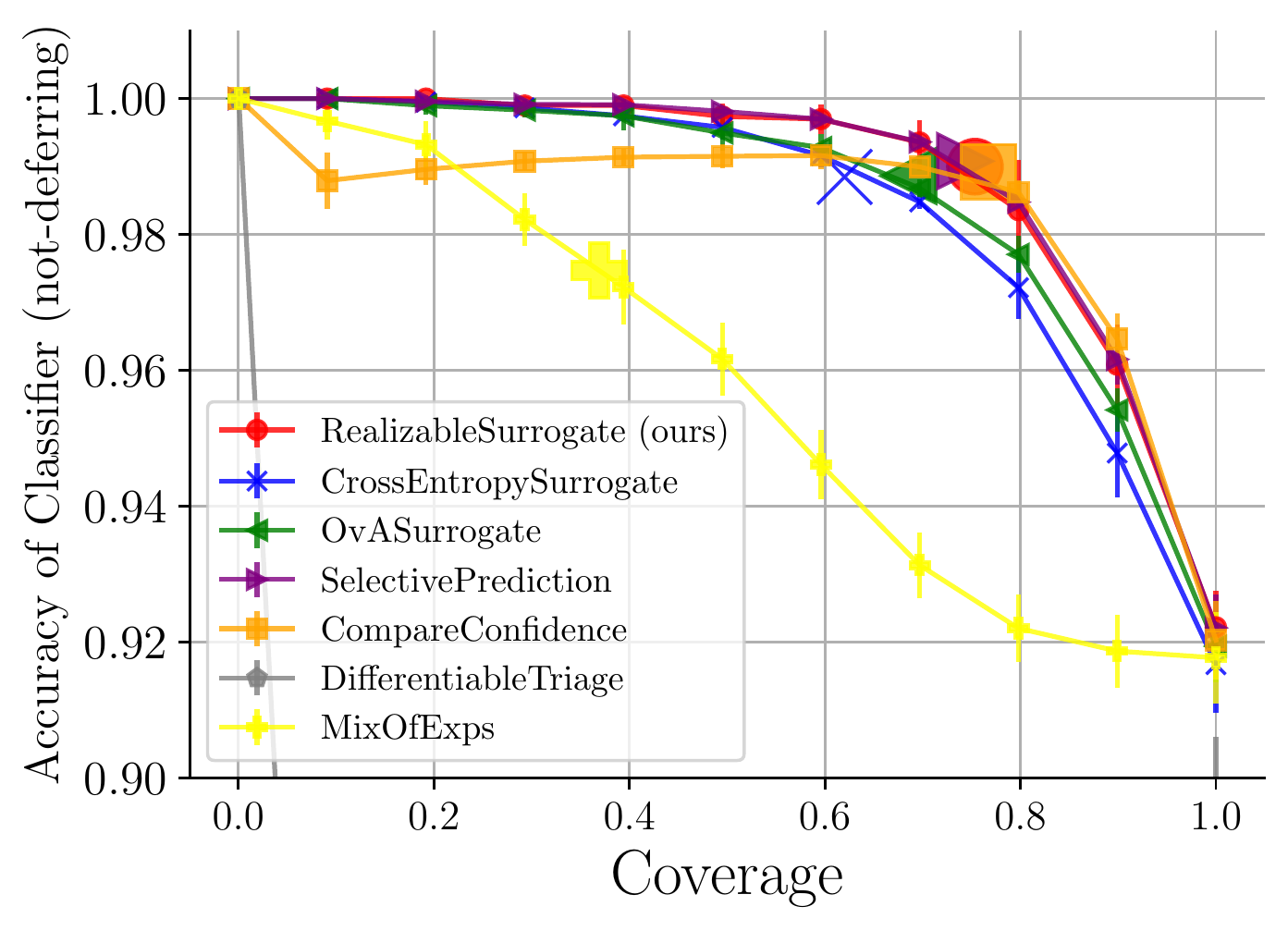}
        \subcaption{Accuracy on the examples not deferred, classifier predicts }
    \end{subfigure} 
 
    \caption{\looseness=-1 On CIFAR-10H, classifier accuracy on non-deferred set and human accuracy when deferred vs coverage (fraction of points where classifier predicts).  }
      \label{fig:cifar10h_more_results}
\end{figure}

\section{Deferred Proofs and Derivations}\label{apx:proofs}
\subsection{Related Work}\label{apx:ova_special_family}
We mentioned that the surrogate in \cite{verma2022calibrated} belongs to the family derived in \cite{charusaie2022sample}. 

This is established by setting $l_{\phi}(i,f(x))$ as follows \footnote{This was established by Yuzhou Cao.}:
\begin{equation}
    l_{\phi}(i,f(x)) = \begin{cases}
    \phi(g_y) + \sum_{y' \neq y} \phi (-g_{y'}), \quad & if \ y \in \mathcal{Y}  \\
    \phi(g_y) - \phi (-g_{y'}), \quad 
    \end{cases}
\end{equation}

\subsection{Section \ref{sec:linear_case} (Hardness)}
\subsubsection{Background and Definitions}
\paragraph{Realizable Intersection of Halfspaces.}
For our purposes, an instance $\cI$ of learning an intersection of halfspaces in the realizable setting is given by a finite dataset $\{(x_i,y_i)\}_{i=1}^n$, with $x_i \in \mathbb{R}^d$ and $y_i \in \{0,1\}$, such that there exist halfspaces $g_1^*: \mathbb{R}^d \to \{0,1\}$ and $g_2^*: \mathbb{R}^d \to \{0,1\}$
with zero error on the dataset:
\[
    err_{\cI}(g_1^*, g_2^*) := \frac{1}{n}\sum_{i}\mathbb{I}_{g_1^*(x_i) \wedge g_2^*(x_i) \ne y_i} = 0.
\]
We consider two related problems: finding halfspaces $(g_1,g_2)$ with \emph{exact} and \emph{weak} agreement.
\paragraph{Exact agreement.} Given an instance $\cI$ of realizable intersection of halfspaces, the exact agreement problem is to find a pair of halfspaces $(g_1,g_2)$ such that $g_1(x_i) \wedge g_2(x_i) = y_i$ for all $i \in \{1,\ldots, n\}$.

\paragraph{Weak agreement.}
Given an instance $\cI$ of realizable intersection of halfspaces, the weak agreement problem is to find a pair of halfspaces $(g_1,g_2)$ with error at most $1/2-\gamma$ for some $\gamma > 0$:
\[
err_{\cI}(g_1, g_2) := \frac{1}{n}\sum_{i}\mathbb{I}_{g_1(x_i) \wedge g_2(x_i) \ne y_i} \le \frac{1}{2}-\gamma.
\]
Note that there exists a pair $(g_1^*, g_2^*)$ with error 0 but the goal is just to obtain error $1/2-\gamma$.

Quite a bit is known about the hardness of the exact and weak agreement problems.
\begin{theorem*}[\citet{blum1988training} Theorem 1, rephrased]
The exact agreement problem is NP-hard.
\end{theorem*}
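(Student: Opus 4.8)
The plan is to prove NP-hardness of the exact agreement problem by a polynomial-time reduction from \textsc{Set-Splitting} (equivalently, hypergraph $2$-colorability), which is a classical NP-complete problem: given a ground set $S=\{1,\dots,n\}$ and a family $C$ of subsets of $S$, decide whether $S$ admits a partition into $S_1,S_2$ such that no member of $C$ lies entirely inside $S_1$ or entirely inside $S_2$. It suffices to encode such an instance as a labeled dataset in $\mathbb{R}^n$ whose consistent intersection-of-halfspaces classifiers correspond exactly to valid splittings; this is the route of \citet{blum1988training}. Concretely, I would build the instance $\cI$ by placing the origin $\vec{0}$ with label $y=1$, each standard basis vector $e_i$ ($i \in S$) with label $y=0$, and for each $c \in C$ its indicator vector $\mathbf{1}_c = \sum_{i \in c} e_i$ with label $y=1$. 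Under the AND semantics $g_1(x)\wedge g_2(x)$, a point labeled $1$ must lie inside \emph{both} halfspaces while a point labeled $0$ must be excluded by \emph{at least one}.

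For the completeness direction I would exhibit an explicit consistent pair from any valid splitting $(S_1,S_2)$: take halfspace $a \in \{1,2\}$ to be $\{x : w^{(a)}\cdot x \ge -\tfrac12\}$ with $w^{(a)}_i = -1$ for $i \in S_a$ and $w^{(a)}_i = +n$ for $i \notin S_a$. Then $\vec{0}$ evaluates to $0 \ge -\tfrac12$ in both; each $\mathbf{1}_c$ evaluates to $-|c\cap S_a| + n|c\cap S_{\bar a}| \ge 0$ in both, using that a valid splitting forces $c$ to meet both parts so the $+n$ weights dominate; and $e_i$ is excluded by the halfspace indexed by the color of $i$. The integer weights against the threshold $-\tfrac12$ keep every point off the boundary, so strict versus non-strict conventions are immaterial. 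Hence the AND is exactly consistent and $\cI$ is realizable whenever the splitting instance is a YES-instance.

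The harder, and in my view main, obstacle is soundness: recovering a valid splitting from an \emph{arbitrary} consistent pair $(g_1,g_2)$. I would color $i$ by the index of a halfspace that excludes $e_i$ (one exists since $e_i$ is labeled $0$), yielding a partition $(S_1,S_2)$. Writing $g_1$ as $w\cdot x \ge \theta$, the positively-labeled origin forces $\theta \le 0$. If some $c$ were monochromatic of color $1$, then every $i \in c$ satisfies $w_i < \theta$, so $w\cdot \mathbf{1}_c = \sum_{i\in c} w_i < |c|\,\theta \le \theta$ (the last step using $\theta \le 0$ and $|c|\ge 1$), which means $\mathbf{1}_c$ is excluded by $g_1$ and thus misclassified — contradicting its label $1$. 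The symmetric argument rules out monochromatic sets of color $2$, so the coloring is a valid splitting. This linearity/convexity step is the crux of the argument.

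Finally, to pass from the decision statement to the promised search formulation of \textbf{exact agreement}, I would observe that the reduction produces a realizable instance precisely on YES-instances. A hypothetical polynomial-time agreement-finder can be invoked on $\cI$ and its output verified for exact consistency in polynomial time: on a YES-instance it must return a consistent pair (which we accept), while on a NO-instance no consistent pair exists by soundness, so whatever it returns fails the check (and we reject). This decides \textsc{Set-Splitting} in polynomial time, establishing that the exact agreement problem is NP-hard.
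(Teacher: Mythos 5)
Your proof is correct. The paper itself contains no proof of this statement—it is imported by citation (``rephrased'') from \citet{blum1988training}—and your reduction from Set-Splitting (origin labeled positive, basis vectors $e_i$ negative, set-indicator vectors $\mathbf{1}_c$ positive, with the $\theta \le 0$ convexity argument for soundness and the verify-the-output trick to handle the realizability promise) is precisely the classical Blum--Rivest construction, so your reconstruction faithfully matches the cited source rather than anything proved in this paper.
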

\begin{theorem*}[\citet{khot2011hardness} Theorem 2, rephrased]
There is no polynomial-time algorithm for the weak agreement problem unless $NP=RP$.
\end{theorem*}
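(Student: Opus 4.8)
Since weak agreement is a \emph{hardness-of-approximation} statement, it cannot be obtained from the Blum--Rivest exact-agreement reduction stated just above: that reduction certifies NP-hardness of achieving zero error, but gives no gap between $0$ and $1/2-\gamma$. Producing that gap requires PCP-style amplification, so the plan is to reduce from \textbf{Label Cover}, which by the PCP theorem together with Raz's parallel repetition is NP-hard to approximate within any constant factor: in the ``yes'' case the instance is perfectly satisfiable, while in the ``no'' case every labeling satisfies at most an arbitrarily small constant fraction of constraints. First I would fix such a constant-gap Label Cover instance and build a \emph{gadget} that turns a claimed labeling into geometry: each vertex of the Label Cover graph is assigned a block of coordinates (a long-code / dictatorship block), and each projection constraint is expanded into a family of labeled points $\{(x_i,y_i)\}$ in the resulting space $\mathbb{R}^d$. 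The output is exactly the point set defining an instance $\cI$ of the intersection-of-two-halfspaces problem.

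The \emph{completeness} direction is the easy half. If the Label Cover instance admits a satisfying labeling, I would read off from it two distinguished (``dictator'') directions and verify that the associated pair $(g_1^*, g_2^*)$ satisfies $g_1^*(x_i)\wedge g_2^*(x_i)=y_i$ for every constructed point, so $err_{\cI}(g_1^*,g_2^*)=0$. This certifies realizability, matching the hypothesis of the weak-agreement problem, and uses the two halfspaces to encode the two endpoints of each projection constraint.

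The \emph{soundness} direction is the crux and the hard part. I must show that if the Label Cover instance is far from satisfiable, then \emph{no} pair of halfspaces achieves error below $1/2-\gamma$. I would argue by contraposition: given any $(g_1,g_2)$ with $err_{\cI}(g_1,g_2)\le 1/2-\gamma$, decode a labeling satisfying a non-negligible constant fraction of constraints, contradicting far-from-satisfiability. The analytic engine is a Fourier-analytic argument on the long-code blocks: a linear threshold function that beats random guessing by a constant must place non-negligible weight on a bounded set of influential coordinates, which one decodes into a vertex label. The genuine difficulty is that the hypothesis is an \emph{intersection of two} halfspaces, a non-monotone nonlinear region, so a single-function noise-stability bound does not suffice; I would need a joint (two-function) invariance/stability estimate controlling the correlation of the two threshold functions on the correlated, near-Gaussian test inputs, and extract influential coordinates for both halfspaces simultaneously. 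Making this joint decoding satisfy enough constraints, uniformly over all admissible $(g_1,g_2)$, is where essentially all the technical work lives.

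Finally, because the gadget relies on randomness (random test points and random folding within the long-code blocks), the reduction is randomized and succeeds only with high probability; this is precisely what downgrades the conclusion from ``$P=NP$'' to ``$NP=RP$.'' A polynomial-time weak-agreement algorithm, composed with the randomized reduction, would distinguish the ``yes'' and ``no'' cases of the Label Cover gap problem in randomized polynomial time, forcing $NP=RP$. Combining the completeness and soundness analyses then yields the claimed hardness.
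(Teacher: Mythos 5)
You should first note that the paper does not actually prove this statement: it is imported verbatim (rephrased) from \citet{khot2011hardness}, and the paper's appendix only restates the two ingredients of that external proof as black boxes --- their Theorem~3 (NP-hardness of the gap version of \emph{Smooth} Label Cover) and their Theorem~4 (a \emph{randomized} polynomial-time reduction from Smooth Label Cover to realizable intersection-of-halfspaces with completeness $0$ and soundness $1/2+\gamma$, in fact against any function of up to $l$ halfspaces). Your plan correctly reconstructs the overall architecture of that external proof: a gap reduction from a Label Cover variant, a gadget with a completeness direction giving a realizable instance, a soundness direction decoding advantage-over-random into a good labeling, and the observation that the randomness of the gadget is exactly what downgrades the conclusion from $P=NP$ to $NP=RP$ (the reduction succeeds in the NO case only with probability $9/10$, so a weak-agreement algorithm yields a one-sided-error randomized algorithm for an NP-hard gap problem).

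However, as a proof there are two genuine gaps. First, the soundness analysis --- which you candidly flag as ``where essentially all the technical work lives'' --- is precisely the content of the theorem; a plan that defers it has not proved anything, and your proposed analytic engine (a joint two-function invariance/noise-stability estimate on long-code blocks) is speculative and does not match what Khot and Saket actually do: their decoding argument analyzes the halfspace coefficients directly over the constructed (integer) point sets. Second, and more concretely, you reduce from ordinary Raz-amplified Label Cover, but that is known to be insufficient here: Khot and Saket introduce the \emph{smoothness} property (projections nearly injective on small label sets) exactly because the decoding of influential halfspace coefficients into vertex labels fails without it. Any write-up along your lines would need to start from Smooth Label Cover, prove its NP-hardness (their Theorem~3), and carry the smoothness parameters through the soundness argument; without that, the ``contradiction with far-from-satisfiability'' step would fail. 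A minor further mismatch: in the completeness case the two halfspaces encode the entire labeling as two global linear forms over all blocks, not two ``dictator'' directions.
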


We also consider finite-data versions of LWD-H:
\paragraph{Finite-data realizable LWD-H.}
An instance $\cJ$ of learning with deferral in the realizable setting is given by a finite dataset $\{(x_i,y_i,h_i)\}_{i=1}^n$, with $x_i \in \mathbb{R}^d$ and $y_i,h_i \in \{0,1\}$, such that there exist halfspaces $m^*: \mathbb{R}^d \to \{0,1\}$ and $r^*: \mathbb{R}^d \to \{0,1\}$
with zero error on the dataset:
\[
    err_{\cJ}(m^*, r^*) := \frac{1}{n}\sum_{i}\mathbb{I}_{r^*(x_i)=1}\mathbb{I}_{h_i\ne y_i} + \mathbb{I}_{r^*(x_i)=0}\mathbb{I}_{m^*(x_i) \ne y_i} = 0.
\]
As with intersection-of-halfspaces, we can consider finding halfspace classifier/rejector pairs $(m,r)$ with \emph{exact} and \emph{weak} agreement.
\paragraph{Exact agreement.} Given an instance $\cJ$ of realizable LWD-H, the exact agreement problem is to find a pair of halfspaces $(m,r)$ such that for all $i$, if $r(x_i) = 0$, $m(x_i) = y_i$, and if $r(x_i) = 1$, $h_i=y_i$.
That is, the error of the classifier/human system on the finite dataset is 0.
\paragraph{Weak agreement.} Given an instance $\cJ$ of realizable LWD-H, the weak agreement problem is to find a pair of halfspaces $(m,r)$ with error at most $1/2-\gamma$ for some $\gamma > 0$:
\[
err_{\cJ}(m, r) := \frac{1}{n}\sum_{i}\mathbb{I}_{r(x_i)=1}\mathbb{I}_{h_i\ne y_i} + \mathbb{I}_{r(x_i)=0}\mathbb{I}_{m(x_i)\ne y_i}\le \frac{1}{2}-\gamma.
\]

\subsubsection{Mapping between learning intersections and LWD-H}
We show how to turn an instance $\cI$ of realizable intersection of halfspaces into an instance of $\cJ$ of (finite-data) realizable LWD-H.
Given an arbitrary instance $\cI$ on dataset $\cD$, Lemma \ref{apdx:prop:error-relation} shows how to construct an instance $\cJ$ of LWD-H and a bijection $(g_1, g_2) \longleftrightarrow (m,r)$ such that for arbitrary halfspaces $(g_1,g_2)$, the error $err_{\cI}(g_1,g_2) = err_{\cJ}(m,r)$.
In particular, since we assumed $\cI$ is realizable and hence $\exists g_1^*, g_2^*$ with $err_{\cI}(g_1^*,g_2^*) = 0$, Lemma \ref{apdx:prop:error-relation} shows how to construct an instance $\cJ$ of LWD-H with $err_{\cJ}(m^*, r^*) = 0$.
This will allow us to reduce an arbitrary instance $\cI$ of realizable intersection of halfspaces to an instance $\cJ$ of realizable LWD-H.
Additionally, given an arbitrary classifier/rejector pair $(m,r)$ on this $\cJ$ with error $\epsilon$, Lemma \ref{apdx:prop:error-relation} shows how to map $(m,r)\to (g_1,g_2)$ with error $\epsilon$ on instance $\cI$.
\begin{lemma}
\label{apdx:prop:error-relation}
Consider an arbitrary instance $\cI$ of learning an intersection of halfspaces on a dataset $\cD = \{(x_i, y_i)\}_{i=1}^n$.
Define $\tilde \cD = \{(x_i,y_i,0)\}_{i=1}^n$.
This corresponds to an instance $\cJ$ of LWD-H where the ``human expert'' always outputs label 0.

Then:
\begin{enumerate}
    \item Consider two arbitrary halfspaces $g_1, g_2$ and set $m(x) = g_1(x)$, $r(x) = 1-g_2(x)$.
Note that $m$ and $r$ are also halfspaces. Then $err_{\cI}(g_1,g_2) = err_{\cJ}(m,r)$.
That is,
\[
\frac{1}{n}\sum_{(x_i,y_i) \in \cD} \mathbb{I}[g_1(x_i) \wedge g_2(x_i) \ne y_i] = \frac{1}{n}\sum_{(x_i, y_i, h_i) \in \tilde \cD} \left(\mathbb{I}_{r(x_i)=1}\mathbb{I}_{h_i\ne y_i} + \mathbb{I}_{r(x_i) = 0}\mathbb{I}_{m(x_i) \ne y_i}\right).
\]
\item Suppose $\cI$ is an instance of \emph{realizable} intersection of halfspaces. Then the instance $\cJ$ of LWD-H defined by the dataset $\tilde \cD$ is an instance of \emph{realizable} LWD-H. That is, there exists $(m^*,r^*)$ with $err_{\cJ}(m^*, r^*) = 0$.
\end{enumerate}
\end{lemma}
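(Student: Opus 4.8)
The plan is to prove both parts from a single pointwise identity, after which the averaging in part 1 and the realizability claim in part 2 follow immediately. First I would fix the substitution $m = g_1$, $r = 1 - g_2$, with $h_i = 0$ for every $i$, and reduce to showing that for each index $i$ the per-point intersection loss $\bI[g_1(x_i) \wedge g_2(x_i) \ne y_i]$ equals the per-point deferral loss $\bI_{r(x_i)=1}\bI_{h_i \ne y_i} + \bI_{r(x_i)=0}\bI_{m(x_i) \ne y_i}$. Summing over $i$ and dividing by $n$ then yields the claimed equality $err_{\cI}(g_1,g_2) = err_{\cJ}(m,r)$.

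The core step is a two-case analysis on the value $g_2(x_i) \in \{0,1\}$. If $g_2(x_i) = 1$, then $r(x_i) = 0$, the deferral loss reduces to $\bI_{g_1(x_i) \ne y_i}$, and the intersection satisfies $g_1(x_i) \wedge g_2(x_i) = g_1(x_i)$, so the two losses coincide. If $g_2(x_i) = 0$, then $r(x_i) = 1$, the deferral loss reduces to $\bI_{h_i \ne y_i} = \bI_{y_i \ne 0}$ (using $h_i = 0$), and the intersection collapses to $g_1(x_i)\wedge 0 = 0$, giving intersection loss $\bI_{0 \ne y_i}$; again the two agree. Since these exhaust the possibilities, part 1 is established.

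For part 2 I would instantiate part 1 at the optimal pair: given $g_1^*, g_2^*$ with $err_{\cI}(g_1^*, g_2^*) = 0$, set $m^* = g_1^*$ and $r^* = 1 - g_2^*$, so that part 1 immediately gives $err_{\cJ}(m^*, r^*) = err_{\cI}(g_1^*, g_2^*) = 0$. The only remaining point is to confirm that $m^*$ and $r^*$ lie in the halfspace class: $m^* = g_1^*$ is a halfspace by hypothesis, and $r^* = 1 - g_2^*$ is the complement of a halfspace, which is again a halfspace after negating the defining weight vector.

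The one place requiring care—and the closest thing to an obstacle—is the decision boundary: complementing a strict halfspace $\bI_{w^\top x > 0}$ yields a non-strict halfspace $\bI_{(-w)^\top x \ge 0}$, so I would either adopt a convention admitting both forms or note that on a finite dataset the threshold can be perturbed infinitesimally to break ties without altering any label, keeping $r^*$ inside the intended class. Everything else is a direct substitution, so no genuine difficulty is expected.
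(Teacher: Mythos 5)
Your proof is correct and follows essentially the same route as the paper's: substitute $h_i = 0$, $m = g_1$, $r = 1 - g_2$, and verify by a case analysis on $g_2(x_i)$ that the per-point deferral loss equals the per-point intersection loss, then apply this at the zero-error pair for part 2. Your pointwise organization and the explicit remark about complementing a strict halfspace (which the paper glosses over) are minor presentational refinements of the same argument, not a different approach.
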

\begin{proof}
For part 1, recall that by definition:
\[
    \err_{\cJ}(m,r) = \frac{1}{n}\sum_{(x_i,h_i,y_i)\in \tilde \cD}\left(\mathbb{I}_{r(x_i) = 1}\mathbb{I}_{h_i \ne y_i} + \mathbb{I}_{r(x_i)=0}\mathbb{I}_{m(x_i)\ne y_i}\right).
\]
Since $h_i = 0$ for all $i$, this is equal to
\[
    \frac{1}{n}\sum_{i}\left(\mathbb{I}_{r(x_i) = 1}\mathbb{I}_{y_i = 1} + \mathbb{I}_{r(x_i)=0}\mathbb{I}_{m(x_i)\ne y_i}\right).
\]
Using $r(x) = 1-g_2(x)$ and $m(x) = g_1(x)$, this simplifies further to:
\begin{align}\label{apdx:eq:reduction-error}
    \frac{1}{n}\sum_{i}\left(\mathbb{I}_{g_2(x_i) = 0}\mathbb{I}_{y_i = 1} + \mathbb{I}_{g_2(x_i) = 1}\mathbb{I}_{g_1(x_i)\ne y_i}\right).
\end{align}
Consider the error of $err_{\cI}(g_1,g_2)$.
The model makes a mistake if $g_2(x)=0$ and $y(x)=1$, $g_2(x) = g_1(x) = 1$ and $y=0$, or $g_2(x) = 1, g_1(x) = 0$, and $y=1$.
The first case is $\mathbb{I}_{g_2(x) = 0}\mathbb{I}_{y = 1}$ and the latter two cases can be expressed as $\mathbb{I}_{g_2(x)=1}\mathbb{I}_{g_1(x) \ne y}$.
Hence 
\[
    err_{\cI}(g_1,g_2) = \frac{1}{n}\sum_{(x_i,y_i) \in \tilde \cD} \mathbb{I}[g_1(x_i) \wedge g_2(x_i) \ne y_i] = \frac{1}{n}\sum_{i}\left(\mathbb{I}_{g_2(x_i) = 0}\mathbb{I}_{y_i = 1} + \mathbb{I}_{g_2(x_i) = 1}\mathbb{I}_{g_1(x_i)\ne y_i}\right),
\]
which is equal to \eqref{apdx:eq:reduction-error}, so $err_{\cI}(g_1,g_2) = \err_{\cJ}(m,r)$.

For part 2, we assumed that $\cI$ was realizable, so there exists $g_1^*$, $g_2^*$ with $err_{\cI}(g_1^*, g_2^*) = 0$. Applying part 1 yields $m^*, r^*$ such that $err_{\cJ}(m^*, r^*) = 0$. Hence $\cJ$ is an instance of realizable LWD-H.
\end{proof}

Lemma \ref{apdx:prop:error-relation} takes an instance $\cI$ of learning an intersection of halfspaces and constructs an instance $\cJ$ of LWD-H such that there is an error-preserving bijection between solutions of $\cI$ and solutions of $\cJ$.
This allows us to easily apply the existing hardness results for learning a realizable intersection of halfspaces, since if $\cI$ is realizable then so is $\cJ$.

\subsubsection{Hardness results for LWD-H}
\begin{theorem}
There is no polynomial-time algorithm for solving the exact agreement problem for LWD-H unless P=NP.
\end{theorem}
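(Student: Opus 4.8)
The plan is to reduce the exact agreement problem for learning an intersection of two halfspaces—known to be NP-hard by \citet{blum1988training}—to the exact agreement problem for LWD-H, using the error-preserving correspondence established in Lemma \ref{apdx:prop:error-relation}. Since the Blum--Rivest hardness result concerns precisely the exact-agreement (realizable) version, and Lemma \ref{apdx:prop:error-relation} is stated for realizable instances together with a bijection that preserves error exactly, the two problems line up directly and no additional combinatorial work is needed.

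First I would take an arbitrary instance $\cI$ of the intersection-of-halfspaces exact agreement problem, given by a dataset $\cD = \{(x_i, y_i)\}_{i=1}^n$ for which some pair $(g_1^*, g_2^*)$ achieves zero error. I would construct the LWD-H instance $\cJ$ by forming $\tilde{\cD} = \{(x_i, y_i, 0)\}_{i=1}^n$, i.e.\ appending a constant human label $h_i = 0$ to every point. This construction is computable in time linear in the size of $\cD$, so the reduction is polynomial. Part 2 of Lemma \ref{apdx:prop:error-relation} then guarantees that $\cJ$ is a \emph{realizable} LWD-H instance, so it is a legitimate input to the exact-agreement problem for LWD-H.

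Next I would invoke Part 1 of the lemma, which supplies the explicit bijection $m = g_1$, $r = 1 - g_2$ under which $err_{\cI}(g_1, g_2) = err_{\cJ}(m, r)$ for every pair of halfspaces. In particular, a pair $(m, r)$ has error exactly $0$ on $\cJ$ if and only if the corresponding pair $(g_1, g_2) = (m,\, 1 - r)$ has error exactly $0$ on $\cI$, and both directions of the map are computable in polynomial time. Thus any algorithm that solves exact agreement for LWD-H, when run on $\cJ$, returns a pair whose preimage under the bijection is an exact-agreement solution for $\cI$. Concatenating the polynomial-time reduction with the polynomial-time back-map, a polynomial-time exact-agreement algorithm for LWD-H would yield a polynomial-time exact-agreement algorithm for the intersection of two halfspaces, contradicting its NP-hardness unless P=NP.

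The genuinely hard content—the error-preserving reduction from an NP-hard problem—has already been discharged by Lemma \ref{apdx:prop:error-relation}, so there is no real obstacle left in this argument. The only points warranting explicit care are verifying that both the forward reduction $\cI \mapsto \cJ$ and the backward solution map $(m,r) \mapsto (g_1,g_2)$ run in polynomial time, and confirming that the constant-human-label construction always lands in the realizable regime so that the LWD-H exact-agreement oracle is being queried on valid inputs; both are immediate from the construction and from Part 2 of the lemma.
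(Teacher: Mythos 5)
Your proof is correct and follows essentially the same route as the paper's own argument: both reduce from the exact agreement problem for realizable intersections of two halfspaces (NP-hard by \citet{blum1988training}) via the constant-zero-human construction and the error-preserving map $m = g_1$, $r = 1 - g_2$ of Lemma \ref{apdx:prop:error-relation}. The only difference is presentational—you spell out the polynomial-time bookkeeping of the forward and backward maps, which the paper leaves implicit.
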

\begin{proof}
Suppose there exists a polytime algorithm $\cA$ for solving exact agreement on realizable LWD-H.
Consider an arbitrary instance $\cI$ of learning a realizable intersection of halfspaces.
Lemma \ref{apdx:prop:error-relation} shows how to construct an instance $\cJ$ of realizable LWD-H. Run Algorithm $\cA$ on $\cJ$ to obtain halfspaces $(m,r)$ with $err_{\cJ}(m,r) = 0$. Set $g_1 = m$, $g_2 = 1-r$. Lemma \ref{apdx:prop:error-relation} guarantees that $err_{\cI}(g_1,g_2) = 0$. Hence, $\cA$ is a polynomial-time algorithm for exact agreement for realizable intersection of halfspaces.
\citet{blum1988training} shows that there is no polynomial-time algorithm for exact agreement for realizable intersection of halfspaces unless $P=NP$.
\end{proof}
\begin{corollary}
There is no efficient, proper PAC learner for realizable LWD-H unless $NP=RP$.
\end{corollary}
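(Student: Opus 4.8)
The plan is to derive this corollary from the hardness of the \emph{weak agreement} problem for realizable intersections of halfspaces, which by \citet{khot2011hardness} admits no polynomial-time algorithm unless $NP=RP$, together with the error-preserving bijection of Lemma \ref{apdx:prop:error-relation}. The governing idea is that running a PAC learner against the uniform distribution on a finite instance produces, with high probability, a hypothesis whose population error equals its empirical error, and that the target accuracy $\epsilon$ can be chosen to coincide with the weak-agreement threshold.

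First I would pin down the notion of efficiency: an efficient proper PAC learner for LWD-H takes inputs $\epsilon,\delta$ and sample access to any distribution $\mathbf{P}$ over $\mathcal{X}\times\{0,1\}\times\{0,1\}$ realizable by some halfspace pair $(m^*,r^*)$, runs in time polynomial in $d,1/\epsilon,1/\delta$, and outputs a halfspace classifier/rejector pair $(m,r)$ with $\sysL(m,r)\le\epsilon$ with probability at least $1-\delta$. Given an arbitrary instance $\cI$ of realizable intersection of halfspaces on $n$ points, I would build the associated realizable LWD-H instance $\cJ$ on $\tilde\cD$ (human always outputs $0$) via Lemma \ref{apdx:prop:error-relation}, and let $\mathbf{P}$ be uniform over the $n$ triples of $\tilde\cD$; realizability of $\cJ$ makes $\mathbf{P}$ realizable, so the learner may be invoked.

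Next I would set $\epsilon=\tfrac{1}{2}-\gamma$ and $\delta=\tfrac{1}{3}$. The learner's runtime is then polynomial in $d$ and in $1/(\tfrac{1}{2}-\gamma)=O(1)$, and drawing samples from the uniform $\mathbf{P}$ is trivial, so the whole procedure is randomized polynomial time. With probability at least $\tfrac{2}{3}$ it returns a halfspace pair $(m,r)$ whose population error under $\mathbf{P}$ — which, since $\mathbf{P}$ is uniform over $\tilde\cD$, equals the empirical error $err_{\cJ}(m,r)$ — is at most $\tfrac{1}{2}-\gamma$. I would then push this back through Lemma \ref{apdx:prop:error-relation}: taking $g_1=m$ and $g_2=1-r$ (both halfspaces, using properness) yields $err_{\cI}(g_1,g_2)=err_{\cJ}(m,r)\le\tfrac{1}{2}-\gamma$, solving weak agreement for $\cI$ in randomized polynomial time with constant success probability, hence an $RP$ algorithm (after standard amplification), contradicting \citet{khot2011hardness} unless $NP=RP$.

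The main obstacle is bookkeeping rather than any deep difficulty. I must verify that the PAC error parameter can be fixed at the weak-agreement threshold while keeping the learner polynomial (it is, as $\tfrac{1}{2}-\gamma$ is constant), and I must track that the learner's failure probability $\delta$ is exactly what forces the conclusion $NP=RP$ — an $RP$ algorithm for an $NP$-hard problem — rather than the stronger $P=NP$. Properness is needed precisely so that the returned rejector is a halfspace, making $1-r$ again a halfspace and keeping $(g_1,g_2)$ a valid intersection-of-halfspaces hypothesis that the Khot–Saket lower bound applies to.
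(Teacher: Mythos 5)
Your route is genuinely different from the paper's, and most of it is sound: the paper proves this corollary by reducing from the \emph{exact} agreement problem (whose NP-hardness comes from \citet{blum1988training} via Lemma \ref{apdx:prop:error-relation}), running the PAC learner on the uniform distribution over the $n$ data points with $\epsilon = 1/(2n)$ so that any hypothesis with population error below $1/n$ must in fact have error exactly zero; you instead reduce from the \emph{weak} agreement problem of \citet{khot2011hardness}, running the learner with the constant target $\epsilon = \tfrac{1}{2}-\gamma$. The shared mechanics — uniform distribution over the finite instance so that population error equals empirical error, properness so that $g_2 = 1-\hat r$ is again a halfspace, and Lemma \ref{apdx:prop:error-relation} as the error-preserving bridge — are all handled correctly in your write-up.

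The gap is in your final step, where you claim your procedure is ``an RP algorithm\ldots contradicting \citet{khot2011hardness} unless $NP=RP$.'' The Khot--Saket statement rules out \emph{deterministic} polynomial-time algorithms for weak agreement; your algorithm is inherently randomized (a PAC learner consumes random samples and fails with probability $\delta$), and the two are not interchangeable here because the hardness itself is proved by a \emph{randomized} reduction from Smooth Label Cover with imperfect soundness: on a NO instance, only with probability $9/10$ does the constructed dataset admit no good pair of halfspaces. Composing your randomized solver with that reduction therefore produces an algorithm with \emph{two-sided} error — false positives occur exactly when the reduction's soundness fails — which places Smooth Label Cover in $BPP$, not $RP$. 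This is precisely the situation the paper faces in its companion corollary on weak PAC learners, where it accordingly concludes only $NP=BPP$. Your claimed conclusion $NP=RP$ can still be salvaged, but only by adding the standard (and nontrivial) fact that $NP \subseteq BPP$ implies $NP=RP$, which you never invoke. The paper's exact-agreement route avoids the issue entirely: the Blum--Rivest reduction is deterministic, and a returned pair can be checked for zero error deterministically, so NO instances are never accepted; the result is a genuine one-sided-error algorithm placing an NP-complete problem in $RP$. To repair your proof, either add the $NP\subseteq BPP \Rightarrow NP=RP$ step explicitly, or switch to $\epsilon = 1/(2n)$ and reduce from exact agreement as the paper does.
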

\begin{proof}[Proof sketch]
Suppose $\cA$ is an efficient proper PAC learner for realizable LWD-H, so for any distribution $\cD$, any $\epsilon > 0$, $\delta > 0$, given $poly(1/\delta, 1/\epsilon)$ samples from $\cD$, $\cA$ outputs a pair of halfspaces $(m,r)$ with (population) system error at most $\epsilon$ in time $poly(1/\epsilon, 1/\delta)$.

Now let $\cD$ be the uniform distribution over a dataset of $n$ points $\{(x_i,y_i,h_i)\}_{i=1}^n$. Set $\epsilon = 1/(2n)$ and $\delta=1/100$ and run $\cA$.
With probability at least $1-\delta$ $\cA$ outputs $(m,r)$ with error at most $1/(2n)$. Of course, if $(m,r)$ has error at most $1/(2n)$ it must have error 0.
This gives a randomized algorithm for solving the exact agreement problem for realizable finite-data LWD-H.
\end{proof}

These results show that exact agreement, and thus exact proper PAC learning, are hard. Next we consider the hardness of weak agreement.

\textbf{Theorem \ref{prop:hardness}} \textit{
Let $\epsilon > 0$ be an arbitrarily small constant and suppose we have an instance $\cJ$ of realizable LWD-H. So we have data $\cD = \{(x_i,y_i,h_i)\}_{i=1}^n$, where $x_i\in \mathbb{R}^d, y_i, h_i \in \{0,1\}$, and there exist halfspaces $m^*, r^*$ with zero loss on $\cD$:
\[
    err_{\cJ}(m^*,r^*) := \frac{1}{n}\sum_{i}\left(\mathbb{I}_{r^*(x_i) = 1}\mathbb{I}_{h_i \ne y_i} + \mathbb{I}_{r^*(x_i)=0}\mathbb{I}_{m^*(x_i)\ne y_i}\right) = 0
\]
Then there is no polynomial-time algorithm to find a classifier-rejector pair $(\hat{m}, \hat{r})$ with error $1/2 -\epsilon$, i.e.:
\[
    \frac{1}{n}\sum_{i}\left(\mathbb{I}_{\hat r(x_i)=1}\mathbb{I}_{h_i \ne y_i} + \mathbb{I}_{\hat r(x_i)=0}\mathbb{I}_{\hat m(x_i)\ne y_i}\right)\le \frac{1}{2} - \epsilon
\]
 unless $NP=RP$.
}
\begin{proof}
Suppose there exists a polynomial-time algorithm $\cA$ and a $\gamma > 0$ such that given an instance $\cJ$ of realizable LWD-H, $\cA$ returns a pair $(\hat m, \hat r)$ with error $err_{\cJ}(\hat m, \hat r)$ at most $1/2-\gamma$.
Consider an arbitrary instance $\cI$ of realizable intersection of halfspaces.
Lemma \ref{apdx:prop:error-relation} shows how to reduce $\cI$ to an instance $\cJ$ of realizable LWD-H.
Run Algorithm $\cA$ on $\cJ$ to obtain a pair of halfspace $(\hat m, \hat r)$ with error at most $err_{\cJ}(\hat m, \hat r) \le 1/2-\gamma$. 
Lemma \ref{apdx:prop:error-relation} guarantees that $g_1 = \hat m, g_2 = 1-\hat r$ satisfy $err_{\cI}(g_1,g_2) \le 1/2-\gamma$.
Hence $\cA$ gives a deterministic algorithm for solving the weak agreement problem for realizable intersection of halfspaces.
\citet[Theorem 4]{khot2011hardness} construct an algorithm/reduction showing that if we can efficiently solve weak agreement for realizable intersection of halfspaces, then Smooth Label Cover is in $RP$, but Smooth Label Cover is an NP-hard problem \citep[][Theorem 3]{khot2011hardness}.
Hence there is no polynomial-time algorithm to find a classifier-rejector pair $(\hat{m}, \hat{r})$ with error $1/2 -\epsilon$ unless $NP=RP$.
\end{proof}
\begin{corollary}
There is no efficient, proper, \emph{weak} PAC-learner for realizable LWD-H unless $NP=BPP$.
\end{corollary}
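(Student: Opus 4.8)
The plan is to mirror the proof of Theorem \ref{prop:hardness}, replacing the hypothesized deterministic weak-agreement algorithm with a hypothesized randomized weak PAC learner. Suppose, for contradiction, that $\cA$ is an efficient proper weak PAC learner for realizable LWD-H: there is a fixed $\gamma > 0$ such that for every realizable distribution, when given $\mathrm{poly}(1/\delta)$ samples and $\mathrm{poly}(1/\delta)$ time, $\cA$ outputs with probability at least $1-\delta$ a halfspace pair $(m,r)$ with population system error at most $1/2-\gamma$. The aim is to use $\cA$ to solve weak agreement for a realizable intersection of two halfspaces, which is hard by \citet[Theorem 4]{khot2011hardness}.

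First I would reduce finite-data weak agreement to weak PAC learning exactly as in the previous corollary: given an instance $\cJ$ of realizable finite-data LWD-H on $n$ points, let $\cD$ be the uniform distribution over those points, so that population error coincides with the empirical error $\mathrm{err}_{\cJ}$. Running $\cA$ with a small constant $\delta$ then returns, with probability at least $1-\delta$, a pair $(m,r)$ with $\mathrm{err}_{\cJ}(m,r) \le 1/2 - \gamma$; this is a randomized polynomial-time weak-agreement solver for realizable LWD-H. Composing it with the error-preserving bijection of Lemma \ref{apdx:prop:error-relation} (set $g_1 = m$ and $g_2 = 1-r$) converts it into a randomized polynomial-time weak-agreement solver for a realizable intersection of two halfspaces. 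Finally I would invoke \citet[Theorem 4]{khot2011hardness}, whose reduction turns such a solver into an efficient randomized algorithm for Smooth Label Cover, an NP-hard problem by \citet[Theorem 3]{khot2011hardness}, forcing a collapse of NP into a randomized class.

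The delicate step, and the one that determines the exact conclusion, is the complexity-class bookkeeping that yields $NP = BPP$ rather than the one-sided $NP = RP$ obtained in Theorem \ref{prop:hardness} and in the strong-PAC corollary. In those cases the solver is either deterministic or is used to verifiably certify \emph{exact} (zero-error) agreement, so failures only ever occur on the completeness side and the reduction stays one-sided. Here the learner is intrinsically randomized with a two-sided failure probability, and composing it with Khot--Saket's own randomized reduction produces bounded two-sided error; hence the honest conclusion is membership in $BPP$. I would also need to check that the weak-learning margin $\gamma$ and the confidence parameter can be matched to the completeness/soundness gap in \citet{khot2011hardness}, so that exactly evaluating $\mathrm{err}_{\cJ}$ over the $n$ points cleanly distinguishes the two cases; this is routine once realizability (perfect completeness) is in hand, which is precisely what Lemma \ref{apdx:prop:error-relation} supplies.
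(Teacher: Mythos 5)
Your proposal is correct and matches the paper's own proof essentially step for step: the paper likewise composes Lemma \ref{apdx:prop:error-relation} with the randomized reduction of \citet[Theorems 3 and 4]{khot2011hardness}, runs the hypothesized weak PAC learner on the uniform distribution over the constructed finite dataset, and accepts iff the exactly-evaluated empirical error is at most $1/2-\gamma$, so that YES-side failures come from the learner's confidence parameter $\delta$ and NO-side failures from the reduction's $9/10$ soundness guarantee, yielding two-sided error and hence $NP \subseteq BPP$. Your bookkeeping for why this gives $BPP$ rather than the one-sided $RP$ of the earlier results is exactly the paper's reasoning as well.
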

\begin{proof}
Given a distribution $\cD$ over points $(x,y,h)$, $x\in\mathbb{R}^d$, $y,h\in \{0,1\}$ and halfspaces $(m,r)$, let
\[
err_{\cD}(m,r) := \mathbb{P}_{(x,y,h)\sim \cD}[r(x)=1 \wedge h\ne y \vee r(x) = 0 \wedge m(x) \ne y].
\]
This is identical to the \emph{system loss} \eqref{eq:01_reject_loss} on distribution $\cD$.
Suppose there exists an efficient, proper, weak PAC-learner for realizable LWD-H.
I.e., there exists some $\gamma$ such that for any distribution $\cD$, under the guarantee that $\exists (m^*,r^*)$ with $err_{\cD}(m^*, r^*) = 0$, given access to $poly(1/\delta)$ samples from $\cD$, with probability at least $1-\delta$, $\cA$ returns a pair $(m,r)$ with $err_{\cD}(m,r) \le \frac{1}{2}-\gamma$ in $poly(1/\delta)$ time.

By combining Lemma \ref{apdx:prop:error-relation} with the randomized reduction of \citet{khot2011hardness}, we can use $\cA$ to construct an algorithm that implies Smooth Label Cover is in $BPP$.
The definition of Smooth Label Cover is not important for our purposes beyond the following two results:
\begin{theorem}{\citep[Theorem 3]{khot2011hardness}}
\label{thm:smooth-label-cover-hardness}
For any constant $t$ and arbitrarily small constants $\mu, \nu, \eta > 0$, there exist constants $k$ and $m$ such that given an instance
$\cL$ of $\textrm{Smooth-Label-Cover}(t, \mu , \nu , k, m)$ it is NP-hard to distinguish between the following two cases:
\begin{itemize}
    \item YES Case/Completeness: There is a labeling to the vertices of $\cL$ which satisfies all the edges.
    \item NO Case/Soundness: No labeling to the vertices of $\cL$ satisfies more than $\eta$ fraction of the edges.
\end{itemize}
\end{theorem}

\begin{theorem}{\citep[Theorem 4]{khot2011hardness}}
\label{thm:smooth-label-cover-reduction}
For any constant $\gamma > 0$ and integer $l > 0$, there is a randomized polynomial time reduction from an instance $\cL$ of $\textrm{Smooth-Label-Cover}(t, \mu , \nu , k, m)$ to an instance $\cI$ of Realizble Intersection of Halfspaces for appropriately chosen parameters $(t, \mu , \nu)$ and soundness $\eta$,
such that
\begin{itemize}
    \item YES Case/Completeness: If $\cL$ is a YES instance, then there is an intersection of two halfspaces which correctly classifies all the points in instance $\cI$.
    \item NO Case/Soundness: If $\cL$ is a NO instance, then with probability at least 9/10, there is no function of up to $l$ halfspaces that correctly classifies more than $1/2 + \gamma$ fraction of points in instance $\cI$.
\end{itemize}
\end{theorem}

For our case, we can use Lemma \ref{apdx:prop:error-relation} to further reduce the instance $\cI$ constructed by Theorem \ref{thm:smooth-label-cover-reduction} to an instance $\cJ$ of LWD-H, then run the weak PAC-learner $\cA$ on $\cJ$.
If $\cA$ outputs a pair of halfspaces $(m,r)$ with error at most $1/2-\gamma$, we output YES.
Otherwise we output NO.

If $\cI$ is a realizable instance, $\cA$ returns a pair of halfspaces with error at most $1/2-\gamma$ with probability at least $1-\delta$.
On the other hand, if $\cI$ is not weakly realizable (w.r.) (i.e., there is no function of up to $l$ halfspaces that correctly classifies more than a $1/2 + \gamma$ fraction of points in $\cI$), then clearly $\cA$ never returns a good pair of halfspaces, since no such pair exists.
Therefore:
\begin{align*}
\bP(\text{YES}|\cL \text{ YES}) &= \bP(\text{YES} | \cI \text{ realizable})\bP(\cI \text{ realizable} | \cL \text{ YES})\\
&= (1-\delta)\cdot 1
\end{align*}
\begin{align*}
\bP(\text{NO}|\cL \text{ NO}) &= \bP(\text{NO} | \cI \text{ w.r.})\bP(\cI \text{ w.r.} | \cL \text{ NO}) + \bP(\text{NO} | \cI \text{ not w.r.})\bP(\cI \text{ not w.r.} | \cL \text{ NO})\\
&\ge \bP(\text{NO} | \cI \text{ not w.r.})\bP(\cI \text{ not w.r.} | \cL \text{ NO})\\
&\ge \bP(\text{NO} | \cI \text{ not w.r.})\frac{9}{10}\\
&= \frac{9}{10}.
\end{align*}

Hence we can use $\cA$ to construct an algorithm for a Smooth-Label-Cover instance $\cL$ that outputs YES when $\cL$ is a YES with probability at least $(1-\delta)$, and outputs NO when $\cL$ is a NO with probability at least 9/10.
Since we assumed $\cA$ runs in $poly(1/\delta)$, this implies Smooth Label Cover is in $BPP$.
Together with Theorem \ref{thm:smooth-label-cover-hardness}, this shows that there is no efficient, proper, weak PAC learner for realizable LWD-H unless $NP=BPP$.
\end{proof}

Finally, we show that when realizability is violated, there is no efficient algorithm for weak agreement.

\textbf{Corollary \ref{th:hardness_extension}} (formal). \textit{
Let $\delta, \epsilon > 0$ be arbitrarily small constants. Then, given a set of points $\{(x_i,y_i,h_i)\}$ with $x_i \in \mathbb{R}^d$, $y_i, h_i \in \{0,1\}$ with a guarantee that there is a classifier/rejector pair $(m^*,r^*)$ that classifies a $1 - \delta$ fraction of points correctly, there is no polynomial time algorithm to find a classifier-rejector pair that classifies $\frac{1}{2} + \epsilon$
fraction of points correctly unless P = NP.
}
\begin{proof}
This is a simple reduction from learning a single halfspace in the presence of noise, which is hard by the following result:
\begin{theorem*}{(\citet{guruswami2009hardness}, see also \citet[Theorem 1]{khot2011hardness})}
Let $\delta, \epsilon > 0$ be arbitrarily small constants. Then, given a set of labeled points $\{(x_i,y_i)\}$ in $\mathbb{R}^d$ with a guarantee that there is a
halfspace that classifies $1 -\delta$ fraction of points correctly, there is no polynomial time algorithm to find a halfspace that classifies $1/2 + \epsilon$ fraction of points correctly, unless P = NP.
\end{theorem*}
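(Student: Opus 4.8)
The plan is to give a polynomial-time reduction from agnostic single-halfspace learning, whose hardness is exactly the theorem of \citet{guruswami2009hardness} quoted immediately above. The driving observation is that an adversarial human collapses LWD-H onto ordinary halfspace classification: if the human is wrong on every point, then deferring can only ever incur an error, so the rejector is squeezed out of the picture and the classifier must do all of the work.

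Concretely, first I would take an arbitrary instance $\{(x_i,y_i)\}_{i=1}^n$ of single-halfspace learning, with the guarantee that some halfspace $g^*$ is correct on a $1-\delta$ fraction of the points, and build the LWD-H instance $\{(x_i,y_i,h_i)\}_{i=1}^n$ by setting $h_i = 1-y_i$, so the human errs everywhere. For the realizability side, set $m^* = g^*$ and let $r^*$ be the rejector that never defers (outputs $0$ on every data point); working in the standard affine/augmented representation $x \mapsto (x,1)$ used throughout the paper, this constant-$0$ rejector is a legitimate halfspace, obtained by taking the bias sufficiently negative so that it is $0$ on the finite dataset. The pair $(m^*,r^*)$ then classifies point $i$ correctly exactly when $g^*(x_i)=y_i$, so it achieves accuracy at least $1-\delta$, meeting the guarantee demanded by the corollary.

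For the extraction side, suppose toward a contradiction that a polynomial-time algorithm $\cA$ outputs a pair $(\hat m,\hat r)$ classifying a $\tfrac{1}{2}+\epsilon$ fraction of points correctly. Since $h_i \ne y_i$ for every $i$, every deferred point is an error, so the set of points the LWD-H system classifies correctly is exactly $\{i : \hat r(x_i)=0 \text{ and } \hat m(x_i)=y_i\}$, which is contained in $\{i : \hat m(x_i)=y_i\}$. Hence the plain classification accuracy of the halfspace $\hat m$ alone is at least the LWD-H accuracy of $(\hat m,\hat r)$, namely at least $\tfrac{1}{2}+\epsilon$. Thus $\hat m$ is a single halfspace correct on a $\tfrac{1}{2}+\epsilon$ fraction of points, produced in polynomial time, which contradicts the single-halfspace hardness theorem; so no such $\cA$ can exist unless $P=NP$.

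The reduction is exactly accuracy-preserving, so the constants $\delta$ and $\epsilon$ transfer verbatim and no parameter bookkeeping is required. The only point that warrants care is the monotonicity of the accuracy comparison---that discarding the rejector cannot \emph{lose} accuracy---and this is precisely what makes the argument go through: it holds because the always-wrong human renders deferral weakly harmful, so treating all points as classifier-predicted can only help. It is therefore the adversarial choice of human, rather than any delicate geometric gadget, that carries the hardness, which is why this reduction is short relative to the proof of Theorem \ref{prop:hardness}.
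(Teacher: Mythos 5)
Your proposal is correct and essentially identical to the paper's own argument: the proof of Corollary \ref{th:hardness_extension} in the appendix uses the same always-wrong human $h_i = 1-y_i$, the same realizability witness (an optimal halfspace paired with the never-defer rejector, written in the paper as the constant pair $(c,0)$ and realized by you, slightly more carefully, as a halfspace with sufficiently negative bias), and the same monotonicity step showing $err_{\cI}(\hat m)\le err_{\cJ}(\hat m,\hat r)\le 1/2-\epsilon$ since every deferred point is an error. One clarification: the quoted statement itself is a black-box result imported from \citet{guruswami2009hardness} that neither the paper nor you proves --- what your reduction (correctly, and exactly as the paper does) establishes from it is Corollary \ref{th:hardness_extension}, not the quoted theorem.
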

Suppose we have an algorithm $\cA$ for solving LWD-H in the presence of noise.
In particular, there exists some $\epsilon > 0, \delta > 0$ such that under the guarantee that there exists an $(m^*,r^*)$ pair with error at most $\delta$, $\cA$ returns an $(m,r)$ pair with error at most $\frac{1}{2}-\epsilon$.

Consider an instance $\cI$ of learning a single halfspace in the presence of noise defined 
by a dataset $\cD = \{(x_i,y_i)\}_{i=1}^n$, such that there exists a halfspace $c$ with error at most $\delta$ on $\cD$.
From $\cD$, construct the dataset $\tilde \cD = \{(x_i, y_i, 1-y_i)\}_{i=1}^n$. 
This is an instance $\cJ$ of LWD-H where the ``human expert'' is always wrong.
Note that $(c,0)$ is a classifier/rejector pair with error at most $\delta$ on $\tilde \cD$, so $\cJ$ is an instance of LWD-H with noise level $\delta$.
Run algorithm $\cA$ on $\cJ$ with parameter $\epsilon$ to obtain an $(m,r)$ pair with $err_{\cJ}(m,r) = 1/2 - \epsilon$.
Then:
\begin{align*}
1/2 - \epsilon \ge err_{\cJ}(m,r) &= \frac{1}{n}\sum_{i}\mathbb{I}_{r(x_i)=1}\mathbb{I}_{h_i\ne y_i} + \mathbb{I}_{r(x_i)=0}\mathbb{I}_{m(x_i)\ne y_i}\\
&= \frac{1}{n}\left(\sum_{i: r(x_i)=1}\mathbb{I}_{h_i\ne y_i} + \sum_{i: r(x_i)=0}\mathbb{I}_{m(x_i)\ne y_i}\right)\\
&\ge \frac{1}{n}\left(\sum_{i: r(x_i)=1}\mathbb{I}_{m(x_i)\ne y_i} + \sum_{i: r(x_i)=0}\mathbb{I}_{m(x_i)\ne y_i}\right)\\
&= \frac{1}{n}\sum_{i}\mathbb{I}_{m(x_i)\ne y_i}\\
&= err_{\cI}(m),
\end{align*}
where the inequality is because we constructed $\tilde \cD$ such that $\mathbb{I}_{h_i \ne y_i} = 1$ for all $i$.
Therefore, there exists a $\delta$ and $\epsilon$ for which, given a dataset and the guarantee that there exists a halfspace with error at most $\delta$, we can output a halfspace with error at most $1/2 - \epsilon$.
Combining this with the Theorem above shows that if $\cA$ runs in polynomial time, $P=NP$.
\end{proof}

\subsection{Section \ref{sec:milp} (MILP)}

\textbf{Proposition 1.}\textit{ \noindent For any expert $H$ and data distribution $\mathbf{P}$ over $\mathcal{X} \times \mathcal{Y}$ that satisfies Assumption  \ref{ass: margin}, let $0<\delta<\frac{1}{2}$, then  with probability at least $1-\delta$, the following holds for the empirical minimizers $(\hat{m}^*,\hat{r}^*)$ obtained by the MILP:
\begin{align*} 
     \sysL(\hat{m}^*,\hat{r}^*) &\leq    \sysLhat(\hat{m}^*,\hat{r}^*)  \frac{(K_m + K_r) d \sqrt{2 \log d} + 10 \sqrt{\log(2/\delta)}}{\sqrt{n \bP(H(Z) \neq Y)}}    
\end{align*}
}
\begin{proof}
We first start by recalling Theorem 2 in \cite{mozannar2020consistent}:

\begin{align}
    \sysL(\hat{m}^*,\hat{r}^*) &\leq  \sysLhat(\hat{m}^*,\hat{r}^*) + \mathfrak{R}_n(\mathcal{M}) +  \mathfrak{R}_{n}(\mathcal{R})  + \mathfrak{R}_{n \bP(H(Z) \neq Y)/2}(\mathcal{R})  \nonumber \\
    & + 2\sqrt{\frac{\log{(\frac{2}{\delta})}}{2n}} +\frac{\bP(H(Z)\neq Y)}{2}  \exp\left(- \frac{n \bP(H(Z) \neq Y)}{8} \right)  \label{eq:gen_bound_mozannar2020}
\end{align}

Note that here we avoid going through the optimal solution and just relate distribution performance to the training performance.

In the bound \eqref{eq:gen_bound_mozannar2020}, $ \mathfrak{R}_n(\mathcal{M})$ and $ \mathfrak{R}_n(\mathcal{R})$ denote the Rademacher complexity of a halfspace in $d$ dimensions where the infinity norm of each element in the halfspace is constrained by $K_m$ and $K_r$ respectively. Let us now compute this Rademacher complexity, inspired by \cite{rademacherlinear}:

\begin{align*}
 \mathfrak{R}_n(\mathcal{M}) &= \frac{1}{n} \E \left[ \sup_{M: ||M||_\infty \leq K_m} \sum_{i=1}^n \epsilon_i M^\top x_i \right]\\
 &\leq  \frac{1}{n} \E \left[ \sup_{M: ||M||_1 \leq d K_m} M^\top\sum_{i=1}^n  \epsilon_i x_i \right] \quad \textrm{(since  $||M||_1 \leq d ||M||_\infty$)}\\  
 &= \frac{d K_m}{n} \E \left[   \sum_{i=1}^n  ||\epsilon_i x_i ||_{\infty} \right]  \\
  &= \frac{d K_m}{n} \E \left[   \sup_j \sum_{i=1}^n  \epsilon_i [x_i]_j \right]  \\
  & \leq \frac{d K_m \sqrt{2 \log d}}{n}    \sup_j  \sqrt{\sum_{i=1}^n  [x_i]^2_j} \quad \textrm{(Massart's finite lemma on $x_{ij}$)}   \\
  & \leq \frac{d K_m \sqrt{2 \log d}}{\sqrt{n}} \quad \textrm{(assume $||x_i||_1 \leq 1$ for all $i$ )}
\end{align*}

Let us use the Rademacher complexity calculation in the bound to get:
\begin{align*}
   \sysL(\hat{m}^*,\hat{r}^*) &\leq  \sysLhat(\hat{m}^*,\hat{r}^*) +  \frac{d K_m \sqrt{2 \log d}}{\sqrt{n}} +   \frac{d K_r \sqrt{2 \log d}}{\sqrt{n}} +  \frac{d K_m \sqrt{2 \log d}}{\sqrt{n \bP(H(Z) \neq Y)}} \nonumber \\
    & + 2\sqrt{\frac{\log{(\frac{2}{\delta})}}{2n}} +\frac{\bP(H(Z)\neq Y)}{2}  \exp\left(- \frac{n \bP(H(Z) \neq Y)}{8} \right)  
\end{align*}
note that $\frac{\bP(H(Z)\neq Y)}{2}  \exp\left(- \frac{n \bP(H(Z) \neq Y)}{8} \right)$ is a term that does not depend on the optimization and shrinks much faster than $\frac{8}{\sqrt{n \bP(H(Z) \neq Y)}}$, so that we can summarize things as:

\begin{align}
     \sysL(\hat{m}^*,\hat{r}^*) &\leq \sysLhat(\hat{m}^*,\hat{r}^*) +  \frac{(K_m + K_r) d \sqrt{2 \log d} + 10 \sqrt{\log(2/\delta)}}{\sqrt{n \bP(H(Z) \neq Y)}}    
\end{align}

\end{proof}

\subsection{Section \ref{sec:new_method} (\realizablesurrogate)} \label{apx:proffs_sec_realiz}

\textbf{Theorem 2.}\textit{ \noindent The \emph{\realizablesurrogatenosp} $L_{RS}$ is a  realizable $(\mathcal{M},\mathcal{R})$-consistent  surrogate for $L_{\mathrm{def}}^{0{-}1}$ for model classes closed under scaling, and satisfies $L_{\mathrm{def}}^{0{-}1}(m,r) \le L_{RS}(m,r)$ for all $(m,r)$.
}
\begin{proof}
Let us recall the \realizablesurrogate loss pointwise:
\begin{equation}
 \label{eq:proposed_RS_loss}
 L_{RS}(\mathbf{g},x,y,h) = -2 \log\left(\frac{\exp(g_{y}(x)) + \bI_{h = y} \exp(g_{\bot}(x)) }{\sum_{y' \in \mathcal{Y} \cup  \bot}\exp(g_{y'}(x))} \right) 
\end{equation}
where $\mathbf{g}=\{g_{i}\}_{ i \in  \mathcal{Y} \cup \bot} $.
Recall that the classifier and rejector are defined as: $m(x) = \arg \max_{y \in \mathcal{Y}}g_y(x)$ and  $r(x)= \bI_{\max_{y \in \mathcal{Y}}g_y(x) \leq g_\bot(x) }$.

We first prove that for every point, the \realizablesurrogate loss upper bounds the system 0-1 error: $L_{\mathrm{def}}^{0{-}1}(m,r,x,y,h) \le  L_{RS}(\mathbf{g},x,y,h)$:

\begin{enumerate}
    \item \textbf{Case 1:} consider $r(x)=0$ (classifier predicts):
    \begin{enumerate}
        \item \textbf{Case 1a:} if the classifier is incorrect, $\bI_{m(x) \neq y}=1$:
        \begin{enumerate}
            \item \textbf{Case 1ai:} If the human is incorrect, $\bI_{h = y}=0$:
            
            then the loss is:$
 -2 \log\left(\frac{\exp(g_{y}(x))  }{\sum_{y' \in \mathcal{Y} \cup  \bot}\exp(g_{y'}(x))} \right) $, we know since the classifier is incorrect, then it must be that $\frac{\exp(g_{y}(x))  }{\sum_{y' \in \mathcal{Y} \cup  \bot}\exp(g_{y'}(x))} \leq 0.5$ (since $g_{y}$ is not the max), thus the loss  is greater than $2$ (log is base 2), and the 0-1 loss is 1 in this case.
 \item \textbf{Case 1aii}: if the human is correct then $\bI_{h=y}=1$:
 
  then the loss is:$
 -2 \log\left(\frac{\exp(g_{y}(x)) + \exp(g_{\bot}(x))  }{\sum_{y' \in \mathcal{Y} \cup  \bot}\exp(g_{y'}(x))} \right) $, we know since the classifier is incorrect, then it must be that $\frac{\exp(g_{y}(x))  }{\sum_{y' \in \mathcal{Y} \cup  \bot}\exp(g_{y'}(x))} + \frac{\exp(g_{\bot}(x))  }{\sum_{y' \in \mathcal{Y} \cup  \bot}\exp(g_{y'}(x))}  < 2/3$ since $g_{y}$ is not the max neither is $g_{\bot}$, otherwise if the sum of these two fractions is greater than 2/3, then $\max_i \frac{\exp(g_{i}(x))  }{\sum_{y' \in \mathcal{Y} \cup  \bot}\exp(g_{y'}(x))} < 1/3 $ then the maximum must be one of $y$ or $\bot$ which is a contradiction. Finally, the loss is greater then $-2\log(2/3) = 1.17$ which is greater than $1$.
 
        \end{enumerate}
        \item \textbf{Case 1b:} if the classifier is correct $\bI_{m(x)=y}=1$, then the 0-1 error is 0, since the \realizablesurrogate loss is $\geq 0$ then it is an upper bound.
    \end{enumerate}
    \item \textbf{Case 2:} consider $r(x)=1$ (human predicts):
    \begin{enumerate}
        \item \textbf{Case 2a:} if the human is correct then $\bI_{h=y}=1$:
        
        then the 0-1 error is 0, since the \realizablesurrogate loss is $\geq 0$ then it is an upper bound.
        
        \item if the human is incorrect then $\bI_{h=y}=0$:
        
   the loss is      $
 -2 \log\left(\frac{\exp(g_{y}(x))  }{\sum_{y' \in \mathcal{Y} \cup  \bot}\exp(g_{y'}(x))} \right) $, we know since we defer, then it must be that $\frac{\exp(g_{y}(x))  }{\sum_{y' \in \mathcal{Y} \cup  \bot}\exp(g_{y'}(x))} \leq 0.5$ (since $g_{y}$ is not the max), thus the loss  is greater than $2$ (log is base 2), and the 0-1 loss is 1 in this case. 
    \end{enumerate}
\end{enumerate}
this concludes the proof of the upper bound.

We now prove that $L_{RS}$ is a realizable-consistent loss function. 

Consider a data distribution and a human under which there exists  $m^*,r^* \in \mathcal{M} \times \mathcal{R}$ that have zero error $L_{\mathrm{def}}^{0{-}1}(m^*,r^*)=0$. Associated with $m^*,r^*$, is a set of functions $\mathbf{g}^* \in \mathcal{G}$ that give rise to $m^*,r^*$. Let $\hat{\mathbf{g}}$ be the minimizer of the surrogate loss $L_{RS}$ and the associated classifier and rejector be $\hat{m},\hat{r}$. 

We now upper bound the 0-1 loss of the pair $\hat{m},\hat{r}$. Let $u \in \mathbb{R}$ be any real number:

\begin{align}
\nonumber
&L_{\mathrm{def}}^{0{-}1}(\hat{m},\hat{r}) \\&\leq \nonumber  L_{RS}(\hat{m},\hat{r}) \quad \textrm{ (loss is upper bound)}\\ \nonumber
&\leq L_{RS}(um^*,ur^*)  \quad \textrm{(since $\hat{m}, \hat{r}$ is optimal for $L_{RS}$ and $\cM \times \cR$ is closed under scaling)}\\ 
&= \bE[ L_{RS}(um^*,ur^*,x,y,h)|r^*=1] \bP(r^*=1) +   \bE[ L_{RS}(um^*,ur^*,x,y,h)|r^*=0] \bP(r^*=0) \label{eq:decompisition}
\end{align} 
Let us investigate the two terms in equation \eqref{eq:decompisition}.

The first term is when $r^*=1$, then we must have $g_{\bot}^* > \max_{y} g_{y}^*$ and $\bI_{h=y}=1$ since the data is realizable and when we defer the human must be correct. Examining the first term and taking the limit:

\begin{align*}
    & \lim_{u \to \infty} \bE[ L_{RS}(um^*,ur^*,x,y,h)|r^*=1] \bP(r^*=1) \\
    &= \lim_{u \to \infty} \bE[  -2 \log\left(\frac{\exp(ug_{y}^*(x)) + \bI_{h = y} \exp(ug_{\bot}^*(x)) }{\sum_{y' \in \mathcal{Y} \cup  \bot}\exp(ug_{y'}^*(x))} \right) |r^*=1] \bP(r^*=1) \\
    &= \lim_{u \to \infty} \bE[  -2 \log\left(\frac{\exp(ug_{y}^*(x)) +  \exp(ug_{\bot}^*(x)) }{\sum_{y' \in \mathcal{Y} \cup  \bot}\exp(ug_{y'}^*(x))} \right) |r^*=1] \bP(r^*=1) \\
    &= \bE[  -2 \log\left(1 \right) |r^*=1] \bP(r^*=1) = 0 \quad  \textrm{(applying monotone convergence theorem)}
\end{align*}

The second term is when $r^*=0$, then we must have $g_{y}^* > \max_{y'\in(\mathcal{Y}\backslash y)\cup\bot} g_{y'}^*$  since the data is realizable. Examining the second term and taking the limit:

\begin{align*}
    & \lim_{u \to \infty} \bE[ L_{RS}(um^*,ur^*,x,y,h)|r^*=0] \bP(r^*=0) \\
    &= \lim_{u \to \infty} \bE[  -2 \log\left(\frac{\exp(ug_{y}^*(x)) + \bI_{h = y} \exp(ug_{\bot}^*(x)) }{\sum_{y' \in \mathcal{Y} \cup  \bot}\exp(ug_{y'}^*(x))} \right) |r^*=0] \bP(r^*=0) \\
    &= \bE[  -2 \log\left(1 \right) |r^*=0] \bP(r^*=0) = 0 \quad  \textrm{(applying monotone convergence theorem)}
\end{align*}
Thus combining the above two derivations, we obtain:
\begin{align}
\nonumber
&L_{\mathrm{def}}^{0{-}1}(\hat{m},\hat{r}) \leq 0. 
\end{align} 
We just proved that the optimal solution from minimizing \realizablesurrogate leads to a zero error solution in terms of system error which proves that the loss is realizable ($\cM,\cR)$-consistent.

\end{proof}

\begin{theorem} \label{apx:the_realiza_not}
    The CrossEntropySurrogate $L_{CE}$ \citep{mozannar2020consistent} is not a realizable $(\mathcal{M},\mathcal{R})$-consistent  surrogate for $L_{\mathrm{def}}^{0{-}1}$.
\end{theorem}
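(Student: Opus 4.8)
The plan is to \emph{disprove} realizable $(\cM,\cR)$-consistency by exhibiting a single counterexample: a finite data distribution $\mathbf{P}$ and expert $h$ for which (a) there is a linear classifier/rejector pair in $\cM\times\cR$ with zero system error (so the realizability hypothesis of the definition holds), yet (b) every minimizer of $L_{CE}$ over $\cM\times\cR$ induces an $(m,r)$ pair with strictly positive $\sysL$. Recall the CrossEntropySurrogate is, pointwise, $L_{CE}(\mathbf{g},x,y,h) = -\log\!\big(\exp(g_y(x))/\!\sum_{y'\in\cY\cup\bot}\exp(g_{y'}(x))\big) - \bI_{h=y}\,\log\!\big(\exp(g_\bot(x))/\!\sum_{y'\in\cY\cup\bot}\exp(g_{y'}(x))\big)$. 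The crucial structural difference from $L_{RS}$ is that, as noted in the text, the first term penalizes the learner for \emph{not fitting the target} $g_y$ even at points where the human is correct and deferral is optimal; whereas $L_{RS}$ merges $\exp(g_y)$ and $\bI_{h=y}\exp(g_\bot)$ in one numerator, giving a free choice between fitting and deferring. The counterexample must exploit exactly this extra penalty.

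First I would construct a small binary-label instance ($\cY=\{0,1\}$, so scores $g_0,g_1,g_\bot$) with a \textbf{predict region} and a \textbf{defer region}. Place ``predict'' points where the human is \emph{wrong}, so the optimal solution must not defer there and the classifier must output the target; place a ``defer'' point where the human is \emph{correct} but whose target label, if the classifier were forced to fit it, would require a decision boundary incompatible (within the linear class) with correctly labeling the predict points. Concretely, I would arrange the points so that a linear classifier can label all predict points correctly while the rejector $\bI_{g_\bot(x)\ge\max(g_0(x),g_1(x))}$ (an intersection of two halfspaces) defers on exactly the defer point: this yields an explicit $(m^*,r^*)\in\cM\times\cR$ with $\sysL(m^*,r^*)=0$, which establishes (a). The design constraint is that fitting the defer point's target and the predict points' targets cannot be done simultaneously by one linear classifier, so the zero-error solution is \emph{wrong} on the deferred point (harmlessly, since the human is right there).

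Next I would analyze the $L_{CE}$ minimizer. Because at least one point has $h=y$, the loss is bounded below by a positive constant and, with the classifier labels non-separable, is \emph{coercive} in $\mathbf{g}$, so its infimum over the scaling-closed linear class is attained at a finite $\hat{\mathbf{g}}$ (I would verify coercivity/attainment, so that $\arginf L_{CE}$ is a genuine nonempty set of minimizers rather than an empty set that makes the implication vacuous). Using convexity of $L_{CE}$ in $\mathbf{g}$, I would characterize $\hat{\mathbf{g}}$ via its stationarity conditions and show that the label-fitting term forces the classifier scores toward matching the defer point's target; by the incompatibility built into the construction this drags the linear decision boundary so that either a predict point is misclassified (and the human is wrong there, so the system errs), or $g_\bot$ fails to dominate at the defer point (the rejector stops deferring, and the classifier is wrong there). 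In either case $\sysL(\hat m,\hat r)$ is bounded below by a positive constant, contradicting realizable consistency and proving the theorem.

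The main obstacle is the middle step: rigorously pinning down the induced pair $(\hat m,\hat r)$ at the $L_{CE}$ minimizer and certifying $\sysL(\hat m,\hat r)>0$. Unlike the proof of Theorem~\ref{thm:realizable-consistent}, where the $L_{RS}$ numerator let the scaled realizable solution drive the loss to zero, here I cannot appeal to a clean scaling limit, so I must control the finite minimizer. I expect this to require either an explicit low-dimensional instance (e.g.\ a handful of collinear or planar points) where the symmetric/convex structure lets me solve the first-order conditions in closed form or bound them by hand, or a perturbation argument showing the realizable configuration is \emph{not} stationary for $L_{CE}$ (its gradient has a descent direction that necessarily breaks correct deferral). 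Verifying that this descent strictly increases the $0$--$1$ error, and that no other minimizer recovers zero error, is the delicate part of the argument.
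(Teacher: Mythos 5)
You have the right target and the right structural insight---$L_{CE}$'s label-fitting term penalizes not fitting $y$ even where deferral is optimal, and the way to refute realizable $(\cM,\cR)$-consistency is a counterexample in which a zero-error pair exists but every surrogate minimizer errs. However, your proposal stops exactly where the proof actually lives: you never exhibit the instance, and you never carry out the analysis of the $L_{CE}$ minimizer. The entire burden of the theorem is in that "middle step" you defer to "either an explicit low-dimensional instance \ldots or a perturbation argument," and you yourself flag it as unresolved. As written, there is no construction one could check and no inequality one could verify, so this is a plan for a proof rather than a proof. A second, subtler issue: you correctly note that an empty $\arginf$ would make the consistency implication vacuous, so you must establish attainment \emph{and} that all minimizers have positive error; over a linear class with a convex loss this requires genuinely controlling the stationary point, which is the hard analytic work you have not done.

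It is worth seeing how the paper sidesteps precisely this difficulty: it does not use halfspaces at all. The counterexample uses four regions $R_0,\dots,R_3$ (masses $1/4+\alpha,\ 1/4,\ 1/4-\alpha,\ 1/4$, labels $0,1,0,2$, human correct only on $R_0$) and a scaling-closed class of scaled region indicators $f_i(x)=c\,\bI\{x\in R_i\}$ indexed by a single scalar $c$ and four discrete assignments $i_0,i_1,i_2,i_\bot$. Realizability holds (defer on $R_0$, classify the rest), and because each configuration is a one-parameter family, comparing the realizable configuration $f^*$ against the deviation $\hat f$ that classifies $R_0$ instead of deferring reduces to a closed-form computation in $c$: with $\alpha=1/8$ the difference $L_{CE}(f^*)-L_{CE}(\hat f)$ vanishes at $c=0$ and has strictly positive derivative, so $\hat f$ strictly dominates for all $c>0$, and no zero-error solution minimizes $L_{CE}$. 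Your insistence on linear halfspaces is what creates the obstacle you could not close; since the theorem only requires \emph{some} scaling-closed class witnessing the failure, choosing a class where the comparison is one-dimensional is the key idea your proposal is missing. (Completing your linear version would in fact yield a nice strengthening, directly relevant to the paper's LWD-H story, but it remains an open piece of work in your write-up.)
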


\begin{proof}
    To prove that the surrogate $L_{CE}$ is not realizable-consistent, we will construct an example with a data distribution and a model class closed under scaling such that: 1) there exists a zero error solution in the model class and 2) the minimizer of $L_{CE}$ has non-zero error.

Consider the data distribution illustrated and described in Figure 
\ref{fig:realizable_lce_proof} consisting  of four regions R0,R1,R2 and R3. Each region respectively has mass $1/4+
\alpha, 1/4, 1/4-\alpha,1/4$ . Each region respectively has label $Y=0,Y=1,Y=0,Y=2$. The Human is perfectly accurate on Region 0 and inaccurate on every other region.

\begin{figure}[H]
    \centering
    \includegraphics[scale=1]{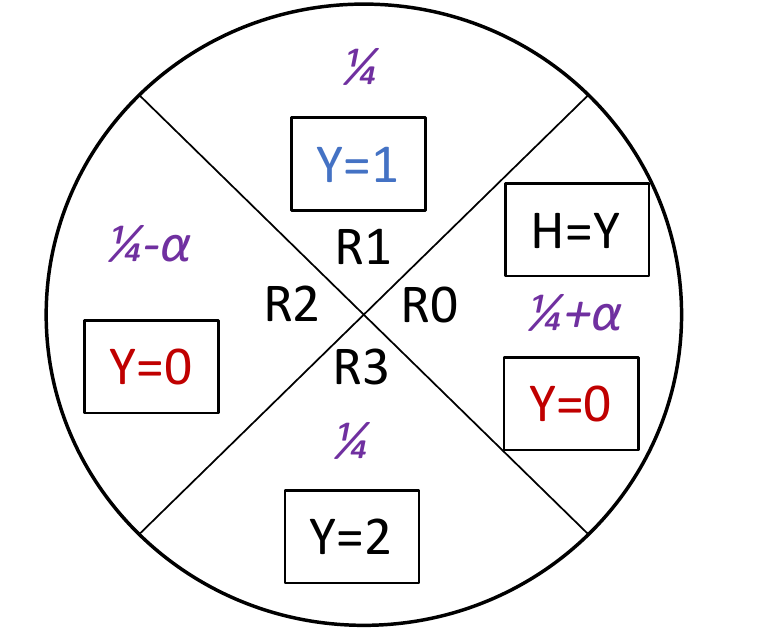}
    \caption{Data Distribution for our example: the data consists of four regions R0,R1,R2 and R3. Each region respectively has mass $1/4+
\alpha, 1/4, 1/4-\alpha,1/4$ . Each region respectively has label $Y=0,Y=1,Y=0,Y=2$. The Human is only accurate on Region 0. }
    \label{fig:realizable_lce_proof}
\end{figure}

We consider a hypothesis class $\mathcal{F}$ parameterized by a scalar $c \in \mathbb{R}$ and four indices each in $i_0,i_1,i_2,i_{\bot} \in \{0,1,2,3\}$. Let $f_i(x) = c \bI\{ x \in R_i\}$, a function $f \in  \mathcal{F}$ defines a rejector and classifier as:  $m(x) = \arg\max \{c 
\cdot f_{i0}(x), c \cdot f_{i1}(x),c \cdot f_{i2}(x)\}$ (ties are decided uniformly randomly) and $r(x) = \bI\{c \cdot f_{\bot}(x) > \max \{c  \cdot f_{i0}(x), c \cdot f_{i1}(x),c \cdot f_{i2}(x)  \}$.  This hypothesis class is closed under scaling. 

The error minimizing function $f^*$ in this hypothesis class is obtained by setting $c>0$, $i_0=2,i_1=1,i_2=3,i_{\bot}=0$ which obtains zero 0-1 error. No solution with $c<0$ is optimal, since the maximum will always coincide with at least two labels and we break ties in a consistent fashion.  
This data distribution and hypothesis class is realizable.

\paragraph{Surrogate solution.} We will argue that one can obtain a lower $L_{CE}$ loss by deviating from the optimal solution $f^*$. 
The intuition for why this is the case is that the $L_{CE}$ penalizes misclassifying points even when they are deferred. 
Hence, when $\alpha$ is sufficiently large, $L_{CE}$ will try to classify the more probable region R0 as label 0 instead of simply deferring on this region and classifying region R2 as label 0.

Consider the function $\hat{f}$ defined with arbitrary $c>0$ and $i_0=0,i_1=1,i_2=3,i_{\bot} =0$---note that this function disagrees with the optimal solution on $i_0$ only. Fixing $c$, we will compute the difference of $L_{CE}$ loss between $\hat{f}$ and $f^*$ with the same $c$, this defines only a deviation in terms of $i_0$. We will compute the difference in each region separately.

\textbf{Region 1 and Region 3:} On both region 1 and region 3, the difference will be shown to be zero. In both regions, the human is incorrect and note that $i_1$ and $i_2$ are identical in both solutions. The loss of $\hat{f}$ in region 1 is:
\begin{equation*}
- \frac{1}{4} \log \left( \frac{e^c}{3+e^c} \right)
\end{equation*}
this is the same as the loss of $f^*$, by symmetry the loss is the same in region 3.

We will now compute the sum of the difference in region 2 and region 0:

\textbf{Region 2:}   In this region the human is also incorrect, the difference in the loss of $\hat{f}$ and $f^*$ is:

\begin{align*}
\bE_{x \in R2}[ L_{CE}(f^*) -L_{CE}(\hat{f}) ]= (\frac{1}{4} - \alpha) \cdot \left(\log \left( \frac{1}{4} \right)  - \log \left( \frac{e^c}{3+e^c} \right)   \right) \in [-(\frac{1}{4} - \alpha) \log(4), 0]
\end{align*}

\textbf{Region 0:} In this region the human is correct, the difference is :
\begin{align*}
 &\bE_{x \in R0} [L_{CE}(f^*) -L_{CE}(\hat{f})] \\&  = (\frac{1}{4} + \alpha) \cdot \left(- \log \left( \frac{1}{3+e^c} \right)  - \log \left( \frac{e^c}{3+e^c} \right) + \log \left( \frac{e^c}{2+2e^c} \right) + \log \left( \frac{e^c}{2+2e^c} \right)    \right) 
\end{align*}

To compute the difference in the loss between $\hat{f}$ and $f^*$, we sum the difference in Region 2 and Region 0:

\begin{align*}
    &L_{CE}(f^*) -L_{CE}(\hat{f})  \\
    &= \frac{1}{4} \left( \log \left( \frac{1}{4} \right)  - \log \left( \frac{e^c}{3+e^c} \right)  - \log \left( \frac{1}{3+e^c} \right)  - \log \left( \frac{e^c}{3+e^c} \right) + 2\log \left( \frac{e^c}{2+2e^c} \right)       \right)  \\
    &+ \alpha \left( - \log \left( \frac{1}{3+e^c} \right)  + 2\log \left( \frac{e^c}{2+2e^c} \right) - \log \left( \frac{1}{4} \right)  \right)  \\
    &= -(\frac{1}{4} + \alpha) \left( \log \left( \frac{1}{3+e^c} \right)  - 2\log \left( \frac{e^c}{2+2e^c} \right)    \right) - \frac{1}{2} \log \left( \frac{e^c}{3+e^c} \right) + (\frac{1}{4} - \alpha) \log \left( \frac{1}{4} \right)
\end{align*}

We can simplify this difference to further become:
\begin{equation*}
   \frac{1}{4}  \left(8 \alpha c - 2 \log(4) -2 (1+4\alpha) \log(1+e^c) + (3+4 \alpha) \log(3+e^c) \right)
\end{equation*}

Note that when $c=0$, the above difference is 0. Let us set $\alpha = 0.125$ for concreteness (other values of $\alpha$ also work, in particular larger values, but not all smaller values). We compute the derivative of the difference with respect to $c$, obtaining:

\begin{align*}
\frac{d}{dc} (L_{CE}(f^*) -L_{CE}(\hat{f})) &= \frac{1}{4} \left( \frac{3.5 e^c}{e^c + 3} - \frac{3 e^c}{e^c + 1} + 1 \right) \\ 
&= \frac{0.375 (2 - e^c + e^{2c}))}{(1 + e^c) (3 + e^c)} > 0
\end{align*}
We just showed that the difference has derivative strictly larger than 0 with respect to $c$, moreover the difference
 is $0$ when $c=0$, thus when $c>0$ the difference is strictly bigger than 0. 

 We just proved that with respect to the surrogate loss $L_{CE}$, the optimal solution with respect to $L_{\mathrm{def}}^{0{-}1}$ is not optimal, thus  the surrogate is not a realizable $(\mathcal{M},\mathcal{R})$-consistent  surrogate for $L_{\mathrm{def}}^{0{-}1}$

\end{proof}

\end{document}